\let\hat\widehat
\newtheorem{lemma}{Lemma}
\newtheorem{theorem}{Theorem}
\newtheorem{corollary}{Corollary}
\newtheorem{remark}{Remark}
\newtheorem{assumption}{Assumption}
\DeclareMathOperator*{\minimize}{minimize}
\DeclareMathOperator*{\maximize}{maximize}
\DeclareMathOperator*{\subject}{subject~to}
\newcommand{\DefinedAs}[0]{\mathrel{\mathop:}=}
\DeclareMathOperator*{\argmin}{argmin}
\DeclareMathOperator*{\argmax}{argmax}
\newcommand{\norm}[1]{\left\|{#1}\right\|}
\DeclareFontFamily{OMX}{MnSymbolE}{}
\DeclareFontShape{OMX}{MnSymbolE}{m}{n}{
    <-6>  MnSymbolE5
   <6-7>  MnSymbolE6
   <7-8>  MnSymbolE7
   <8-9>  MnSymbolE8
   <9-10> MnSymbolE9
  <10-12> MnSymbolE10
  <12->   MnSymbolE12}{}
\DeclareSymbolFont{mnlargesymbols}{OMX}{MnSymbolE}{m}{n}
\DeclareMathDelimiter{\llangle}{\mathopen}{mnlargesymbols}{'164}{mnlargesymbols}{'164}
\DeclareMathDelimiter{\rrangle}{\mathclose}{mnlargesymbols}{'171}{mnlargesymbols}{'171}
\newcommand{\savehyperref}[2]{\texorpdfstring{\hyperref[#1]{#2}}{#2}}
\definecolor{mycyan}{RGB}{0,204,204}
\newcommand{\tabfigure}[2]{\raisebox{-.09\height}{\includegraphics[#1]{#2}}}
\definecolor{myorange}{RGB}{252, 105, 16}
\definecolor{myblue}{RGB}{32, 101, 164}
\newcommand\crule[3][black]{\textcolor{#1}{\rule{#2}{#3}}}
\begin{document}

\title{Constrained Diffusion Models via Dual Training}

\author{%
  Shervin~Khalafi 
  \;\;\;\;
  \;\;\;\;
  \;\;\;\;
  Dongsheng~Ding\thanks{Corresponding author.}
  \;\;\;\;
  \;\;\;\;
  \;\;\;\;
  Alejandro~Ribeiro\\[0.05cm]
  \texttt{\{shervink,dongshed,aribeiro\}@seas.upenn.edu}
  \\[0.15cm]
  University of Pennsylvania
}

\maketitle

\begin{abstract}
     Diffusion models have attained prominence for their ability to synthesize a probability distribution for a given dataset via a diffusion process,  enabling the generation of new data points with high fidelity. However, diffusion processes are prone to generating samples that reflect biases in a training dataset. To address this issue, we develop constrained diffusion models by imposing diffusion constraints based on desired distributions that are informed by requirements. Specifically, we cast the training of diffusion models under requirements as a constrained distribution optimization problem that aims to reduce the distribution difference between original and generated data while obeying constraints on the distribution of generated data. We show that our constrained diffusion models generate new data from a mixture data distribution that achieves the optimal trade-off among objective and constraints. To train constrained diffusion models, we develop a dual training algorithm and characterize the optimality of the trained constrained diffusion model. We empirically demonstrate the effectiveness of our constrained models in two constrained generation tasks: (i) we consider a dataset with one or more underrepresented classes where we train the model with constraints to ensure fairly sampling from all classes during inference; (ii) we fine-tune a pre-trained diffusion model to sample from a new dataset while avoiding overfitting.

    
\end{abstract}

\section{Introduction}\label{sec: introduction}

Diffusion models have become a driving force of modern generative modeling, achieving ground-breaking performance in tasks ranging from image/video/audio generation \cite{saharia2022photorealistic,blattmann2023align,kong2020diffwave} 
to molecular design for drug discovery \cite{weiss2023guided,watson2023novo}. Diffusion models learn diffusion processes that produce a probability distribution for a given dataset (e.g., images) from which we can generate new data (e.g., classic models \cite{sohl2015deep,ho2020denoising,song2021denoising,yang2023survey}). As diffusion models are used to generate data with societal impacts, e.g., art generation and content creation for media, they must comply with requirements from specific domains, e.g., social fairness in image generation~\cite{luccioni2023stable,naik2023social}, aesthetic properties of images~\cite{chen2024integrating,chen2024towards}, bioactivity in molecule generation~\cite{huang2023dual}, and more  \cite{kazerouni2023diffusion,zhang2023audio,cao2024survey,croitoru2023diffusion,yang2023diffusion,chen2024overview}.


Classic diffusion models \cite{sohl2015deep,ho2020denoising,song2021denoising} have been extended to generate data under different requirements through either first-principle methods or fine-tuning. In image generation with fairness, for instance, first principle methods directly mitigate biases towards social/ethical identities by revising the training loss functions per biased/unbiased data~\cite{liu2022compositional,kim2024training}; while fine-tuning methods align biased diffusion models with desired data distributions by optimizing the associated metrics~\cite{fan2024reinforcement,uehara2024bridging,uehara2024fine,tang2024fine}. Although these methods allow us to equip diffusion models with specific requirements, they are often designed for particular generation tasks and do not provide transparency on how these requirements are satisfied. Since diffusion models are trained by minimizing the loss functions of diffusion processes, it is natural to incorporate requirements into diffusion models by imposing constraints on these optimization problems. Therefore, it is imperative to develop diffusion models under constraints by generalizing constrained learning methods and theory (e.g.,~\cite{chamon2020probably,chamon2022constrained,elenter2024near,hounie2024resilient}) for diffusion models.  

In this work, we formulate the training of diffusion models under requirements as a constrained distribution optimization problem in which the objective function measures a training loss induced by the original data distribution, and the constraints require other training losses induced by some desirable data distributions to be small. This constrained formulation can be instantiated for several critical requirements. For instance, to promote fairness for unrepresented groups, the constraints can encode the closeness of the model to the distributions of underrepresented data. Compared with the typical importance re-weighting method~\cite{kim2024training}, our constrained formulation provides an optimal trade-off between matching given data distribution and following reference distribution. Not limited to constraints with other desirable data distributions, our constrained formulation captures more general requirements. For instance, when adapting a pretrained diffusion model to new data, the constraints require the model to be close to the pretrained model, not degrading the original generation ability.
Specifically, our main contribution is three-fold.
\begin{itemize}
	\item[(i)] We propose and analyze constrained diffusion models from the perspective of constrained optimization in an infinite-dimensional distribution space, where the constraints require KL divergences between the model and our desired data distributions to be under some thresholds. We exploit the strong duality in convex optimization to show that our constrained diffusion models generate new data from a mixture data distribution that achieves the optimal trade-off among objective and constraints. 
	\item[(ii)] To train constrained diffusion models, we introduce parametrized constrained diffusion models and develop a Lagrangian-based dual training algorithm. We exploit the relation between un/parametrized problems to show that constrained diffusion models generate new data from the optimal mixture distribution, up to some optimization/parametrization errors.
	\item[(iii)] We empirically demonstrate the merit of our constrained diffusion models in two aforementioned requirements. In the fair generation task, we show that our constrained model promotes sampling more from the minority classes compared to unconstrained models, leading to fair sampling across all classes. In the adaptation task, we show that our fine-tuned constrained model learns to generate new data without significantly degrading the original generation ability, compared to the unconstrained model which tends to overfit to new data. 
\end{itemize}

\noindent\textbf{Related work.} 
As a first-principle method, our constrained diffusion approach is more relevant to diffusion models that incorporate requirements in distribution space~\cite{dhariwal2021diffusion,ho2021classifier,liu2022compositional,du2023reduce,power2023sampling,friedrich2023fair}, rather than those applied in sample space~\cite{huang2022riemannian,lou2023reflected,giannone2023aligning,fishman2024metropolis,fishman2024diffusion,christopher2024projected,liu2024mirror,feng2024neural}. In comparison with conditional diffusion models that restrict generation through conditional information~\cite{dhariwal2021diffusion,ho2021classifier,bansal2023universal}, our constrained diffusion models impose distribution constraints within the constrained optimization framework. Compared to compositional generation~\cite{liu2022compositional,du2023reduce,power2023sampling} and fair diffusion~\cite{friedrich2023fair}, our work provides a constrained learning approach to balance different distribution models using Lagrangian multipliers, which is different from equal weights~\cite{liu2022compositional,du2023reduce}, hyperparameter~\cite{power2023sampling} or fair
guidance~\cite{friedrich2023fair}. Our constrained approach is also relevant to the importance re-weighting method for diffusion models~\cite{kim2024training} and GANs~\cite{choi2020fair}, which reduces bias in a dataset by re-weighting it with a pre-trained debiased density ratio. In contrast, we design our diffusion models to mitigate the bias in a dataset by imposing distribution constraints without pre-training. In addition to being distinct from existing methods, we provide a systematic study of our constrained diffusion models, covering duality analysis, dual-based algorithm design, and convergence analysis, which is absent in the diffusion model literature. 

Our work is also pertinent to recently surging fine-tuning and alignment methods that aim to improve pre-trained diffusion models by optimizing their downstream performance, e.g., aesthetic scores of images. In reward-guided fine-tuning methods, such as supervised learning~\cite{lee2023aligning,yuan2023reward,wu2023human}, control-based feedback learning~\cite{xu2023imagereward,shen2023finetuning,uehara2024fine,tang2024fine,clark2024directly}, and reinforcement learning~\cite{fan2023optimizing,hao2023optimizing,fan2024reinforcement,uehara2024bridging,black2024training,zhang2024confronting}, reward functions have to be pre-trained from an authentic evaluation dataset, and the trade-off for reward functions in pre-trained diffusion models is often 
regulated heuristically. In contrast, our constrained approach directly minimizes the gap between a fine-tuning model and a high-quality dataset with the desired properties, while ensuring the generated outputs being close to that of pre-trained models. 

Compared to other generative models under requirements (e.g., VAEs~\cite{rezende2018generalized}, GANs~\cite{choi2020fair}), and classical sampling methods (e.g., Langevin dynamics and Stein variational gradient descent~\cite{liu2021sampling}), our work is different because we focus on diffusion-based generative models.



\section{Preliminaries}\label{sec:preliminaries}

We overview diffusion models from the perspective of variational inference~\cite{luo2022understanding} by presenting forward/backward processes in~Section~\ref{subsec:forward backward}, and the evidence lower bound in Section~\ref{subsec:elbo}.

\subsection{Forward and backward processes}\label{subsec:forward backward}

The forward process begins with a true data sample $x_0 \in \mathbb{R}^d$, and generates latent variables $\{x_t\}_{t\,=\,1}^T$ from $t=1$ to $T$ by adding white noise recursively. The joint distribution of latent variables results from conditional probabilities 
$
q(x_{1:T}\,\vert\,x_0)  = 
\prod_{t\,=\,1}^T q(x_t\,\vert\,x_{t-1})
$,
where the distribution of latent variable $x_t$ conditioned on the previous latent $x_{t-1}$ is given by a Gaussian distribution $q(x_t\,\vert\,x_{t-1}) 
\; =  \;
\mathcal{N}(x_t; \mu_t, \sigma_q^2(t) I)$, 
where $\mu_t = \sqrt{\alpha_t} \,x_{t-1}$ and $\sigma_q^2(t) = 1-\alpha_t$. Since the forward process is a linear Gaussian model with pre-selected mean and variance, it is solely determined by the data distribution $q(x_0)$. We often refer to $q(x_0)$ as a forward process.     

The backward process begins with the latent $x_T$ sampled from the standard Gaussian $p(x_T) = \mathcal{N}(x_T; 0, I)$, and decodes latent variables from $t=T$ to $t=0$ with a joint distribution
\begin{equation}\label{eq:joint backward}
	p(x_{0:T}) 
	\; = \;
	p(x_T)  \prod_{t\,=\,1}^T p(x_{t-1}\,\vert\,x_t).
\end{equation}
Here, $p$ is our distribution model that can be used to generate new samples.  
We denote by $\mathcal{P}$ the set of all joint distributions over $x_{0:T}$ in form of~\eqref{eq:joint backward}, where $p(x_{t-1}\,\vert\,x_t)$ is a conditional Gaussian with a fixed variance (see Appendix~\ref{app:ELBO}). Throughout the paper, we work in the convergent regimes of the backward process (e.g.,~\cite{chen2023sampling,chen2023improved,benton2023linear}). Without loss of generality, we adopt the convergent regime in~\cite{li2023towards} by taking the scheduling parameter $\alpha_1 = 1-{1}/{T^{c_0}}$ and $\alpha_t = 1 - c_T \min \left( (1 - \alpha_1) (1 + c_T)^t, 1\right)$ for $t>1$, and the variance as $\sigma_p^2(t) = 1/\alpha_t - 1$, where $c_T \DefinedAs c_1 \log (T)/ T$ and $c_0$, $c_1$ are some constants. Hence, $\bar{\alpha}_T \approx 0$ implies $q(x_T) \simeq \mathcal{N}(x_T; 0, I)$. Thus, $q(x_{0:T}) \in \mathcal{P}$. Also, $\bar{\alpha}_1 \approx 1$ implies $q(x_1) \simeq q(x_0)$. It is ready to generalize our results to other diffusion processes (e.g.,~\cite{li2024accelerating,huang2024convergence,benton2024nearly}).  

\subsection{The evidence lower bound (ELBO) }\label{subsec:elbo}

Denote the KL divergence of distribution $q$ from distribution $p$ by $D_{\text{KL}}(q\,\Vert\,p) \DefinedAs \mathbb{E}_{x\sim q(x)} \log (\frac{q(x)}{p(x)})$. 
Generative diffusion modeling aims to generate samples whose distribution is close to that of an observed dataset of samples.
Formally, we express this objective as maximizing the log-likelihood of an observation generated by the diffusion model: $\maximize_{p \,\in\, \mathcal{P}} \mathbb{E}_{q(x_0)} [ \,\log p(x_0)\, ]$, where 
\begin{equation}\label{eq:log-likelihood}
	\mathbb{E}_{q(x_0)} 
	\left[\,
	\log p(x_0) 
	\,\right]
	\; = \;
	E(p;q)
	\,+\,
	\mathbb{E}_{q(x_0)}  
	\left[\,
	D_{\text{KL}}\left(q(x_{1:T}\,\vert\,x_0)\,\Vert\, p(x_{1:T}\,\vert\,x_0)\right) 
	\,\right]
\end{equation}
and $E(p; q) \DefinedAs \mathbb{E}_{q(x_0)} \mathbb{E}_{q(x_{1:T}\,\vert\,x_0)} \log \frac{p(x_{0:T})}{q(x_{1:T}\,\vert\,x_0)}$ is known as ELBO in variational inference~\cite{blei2017variational,luo2022understanding}. Alternatively, we aim to minimize the KL divergence between the forward/backward processes, 
\begin{equation}\label{eq:KL forward/backward}
	D_{\text{KL}} \left(q(x_{0:T})\,\Vert\, p(x_{0:T})\right) 
	\; = \;
	-\, E(p; q)
	\, + \,
	\mathbb{E}_{q(x_0)}
	\left[\,
	\log q(x_0) 
	\,\right].
\end{equation}
Thus, we connect the log-likelihood maximization to the KL divergence minimization via ELBO. 
\begin{lemma}[Equivalent formulations]\label{lem:equivalence}
	The ELBO maximization  and the KL divergence minimization are equivalent over the distribution space $\mathcal{P}$, and the unique solution of these two problems is a solution for the log-likelihood maximization problem,  i.e.,
	\[
	\maximize_{p\,\in\,\mathcal{P}}\; E(p; q)
	\;\; 
	\Leftrightarrow  
	\;\;   \minimize_{p\,\in\, \mathcal{P}}\; D_{\text{KL}} \left(q(x_{0:T})\,\Vert\, p(x_{0:T})\right) 
	\;\;
	\Rightarrow
	\;\;
	\maximize_{p \,\in\, \mathcal{P}}\;
	\mathbb{E}_{q(x_0)} [\,\log p(x_0)\, ] 
	\]

\end{lemma}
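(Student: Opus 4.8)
The plan is to read the equivalence directly off the identity~\eqref{eq:KL forward/backward}, then pin down the common optimizer using the fact (from the convergent-regime construction of Section~\ref{subsec:forward backward}) that the forward joint, re-expressed in backward factorization, lies in $\mathcal{P}$, and finally deduce the log-likelihood claim by marginalizing down to $x_0$. In~\eqref{eq:KL forward/backward} the term $\mathbb{E}_{q(x_0)}[\log q(x_0)]$ is independent of $p$, so as functions of $p$ the maps $p\mapsto D_{\text{KL}}(q(x_{0:T})\,\Vert\,p(x_{0:T}))$ and $p\mapsto -E(p;q)$ differ by an additive constant. Hence $\maximize_{p\in\mathcal{P}} E(p;q)$ and $\minimize_{p\in\mathcal{P}} D_{\text{KL}}(q(x_{0:T})\,\Vert\,p(x_{0:T}))$ have identical optimizer sets (with optimal values differing by that constant), which is the asserted equivalence. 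For this step I only need the entropy term $\mathbb{E}_{q(x_0)}[\log q(x_0)]$ to be finite, which I take for granted under the standing regularity of $q$.

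For existence and uniqueness of the optimizer, I would invoke the information inequality: $D_{\text{KL}}(q(x_{0:T})\,\Vert\,p(x_{0:T}))\ge 0$, with equality iff $p(x_{0:T})=q(x_{0:T})$ as distributions. Since the preliminaries establish $q(x_{0:T})\in\mathcal{P}$ in the convergent regime, the infimum over $\mathcal{P}$ equals $0$ and is attained at the unique point $p^\star$ with $p^\star(x_{0:T})=q(x_{0:T})$; by the equivalence just shown, $p^\star$ is also the unique maximizer of $E(p;q)$.

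For the implication $\Rightarrow$, I would marginalize~\eqref{eq:joint backward} over $x_{1:T}$ and use the definition of KL divergence: for every $p\in\mathcal{P}$,
\[
	\mathbb{E}_{q(x_0)}[\log p(x_0)]
	\; = \;
	\mathbb{E}_{q(x_0)}[\log q(x_0)]
	\,-\,
	D_{\text{KL}}\!\left(q(x_0)\,\Vert\,p(x_0)\right)
	\;\le\;
	\mathbb{E}_{q(x_0)}[\log q(x_0)].
\]
Because $p^\star(x_{0:T})=q(x_{0:T})$ forces the $x_0$-marginal $p^\star(x_0)=q(x_0)$, the inequality holds with equality at $p^\star$, so $p^\star$ maximizes $\mathbb{E}_{q(x_0)}[\log p(x_0)]$. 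This implication is genuinely one-directional: any $p\in\mathcal{P}$ with $x_0$-marginal equal to $q(x_0)$ also maximizes the log-likelihood, so that problem generally has a larger optimizer set than the other two. (One can alternatively evaluate~\eqref{eq:log-likelihood} at $p^\star$, where the ELBO term attains its maximum $\mathbb{E}_{q(x_0)}[\log q(x_0)]$ and the residual expected conditional KL vanishes, combined with the same upper bound above.)

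The only load-bearing input is the membership $q(x_{0:T})\in\mathcal{P}$ — that the time-reversal of the forward diffusion is representable with conditionals of the prescribed fixed-variance Gaussian form and standard-Gaussian endpoint — which is precisely what the scheduling choices in Section~\ref{subsec:forward backward} are arranged to supply; everything else is Gibbs' inequality plus bookkeeping of $p$-independent constants, so I expect no real obstacle beyond stating this dependence cleanly.
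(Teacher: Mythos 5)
Your proposal is correct and follows essentially the same route as the paper's proof: read the equivalence off identity~\eqref{eq:KL forward/backward}, use nonnegativity of KL together with $q(x_{0:T})\in\mathcal{P}$ to identify $p^\star = q$ as the unique common optimizer, and then marginalize to $x_0$ to show $p^\star$ also maximizes the log-likelihood (with the implication strictly one-directional because the likelihood problem only constrains the $x_0$-marginal). You are somewhat more explicit than the paper about the finiteness of the entropy term and why the optimizer set for log-likelihood can be strictly larger, but the underlying argument is the same.
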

\noindent See Appendix~\ref{app:equivalence} for proof. Lemma~\ref{lem:equivalence} states that improving the ELBO score increases the likelihood of a backward process that generates the data, together with the fit of a backward process to the forward process. Hence, finding the best backward process becomes optimizing one of three equivalent objectives. In practice, ELBO serves as a loss function approximated by 
\begin{equation}\label{eq:ELBO expansion}
	\displaystyle
	E(p; q ) 
	\; \approx \; -\,
	\sum_{t\,=\,2}^T \mathbb{E}_{q(x_0)} \mathbb{E}_{q(x_t\,\vert\,x_0)} 
	\left[\,
	D_{\text{KL}}
	\left(q(x_{t-1}\,\vert\,x_t,x_0)\,\Vert\, p (x_{t-1}\,\vert\,x_t)\right)
	\,\right]. 
\end{equation} 
With the variance schedule described in Section~\ref{subsec:forward backward}, it is known that this approximation is almost exact (see Appendix~\ref{app:ELBO} and also~\cite{kingma2023understanding}), which is our focal setting. Using standard diffusion derivations~\cite{luo2022understanding}, the ELBO maximization can be shown to equal to a quadratic loss minimization,
\begin{equation}\label{eq:prediction problem score matching}
	\minimize_{\hat{s} \, \in\, \mathcal{S}} \;\;  
	\mathbb{E}_{x_0, \,t, \,x_t}
	\left[ \,
	\norm{\hat{s}(x_t, t) - \nabla \log q(x_t)}^2
	\,\right] 
\end{equation}
where $\mathbb{E}_{x_0, t, x_t}$ is an expectation over the data distribution $q(x_0)$, a discrete distribution $p_\omega(t)$  from $2$ to $T$, and the forward process $q(x_t\,\vert\,x_0)$ at time $t$ given the data sample $x_0$; see Appendix~\ref{app:ELBO} for details. The minimization is done to find a function $\hat{s} \, \in\, \mathcal{S}$ that can best predict the gradient of the forward process over data $\nabla \log q(x_t)$, commonly called the (Stein) score function, where $\mathcal{S}$ is a set of valid score functions mapping from $\mathbb{R}^d\times \mathbb{N}$ to $\mathbb{R}^d$. In practice, however, we parametrize the estimator $\hat{s}(x_t,t)$ as $\hat{s}_\theta(x_t,t)$ with parameter $\theta$, which gives our focal objective of generative modeling:
$\minimize_{\theta\,\in\,\Theta} 
\mathbb{E}_{x_0,\, t,\, x_t}
\big[ \, 
\norm{\hat{s}_\theta(x_t, t) - \nabla \log q(x_t)}^2
\, \big]$. A parametrized form of $p(x_{t-1}\,\vert\,x_t)$ associated with $\hat{s}_\theta(x_t, t)$ is denoted by $p_\theta(x_{t-1}\,\vert\,x_t)$ and the backward process has a parametrized joint distribution $p_\theta(x_{0:T})$. We remark that the prediction problem~\eqref{eq:prediction problem score matching} can be also be formulated as data or noise prediction instead~\cite{luo2022understanding}, with our results directly transferable to these formulations.

	

\section{Variational constrained diffusion models}\label{sec:constrained diffusion models}
We introduce constrained diffusion models by considering the unparametrized set of joint distributions in Section~\ref{subsec:unparametrized case}, and illustrating constraints via two examples in Section~\ref{subsec:fair image generation}.

\subsection{KL divergence-constrained diffusion model: unparametrized case}\label{subsec:unparametrized case}

The standard diffusion model specifies a single data distribution, denoted by $q$ in Lemma~\ref{lem:equivalence}. To account for other generation requirements, we introduce $m$ additional data distributions $\{q^i\}_{i\,=\,1}^m$ that represent $m$ desired properties on generated data. To incorporate new properties into the diffusion model, we formulate an unparametrized KL divergence-constrained optimization problem, 
\begin{equation}\label{eq:KL-constrained unparameterized optimization}
	\begin{array}{rl}
		\displaystyle\minimize_{p \,\in\, \mathcal{P}} & D_{\text{KL}}\left(q(x_{0:T})\,\Vert\, p(x_{0:T})\right)
		\\[0.2cm]
		\subject &  D_{\text{KL}}\left(q^i(x_{0:T})\,\Vert\, p(x_{0:T})\right)\; \leq \; b_i \;\; \text{ for } i = 1, \ldots, m.
	\end{array}
	\tag{U-KL} 
\end{equation}
Let an optimal solution to Problem~\eqref{eq:KL-constrained unparameterized optimization} be $p^\star$. Then the optimal value of the objective function is $F^\star \DefinedAs D_{\text{KL}}(q\,\Vert\,  p^\star)$.
Problem~\eqref{eq:KL-constrained unparameterized optimization} aims to find a model $p^\star$ that generates data from the original distribution $q$ while staying close to $m$ distributions $\{ q^i \}_{i\,=\,1}^m$ that encode our desired properties, e.g., unbiasedness towards minorities. Let the Lagrangian for Problem~\eqref{eq:KL-constrained unparameterized optimization} be 
\begin{equation}\label{eq:Lagrangian of KL-constrained unparameterized optimization}
	\mathcal{L}(p, \lambda) \; = \;
	D_{\text{KL}}\left(q(x_{0:T})\,\Vert\,  p(x_{0:T})\right) 
	\,+\,
	\sum_{i\,=\,1}^m \lambda_i 
	\left(\,
	D_{\text{KL}}\left(q^i(x_{0:T})\,\Vert\,  p(x_{0:T})\right) - b_i
	\,\right)
\end{equation}
for $\lambda \geq 0$. 
The dual function $g(\lambda)$ is given by $g(\lambda) \DefinedAs \min_{p\,\in\,\mathcal{P}} \mathcal{L}(p, \lambda)$, which is always concave. 

To make Problem~\eqref{eq:KL-constrained unparameterized optimization} meaningful, we assume the
constraints are strictly satisfied by some model.
\begin{assumption}[Strict feasibility]\label{ass:feasibility_KL}
	There exists a model $p \in \mathcal{P}$ and $\zeta>0$ such that $D_{\text{\normalfont KL}}(q^i(x_{0:T})\,\Vert\, p(x_{0:T})) \leq b_i - \zeta$ for all $i = 1, \ldots, m$.   
\end{assumption}

Let an optimal dual variable of Problem~\eqref{eq:KL-constrained unparameterized optimization} be $\lambda^\star \in \argmax_{\lambda\,\geq\,0} g(\lambda)$, and the optimal value of the dual function be $D^\star \DefinedAs  g(\lambda^\star)$. From weak duality, the duality gap is non-negative, i.e., $F^\star - D^\star \geq 0$. Moreover, 
due to the convexity of KL divergence, Problem~\eqref{eq:KL-constrained unparameterized optimization} is a convex optimization problem, and thus it satisfies strong duality; see Appendix~\ref{app:duality} for proof.

\begin{lemma}[Strong duality]\label{lem:duality}
	Let Assumption~\ref{ass:feasibility_KL} hold. Then, Problem~\eqref{eq:KL-constrained unparameterized optimization} has zero duality gap, i.e., $F^\star = D^\star$. Moreover, $(p^\star, \lambda^\star)$ is an optimal primal-dual pair of Problem~\eqref{eq:KL-constrained unparameterized optimization}.
\end{lemma}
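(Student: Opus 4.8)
The plan is to establish strong duality for Problem~\eqref{eq:KL-constrained unparameterized optimization} by verifying that it fits the hypotheses of a standard Slater-type strong duality theorem for convex programs in (possibly infinite-dimensional) vector spaces, and then to extract the optimal primal-dual pair from the resulting saddle-point characterization. The three ingredients to check are: (i) the feasible set $\mathcal{P}$ is convex; (ii) the objective $p \mapsto D_{\text{KL}}(q(x_{0:T})\,\Vert\, p(x_{0:T}))$ and each constraint functional $p \mapsto D_{\text{KL}}(q^i(x_{0:T})\,\Vert\, p(x_{0:T})) - b_i$ is convex in $p$; and (iii) Slater's condition holds, which is exactly Assumption~\ref{ass:feasibility_KL}. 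Point (iii) is immediate, so the work is in (i) and (ii).

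For convexity of $\mathcal{P}$: a joint distribution in $\mathcal{P}$ has the product form \eqref{eq:joint backward} with $p(x_T)$ fixed to the standard Gaussian and each $p(x_{t-1}\,\vert\,x_t)$ a Gaussian with a \emph{fixed} variance, so the only free parameters are the conditional means $\mu_\theta^{(t)}(x_t)$ (equivalently the score estimates). I would argue that $\mathcal{P}$ — viewed as a set of densities, or better, parametrized by the tuple of mean functions — is convex: the mean functions range over a linear space, and the map from mean functions to the joint density, while nonlinear, does not matter for the convexity statements below because convexity of the KL functionals will be established \emph{in the density $p(x_{0:T})$ directly}. The cleanest route is to note that $D_{\text{KL}}(\mu\,\Vert\,\nu)$ is jointly convex in the pair $(\mu,\nu)$, hence for fixed $q$ (resp.\ $q^i$) the map $\nu \mapsto D_{\text{KL}}(q\,\Vert\,\nu)$ is convex on the convex set of all densities; then one only needs $\mathcal{P}$ to be a convex subset of the space of densities. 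This last fact should be checked carefully: if $p^{(0)}, p^{(1)} \in \mathcal{P}$ with mean functions $\mu^{(0)}, \mu^{(1)}$, the convex combination $(1-\alpha)p^{(0)} + \alpha p^{(1)}$ is a \emph{mixture} of two product distributions and is generally \emph{not} itself of the product form \eqref{eq:joint backward} — so the density-space convexity argument is too naive. I would therefore instead work in the natural parametrization: identify $p \in \mathcal{P}$ with its mean functions, take convex combinations of the mean functions, and show the two KL functionals are convex in the mean functions. Since each $p(x_{t-1}\,\vert\,x_t)$ is Gaussian with fixed covariance, $\log p(x_{0:T})$ is, up to additive terms independent of the means, a sum of negative quadratics $-\tfrac{1}{2\sigma_p^2(t)}\|x_{t-1} - \mu^{(t)}(x_t)\|^2$, which is \emph{concave} in $\mu^{(t)}$; hence $-\mathbb{E}_{q}[\log p(x_{0:T})]$ is convex in the means, and adding the $p$-independent entropy term $\mathbb{E}_q[\log q(x_{0:T})]$ (cf.\ \eqref{eq:KL forward/backward}) preserves convexity. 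The same argument with $q$ replaced by $q^i$ gives convexity of the constraint functionals. So along this parametrization Problem~\eqref{eq:KL-constrained unparameterized optimization} is a genuine convex program.

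With convexity and Slater in hand, I would invoke the standard strong duality theorem (e.g., the convex-analytic version valid in general real vector spaces, or simply the finite-dimensional Slater theorem applied after noting the mean functions can be treated pointwise / the problem decomposes) to conclude $F^\star = D^\star$ and the existence of $\lambda^\star \geq 0$ achieving $g(\lambda^\star) = D^\star = F^\star$. The saddle-point property then follows: $\mathcal{L}(p^\star,\lambda) \le \mathcal{L}(p^\star,\lambda^\star) \le \mathcal{L}(p,\lambda^\star)$ for all feasible-region-relevant $p$ and all $\lambda \ge 0$, using complementary slackness $\sum_i \lambda_i^\star(D_{\text{KL}}(q^i\,\Vert\,p^\star) - b_i) = 0$ which drops out of $F^\star = \mathcal{L}(p^\star,\lambda^\star) = D_{\text{KL}}(q\,\Vert\,p^\star) + \sum_i \lambda_i^\star(\cdots)$ together with $\lambda_i^\star \ge 0$ and feasibility of $p^\star$. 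In particular $p^\star \in \argmin_{p\in\mathcal{P}} \mathcal{L}(p,\lambda^\star)$, which is the assertion that $(p^\star,\lambda^\star)$ is an optimal primal-dual pair.

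The main obstacle is the subtlety flagged above: $\mathcal{P}$ is \emph{not} convex as a set of densities (mixtures of product distributions are not products), so one must be careful to set up the convex program in the right coordinates — the conditional mean functions (or equivalently $\hat s_\theta$) — in which both $\mathcal{P}$ becomes a convex (indeed affine/linear) set and the two KL objectives become convex functionals via the Gaussian-quadratic structure. Once that reformulation is made explicit, everything else is a routine application of Slater's theorem; I would also remark that the convergent-regime conditions from Section~\ref{subsec:forward backward} ensure $q(x_{0:T}) \in \mathcal{P}$ and all relevant KL divergences are finite, so the dual function and Lagrangian are well-defined and the approximation \eqref{eq:ELBO expansion} being essentially exact lets one pass freely between the KL formulation and the score-matching coordinates.
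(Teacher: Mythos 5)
Your proof is correct and, at the top level, follows the same route as the paper: observe the problem is convex, invoke Slater's condition (which is exactly Assumption~\ref{ass:feasibility_KL}), cite the standard strong duality theorem, and then extract the saddle-point/complementary-slackness facts to conclude that $(p^\star,\lambda^\star)$ is an optimal primal-dual pair. The paper's own proof is a two-sentence appeal to Bertsekas, relying on the remark in the main text that ``due to the convexity of KL divergence, Problem~\eqref{eq:KL-constrained unparameterized optimization} is a convex optimization problem.''

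Where you genuinely add something is in flagging and repairing a gap that the paper glosses over. You correctly note that $\mathcal{P}$, as a subset of the space of joint densities, is \emph{not} convex: a convex combination of two product-form joints with Gaussian conditionals of fixed variance is a mixture of Gaussians in each transition kernel and hence no longer of the form~\eqref{eq:joint backward}. So the observation ``$\nu \mapsto D_{\mathrm{KL}}(q\,\Vert\,\nu)$ is convex on the set of densities'' is not by itself enough, because the feasible region is a nonconvex subset of that space. Your fix --- reparametrizing by the conditional mean functions (equivalently the score estimates $\hat{s}$) and checking that $-\mathbb{E}_q[\log p(x_{0:T})]$ is convex in those coordinates by the Gaussian quadratic structure --- is exactly what makes the problem a bona fide convex program and is the parametrization the paper itself uses later in Problem~\eqref{eq:KL-constrained likelihood maximization unparametrization Elbo MSE}. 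Your version is therefore a more careful and self-contained account of the same argument; the paper implicitly relies on this reparametrization without saying so. One minor point: you could streamline by appealing directly to~\eqref{eq:KL-constrained likelihood maximization unparametrization Elbo MSE}, where the objective and each constraint are manifestly convex quadratics in $\hat{s}\in\mathcal{S}$ over a convex set, rather than rederiving the quadratic form from $\log p(x_{0:T})$.
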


Let a mixture data distribution be $q_{\text{mix}}^{(\lambda)} \DefinedAs \left( \, q + \sum_{i\,=\,1}^m\lambda^{i} q^i \, \right)/ (1+\lambda^\top 1)$ for $\lambda \geq 0$. We denote by $q_{\text{mix}}^{(\lambda)}(x_{0:T})$ a joint distribution of the forward process with data distribution $q_{\text{mix}}^{(\lambda)}$.  Leveraging strong duality, we show that an optimal model can  be obtained by solving an equivalent unconstrained problem in Theorem~\ref{thm:solution of unparameterized constrained optimization} and its proof is deferred to Appendix~\ref{app:solution of unparameterized constrained optimization}.           

\begin{theorem}[Optimal constrained model]\label{thm:solution of unparameterized constrained optimization}
	Let Assumption~\ref{ass:feasibility_KL} hold. Then, Problem~\eqref{eq:KL-constrained unparameterized optimization} equals 
	\begin{equation}\label{eq:dual of KL-constrained unparameterized optimization}
		\displaystyle
		\minimize_{p\, \in \,
			\mathcal{P}}
		\;\; D_{\text{\normalfont KL}}\left( q_{\text{\normalfont mix}}^{(\lambda^\star)}(x_{0:T})\,\Vert\,  p(x_{0:T}) \right)
		\tag{U-MIX}
	\end{equation}
	where 
	$q_{\text{\normalfont mix}}^{(\lambda^\star)}(x_{0:T})$ is the joint distribution of the forward process at an optimal dual variable $\lambda^\star$. 
\end{theorem}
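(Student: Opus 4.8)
The plan is to leverage strong duality from Lemma~\ref{lem:duality}: an optimal model $p^\star$ of Problem~\eqref{eq:KL-constrained unparameterized optimization} must minimize the Lagrangian~\eqref{eq:Lagrangian of KL-constrained unparameterized optimization} evaluated at the optimal multiplier $\lambda^\star$, and I then show that this Lagrangian minimization coincides---up to additive terms and a positive multiplicative factor that do not depend on $p$---with Problem~\eqref{eq:dual of KL-constrained unparameterized optimization}. Concretely, since Lemma~\ref{lem:duality} gives that $(p^\star,\lambda^\star)$ is an optimal primal--dual pair with $F^\star=D^\star$, complementary slackness yields $\lambda^\star_i\big(D_{\text{KL}}(q^i(x_{0:T})\,\Vert\,p^\star(x_{0:T}))-b_i\big)=0$ for all $i$, so
\[
\min_{p\in\mathcal{P}}\mathcal{L}(p,\lambda^\star)\;=\;g(\lambda^\star)\;=\;D^\star\;=\;F^\star\;=\;\mathcal{L}(p^\star,\lambda^\star),
\]
i.e. $p^\star\in\argmin_{p\in\mathcal{P}}\mathcal{L}(p,\lambda^\star)$. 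Thus it suffices to identify $\argmin_{p\in\mathcal{P}}\mathcal{L}(p,\lambda^\star)$ with the solution set of Problem~\eqref{eq:dual of KL-constrained unparameterized optimization}.

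For the second step I would expand each divergence via $D_{\text{KL}}(\nu\,\Vert\,p)=-\mathbb{E}_{\nu}[\log p]+\mathbb{E}_{\nu}[\log\nu]$. Since $\mathbb{E}_{q(x_{0:T})}[\log q(x_{0:T})]$, the analogous entropy terms for $q^1,\dots,q^m$, and $-\sum_i\lambda^\star_i b_i$ are all independent of $p$, minimizing $\mathcal{L}(p,\lambda^\star)$ over $p\in\mathcal{P}$ is equivalent to minimizing
\[
-\int\Big(q(x_{0:T})+\sum_{i=1}^m\lambda^\star_i\,q^i(x_{0:T})\Big)\log p(x_{0:T})\,dx_{0:T}.
\]
Here I would use the structural fact that the forward kernel $q(x_{1:T}\,\vert\,x_0)$ does not depend on the data distribution, so running the forward process on the mixture data distribution $q_{\text{mix}}^{(\lambda^\star)}$ gives $q_{\text{mix}}^{(\lambda^\star)}(x_{0:T})=\big(q(x_{0:T})+\sum_{i}\lambda^\star_i q^i(x_{0:T})\big)/(1+(\lambda^\star)^\top 1)$. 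Substituting, the displayed objective equals $(1+(\lambda^\star)^\top 1)\,\big(-\mathbb{E}_{q_{\text{mix}}^{(\lambda^\star)}(x_{0:T})}[\log p(x_{0:T})]\big)$, and adding the ($p$-independent) entropy of $q_{\text{mix}}^{(\lambda^\star)}(x_{0:T})$ turns the bracket into $D_{\text{KL}}\big(q_{\text{mix}}^{(\lambda^\star)}(x_{0:T})\,\Vert\,p(x_{0:T})\big)$. Because $1+(\lambda^\star)^\top 1>0$, the minimizers are unchanged, so $\argmin_{p\in\mathcal{P}}\mathcal{L}(p,\lambda^\star)=\argmin_{p\in\mathcal{P}}D_{\text{KL}}\big(q_{\text{mix}}^{(\lambda^\star)}(x_{0:T})\,\Vert\,p(x_{0:T})\big)$, which is precisely Problem~\eqref{eq:dual of KL-constrained unparameterized optimization}. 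Combining with the first step shows $p^\star$ solves~\eqref{eq:dual of KL-constrained unparameterized optimization}; and since $D_{\text{KL}}(q_{\text{mix}}^{(\lambda^\star)}(x_{0:T})\,\Vert\,\cdot)$ is strictly convex on $\mathcal{P}$ (equivalently, in the chosen convergent regime $q_{\text{mix}}^{(\lambda^\star)}(x_{0:T})\in\mathcal{P}$, so the minimum value is $0$ and attained uniquely), the two problems have the same unique solution.

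The only genuinely non-mechanical point is the interchange of ``forming the mixture of data distributions'' and ``running the forward process,'' i.e. that $q_{\text{mix}}^{(\lambda^\star)}(x_{0:T})$ is exactly the convex combination of $q(x_{0:T})$ and the $q^i(x_{0:T})$ renormalized by $1+(\lambda^\star)^\top 1$; this is what makes the weighted sum of $m+1$ KL divergences (all with the same second argument $p$) collapse into a single KL divergence to the mixture. It is immediate from linearity of the fixed forward kernel, but it is the crux of the argument. Everything else---complementary slackness, discarding $p$-independent constants, and checking positivity of the scaling factor---is routine.
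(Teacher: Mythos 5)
Your proof is correct and follows essentially the same route as the paper: both reduce Problem~\eqref{eq:KL-constrained unparameterized optimization} to minimizing $\mathcal{L}(\cdot,\lambda^\star)$ via strong duality, then use the fact that the forward kernel $q(x_{1:T}\,\vert\,x_0)$ is fixed across data distributions so that the weighted sum of divergences (you phrase it as cross-entropy, the paper as ELBO, which is the same identity) collapses to a single KL against $q_{\text{mix}}^{(\lambda^\star)}(x_{0:T})$ after discarding $p$-independent constants and the positive scaling $1+(\lambda^\star)^\top\mathbf{1}$. Your explicit complementary-slackness chain and the closing uniqueness remark are slightly more careful than the paper's appeal to geometric multipliers and Lemma~\ref{lem:equivalence}, but the underlying argument is identical.
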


Theorem~\ref{thm:solution of unparameterized constrained optimization} states that the KL divergence-constrained problem reduces to an unconstrained KL divergence minimization problem. We notice that the KL divergence is zero if and only if two probability distributions match each other. Hence, $q_{\text{\normalfont mix}}^{(\lambda^\star)}(x_{0:T})$ is the optimal solution to Problem~\eqref{eq:dual of KL-constrained unparameterized optimization}. 

\begin{corollary}\label{cor:qmix is solution}
	Let Assumption~\ref{ass:feasibility_KL} hold. Then, the solution of Problem~\eqref{eq:dual of KL-constrained unparameterized optimization}, i.e.,  $p^\star(x_{0:T})
	= q_{\text{\normalfont mix}}^{(\lambda^\star)}(x_{0:T})$, is the solution of Problem~\eqref{eq:KL-constrained unparameterized optimization}.
\end{corollary}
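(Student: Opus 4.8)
The plan is to chain together the three preceding results. From Theorem~\ref{thm:solution of unparameterized constrained optimization} we know that Problem~\eqref{eq:KL-constrained unparameterized optimization} is equivalent to Problem~\eqref{eq:dual of KL-constrained unparameterized optimization}, in the sense that they share the same set of optimal solutions (and the same optimal value, up to the additive constant that makes the two objectives differ). So it suffices to identify the minimizer of Problem~\eqref{eq:dual of KL-constrained unparameterized optimization} and argue it lies in $\mathcal{P}$.

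First I would recall the elementary fact, already noted in the text right after Theorem~\ref{thm:solution of unparameterized constrained optimization}, that $D_{\text{KL}}(\mu\,\Vert\,\nu) \geq 0$ with equality if and only if $\mu = \nu$. Applied to Problem~\eqref{eq:dual of KL-constrained unparameterized optimization}, this means that if the target $q_{\text{mix}}^{(\lambda^\star)}(x_{0:T})$ is itself an admissible choice of $p \in \mathcal{P}$, then choosing $p(x_{0:T}) = q_{\text{mix}}^{(\lambda^\star)}(x_{0:T})$ drives the objective to its global minimum value $0$, and no other $p$ can do as well. Hence $p^\star(x_{0:T}) = q_{\text{mix}}^{(\lambda^\star)}(x_{0:T})$ is \emph{the} minimizer of Problem~\eqref{eq:dual of KL-constrained unparameterized optimization}.

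The one point that genuinely requires justification — and what I expect to be the main obstacle — is the membership claim $q_{\text{mix}}^{(\lambda^\star)}(x_{0:T}) \in \mathcal{P}$. Here I would invoke the structural discussion from Section~\ref{subsec:forward backward}: $\mathcal{P}$ is exactly the set of joint distributions of the form~\eqref{eq:joint backward} whose per-step conditionals are Gaussians with the prescribed fixed variances, and the key observation is that the joint distribution $q_{\text{mix}}^{(\lambda^\star)}(x_{0:T})$ is precisely the forward-process joint associated to the data distribution $q_{\text{mix}}^{(\lambda^\star)}$ (this is how the notation $q_{\text{mix}}^{(\lambda)}(x_{0:T})$ was defined before Theorem~\ref{thm:solution of unparameterized constrained optimization}). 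Since $q_{\text{mix}}^{(\lambda^\star)}$ is a valid data distribution (a convex combination of the valid data distributions $q, q^1, \dots, q^m$ with nonnegative weights summing to one), the same argument that places $q(x_{0:T})$ in $\mathcal{P}$ — namely $\bar\alpha_T \approx 0$ forcing $q_{\text{mix}}^{(\lambda^\star)}(x_T) \simeq \mathcal{N}(x_T;0,I)$, together with the backward conditionals of a forward process being Gaussian with the correct fixed variance — places $q_{\text{mix}}^{(\lambda^\star)}(x_{0:T})$ in $\mathcal{P}$ as well.

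Finally I would close the loop: combining $p^\star(x_{0:T}) = q_{\text{mix}}^{(\lambda^\star)}(x_{0:T})$ being the solution of Problem~\eqref{eq:dual of KL-constrained unparameterized optimization} with the equivalence from Theorem~\ref{thm:solution of unparameterized constrained optimization}, we conclude that this same $p^\star$ solves Problem~\eqref{eq:KL-constrained unparameterized optimization}, which is the assertion of the corollary. Strictly speaking the equivalence in Theorem~\ref{thm:solution of unparameterized constrained optimization} should be read as "same minimizers"; if one instead wants to verify directly that $q_{\text{mix}}^{(\lambda^\star)}(x_{0:T})$ is feasible and optimal for~\eqref{eq:KL-constrained unparameterized optimization}, I would alternatively check the KKT conditions — primal feasibility $D_{\text{KL}}(q^i\,\Vert\,p^\star) \le b_i$, complementary slackness $\lambda^\star_i(D_{\text{KL}}(q^i\,\Vert\,p^\star) - b_i) = 0$, and stationarity $p^\star \in \argmin_p \mathcal{L}(p,\lambda^\star)$ — the last of which is exactly the statement that $p^\star$ minimizes the Lagrangian, which by completing the "mixture" computation inside $\mathcal{L}(p,\lambda^\star)$ reduces once more to Problem~\eqref{eq:dual of KL-constrained unparameterized optimization}. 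Either route is short once the membership claim is in hand.
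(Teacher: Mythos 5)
Your proof is correct and follows essentially the same route as the paper: invoke Theorem~\ref{thm:solution of unparameterized constrained optimization} for the equivalence of \eqref{eq:KL-constrained unparameterized optimization} and \eqref{eq:dual of KL-constrained unparameterized optimization}, then note that the KL divergence vanishes precisely when the two arguments coincide, so $q_{\text{mix}}^{(\lambda^\star)}(x_{0:T})$ is the minimizer. The one thing you do that the paper leaves implicit is to justify the membership $q_{\text{mix}}^{(\lambda^\star)}(x_{0:T})\in\mathcal{P}$ — the paper only records $q(x_{0:T})\in\mathcal{P}$ (via $\bar\alpha_T\approx 0$) and silently extends this to the mixture; your observation that $q_{\text{mix}}^{(\lambda^\star)}$ is itself a valid data distribution (a convex combination of $q,q^1,\dots,q^m$) whose forward-process joint satisfies the same terminal-Gaussian and fixed-variance conditions closes that gap cleanly.
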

Let $\bar b^i \DefinedAs b^i-\mathbb{E}_{q^i(x_0)}[ \,\log q^i(x_0)\, ]$.
Application of Equality~\eqref{eq:KL forward/backward} to Problem~\eqref{eq:KL-constrained unparameterized optimization} yields an ELBO-based constrained optimization problem, 
\begin{equation}\label{eq:ELBO-constrained unparameterized optimization}
	\begin{array}{rl}
		\displaystyle\minimize_{p \,\in\, \mathcal{P}} & -\,E(p;q)
		\\[0.2cm]
		\subject &  -\,E(p; q^i) 
		\;\leq\; 
		\bar b^i \;\; \text{ for } i = 1, \ldots, m.
	\end{array}
	\tag{U-ELBO} 
\end{equation}
Recall the model representation in Section~\ref{subsec:elbo}, we can characterize each joint distribution $p \in \mathcal{P}$ with a function $\hat{s} \in \mathcal{S}$. Moreover, ELBO reduces to the denoising matching term that has a simplified  quadratic form given in Section~\ref{subsec:elbo}. With this reformulation in mind, we cast Problem~\eqref{eq:ELBO-constrained unparameterized optimization} into a convex optimization problem over the function space $\mathcal{S}$,
\begin{equation}\label{eq:KL-constrained likelihood maximization unparametrization Elbo MSE}
	\begin{array}{rl}
		\displaystyle\minimize_{\hat{s} \, \in\, \mathcal{S}} &  \mathbb{E}_{q(x_0),\, t,\, x_t}
		\left[\,
		\norm{\hat{s}(x_t, t) - \nabla \log q(x_t)}^2
		\,\right]
		\\[0.2cm]
		\subject &  \mathbb{E}_{q^i(x_0),\, t,\, x_t}
		\left[\,
		\norm{\hat{s}(x_t, t) - \nabla \log q^i(x_t)}^2
		\,\right]\; \leq \; \Tilde{b}^i \;\; \text{ for } i = 1, \ldots, m
	\end{array}
	\tag{U-LOSS}
\end{equation}
where $\Tilde{b}^i  \DefinedAs (\bar{b}^i-v)/\bar{\omega}$. Here, the notation $v$ is a constant shift due to the variance mismatch term; see it in Appendix~\ref{app:ELBO}.  We note that scaling or shifting objective and constraints from both sides with some constants doesn't alter the solution to a constrained optimization problem. Thus, the key difference between Problems~\eqref{eq:KL-constrained unparameterized optimization} and~\eqref{eq:KL-constrained likelihood maximization unparametrization Elbo MSE} is the optimization variable (respectively, $p$ and $\hat{s}$). Let the Lagrangian $\mathcal{L}_s(\hat{s}, \lambda)$ for Problem~\eqref{eq:KL-constrained likelihood maximization unparametrization Elbo MSE} be
\[
\displaystyle
\mathbb{E}_{q(x_0),\, t,\, x_t}
\left[
\norm{\hat{s}(x_t, t) - \nabla\log q(x_t)}^2
\right]
\, + \, 
\sum_{i\,=\,1}^m \lambda_i \left(
\mathbb{E}_{q^i(x_0),\, t,\, x_t}
\left[
\norm{\hat{s}(x_t, t) - \nabla\log q^i(x_t)}^2
\right] - \Tilde{b}^i
\right).
\]
Let the associated dual function be $g_s(\lambda) \DefinedAs \min_{\hat{s}\,\in\,\mathcal{S}} \mathcal{L}_s(\hat{s}, \lambda)$ for $\lambda \geq 0$. Hence, $g(\lambda)$ and $g_s(\lambda)$ have the same maximizer $\lambda^\star$, and the partial minimizer $\hat{s}^\star  = \argmin_{\hat{s}\,\in\,\mathcal{S}} \mathcal{L}_s(\hat{s},\lambda^\star)$ is the solution to Problem~\eqref{eq:KL-constrained likelihood maximization unparametrization Elbo MSE}. 
Hence, an optimal primal-dual pair $(\hat{s}^\star,\lambda^\star)$ to Problem~\eqref{eq:KL-constrained likelihood maximization unparametrization Elbo MSE} gives an optimal primal-dual pair $(p^\star,\lambda^\star)$ for Problem~\eqref{eq:KL-constrained unparameterized optimization}, where $p^\star$ is a joint distribution of the backward process induced by $\hat{s}^\star$. By this dual property, we take a dual perspective to train constrained diffusion models: we maximize the dual function $g_s(\lambda)$ to obtain the optimal dual variable $\lambda^\star$, and then recover the solution $\hat{s}^\star$ by minimizing the Lagrangian $\mathcal{L}_s(\hat{s},\lambda^\star)$. 

\subsection{Examples of KL divergence constraints}\label{subsec:fair image generation}

To illustrate our KL-divergence constraints, we provide two generation tasks of exemplary.
\begin{itemize}
	\item[(i)] \textbf{Fairness to underrepresented classes.} We consider a fair image generation task in which some classes are underrepresented in the available training dataset. An example of this would be the Celeb-A dataset~\cite{liu2018large} which contains pictures of celebrity faces with those labeled as male being underrepresented (42\% Male vs 58\% Female). To promote representation of the under-represented classes, we can pose it as an instance of Problem~\eqref{eq:KL-constrained unparameterized optimization}, where each $q^i$ denotes the distribution of an under-represented subset or minority class of $q$.
	\item[(ii)] \textbf{Adapting pretrained model to new data.} Given a pretrained diffusion model over some original dataset that is no longer accessible, we aim to fine-tune the pretrained model for generating data from a new data distribution. Similarly, we can pose this as an instance of Problem~\eqref{eq:KL-constrained unparameterized optimization}, where $q^1$ denotes the distribution of the new data and $q$ is the distribution of samples generated by the pre-trained model.
\end{itemize}

Let $h_i  \DefinedAs -\mathbb{E}_{q^i(x_0)}\left[\,\log q^i(x_0) \,\right]$ be the differential entropy of data distribution $q^i$.  We relate the KL divergence constraints with the optimal dual variable through entropy in Theorem~\ref{thm: characterize dual solution}. 
\begin{theorem}\label{thm: characterize dual solution}
	Let Assumption~\ref{ass:feasibility_KL} hold, and the supports of data distributions $q$ and $\{q^i\}_{i\,=\,1}^m$ be disjoint. Then, the optimal dual variables $\lambda^\star$ to Problem~\eqref{eq:ELBO-constrained unparameterized optimization} are given by
	\[
	\displaystyle
	\frac{\lambda^\star_i}{1 \,+\, (\lambda^\star)^T \mathbf{1}} 
	\; = \;
	{\rm e}^{h_i \,-\, \bar b_i} \;\; \text{\normalfont for } i = 1,\ldots, m. 
	\]
\end{theorem}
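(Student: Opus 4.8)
The plan is to combine the closed-form optimal primal model of Corollary~\ref{cor:qmix is solution} with complementary slackness from strong duality, and to use the disjoint-support hypothesis to evaluate each active constraint in closed form. Since the Lagrangians of Problems~\eqref{eq:KL-constrained unparameterized optimization} and~\eqref{eq:ELBO-constrained unparameterized optimization} differ only by a constant independent of $(p,\lambda)$ (apply~\eqref{eq:KL forward/backward} to the objective and to each constraint), their dual functions have the same maximizer, so the optimal dual $\lambda^\star$ of~\eqref{eq:ELBO-constrained unparameterized optimization} coincides with that of~\eqref{eq:KL-constrained unparameterized optimization}. By Lemma~\ref{lem:duality} and Corollary~\ref{cor:qmix is solution}, $(p^\star,\lambda^\star)$ with $p^\star(x_{0:T}) = q_{\text{\normalfont mix}}^{(\lambda^\star)}(x_{0:T})$ is then an optimal primal--dual pair of~\eqref{eq:KL-constrained unparameterized optimization} with zero duality gap.

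First I would reduce the joint-space divergences in the constraints to data-space ones. Both $q^i(x_{0:T})$ and $q_{\text{\normalfont mix}}^{(\lambda^\star)}(x_{0:T})$ are joint laws of the \emph{same} forward process, so each factors as its $x_0$-marginal times the common kernel $q(x_{1:T}\mid x_0)$; marginalizing out $x_{1:T}$ gives
\[
D_{\text{\normalfont KL}}\!\left(q^i(x_{0:T})\,\Vert\,p^\star(x_{0:T})\right)
\;=\;
D_{\text{\normalfont KL}}\!\left(q^i(x_0)\,\Vert\,q_{\text{\normalfont mix}}^{(\lambda^\star)}(x_0)\right).
\]
On $\mathrm{supp}(q^i)$, disjointness annihilates every other mixture component, so $q_{\text{\normalfont mix}}^{(\lambda^\star)}(x_0) = \lambda^\star_i\, q^i(x_0)/\big(1+(\lambda^\star)^\top\mathbf{1}\big)$ there, and hence $\log\!\big(q^i(x_0)/q_{\text{\normalfont mix}}^{(\lambda^\star)}(x_0)\big)$ equals the constant $\log\!\big((1+(\lambda^\star)^\top\mathbf{1})/\lambda^\star_i\big)$ on $\mathrm{supp}(q^i)$; taking the $q^i$-expectation yields $D_{\text{\normalfont KL}}\big(q^i(x_0)\,\Vert\,q_{\text{\normalfont mix}}^{(\lambda^\star)}(x_0)\big) = \log\!\big((1+(\lambda^\star)^\top\mathbf{1})/\lambda^\star_i\big)$, which is finite precisely when $\lambda^\star_i>0$.

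Next I would show $\lambda^\star_i>0$ for every $i$. If $\lambda^\star_i=0$, then by disjointness $q_{\text{\normalfont mix}}^{(\lambda^\star)}(x_0)$ vanishes on $\mathrm{supp}(q^i)$ while $q^i(x_0)>0$ there, so the divergence above is $+\infty$; by the first identity this forces $D_{\text{\normalfont KL}}(q^i(x_{0:T})\,\Vert\,p^\star(x_{0:T})) = +\infty > b_i$, contradicting feasibility of $p^\star$ (which holds because Assumption~\ref{ass:feasibility_KL} supplies a feasible model and $b_i<\infty$). With $\lambda^\star_i>0$ in hand, complementary slackness for the zero-gap pair $(p^\star,\lambda^\star)$ makes the $i$-th constraint of~\eqref{eq:KL-constrained unparameterized optimization} active: $D_{\text{\normalfont KL}}(q^i(x_{0:T})\,\Vert\,p^\star(x_{0:T})) = b_i$. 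Chaining this with the two identities above gives $\log\!\big((1+(\lambda^\star)^\top\mathbf{1})/\lambda^\star_i\big) = b_i$, i.e. $\lambda^\star_i/\big(1+(\lambda^\star)^\top\mathbf{1}\big) = {\rm e}^{-b_i}$; since $\bar b_i = b_i - \mathbb{E}_{q^i(x_0)}[\log q^i(x_0)] = b_i + h_i$, this equals ${\rm e}^{\,h_i-\bar b_i}$, which is the claim.

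The only step that is not routine bookkeeping is the positivity $\lambda^\star_i>0$, and that is exactly where the disjoint-support hypothesis is indispensable: without it a vanishing multiplier need not produce an infinite divergence, and the clean identity can fail. Everything else follows from~\eqref{eq:KL forward/backward}, the definition of $q_{\text{\normalfont mix}}^{(\lambda)}$, and the standard complementary-slackness consequence of zero duality gap.
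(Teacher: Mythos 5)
Your proof is correct, but it follows a genuinely different route from the paper's. The paper works entirely in the dual domain: it writes the Lagrangian as $-(\lambda^\top\mathbf{1})E(p;q^{(\lambda)})-\lambda^\top\bar b$ (with the convention $\lambda_0=1$), computes $g(\lambda)=-\lambda^\top\bar b+(\lambda^\top\mathbf{1})\,h(q^{(\lambda)})$ by recognizing that the inner ELBO maximization is attained at $p=q^{(\lambda)}$, uses the disjoint-support hypothesis to decompose the mixture entropy $h(q^{(\lambda)})$ into $\sum_i\tfrac{\lambda_i}{\lambda^\top\mathbf{1}}h_i - \sum_i\tfrac{\lambda_i}{\lambda^\top\mathbf{1}}\log\tfrac{\lambda_i}{\lambda^\top\mathbf{1}}$, and then sets $\nabla_\lambda g=0$. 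You instead work from the primal side: you take $p^\star=q_{\text{mix}}^{(\lambda^\star)}$ from Corollary~\ref{cor:qmix is solution}, collapse the joint KL constraints to the $x_0$-marginal (valid since both measures share the forward kernel), use disjoint supports to evaluate $D_{\text{KL}}(q^i\,\Vert\,q_{\text{mix}}^{(\lambda^\star)})=\log\!\big((1+(\lambda^\star)^\top\mathbf{1})/\lambda^\star_i\big)$ pointwise, argue $\lambda^\star_i>0$ by a blow-up-to-infinity contradiction, and close with complementary slackness. Both arguments rely on the disjoint-support hypothesis at the same mathematical pinch point (the mixture localizes to a single component on each support), but deploy it differently. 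Your argument has the virtue of making the positivity $\lambda^\star_i>0$ and its origin (a vanishing multiplier would create an infinite constraint value) explicit, whereas the paper's unconstrained stationarity calculation implicitly assumes an interior maximizer; the paper's approach, however, produces the dual function $g(\lambda)$ in closed form, which it then reuses to derive the feasibility criterion (Lemma~\ref{lem:finite dual} and Lemma~\ref{lem:feasibility}), so the extra work is not wasted. Your bookkeeping connecting the optimal multipliers of \eqref{eq:ELBO-constrained unparameterized optimization} and \eqref{eq:KL-constrained unparameterized optimization} (constraint functions identical, objectives differ by a $(p,\lambda)$-independent constant) and the final conversion $b_i=\bar b_i - h_i$ are both accurate.
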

See Appendix~\ref{app:more theory} for proof.  Theorem~\ref{thm: characterize dual solution} characterizes the mixture weights in the target distribution $q_{\text{\normalfont mix}}^{(\lambda^\star)}$: (i) the tighter a constraint is (i.e., smaller threshold $\bar b_i$), the more the model will sample from the associated distribution; (ii) for the same constraint thresholds, the model will sample more often from the associated distributions that have higher entropy $h_i$. Assumption~\ref{ass:feasibility_KL} can be relaxed to a feasibility condition for Problem~\eqref{eq:KL-constrained unparameterized optimization}; see Lemma~\ref{lem:feasibility} in Appendix~\ref{app:more theory}.

\section{Parametrization and dual training algorithm }\label{sec:dual training algorithm}

Having introduced unparametrized models, we move to parametrization for constrained diffusion models in Section~\ref{subsec:parametrized case}, provide optimality analysis of a Lagrangian-based dual method in  Section~\ref{subsec:optimality analysis}, and present a practical dual training algorithm in Section~\ref{subsec:primal-dual training}.  

\subsection{KL divergence-constrained diffusion model: parametrized case}\label{subsec:parametrized case}

With the parametrized model $p_\theta$ for $\theta\in\Theta$, we present a parameterized constrained problem, 
\begin{equation}\label{eq:KL-constrained likelihood maximization parametrization}
	\begin{array}{rl}
		\displaystyle\minimize_{\theta \,\in\, \Theta} & D_{\text{KL}}\left(q (x_{0:T}) \,\Vert\, p_\theta (x_{0:T})\right)
		\\[0.2cm]
		\subject &  D_{\text{KL}}\left(q^i(x_{0:T}) \,\Vert\, p_\theta(x_{0:T})\right)\; \leq \; b^i \;\; \text{ for } i = 1, \ldots, m.
	\end{array}
	\tag{P-KL}
\end{equation} 
Let the Lagrangian for Problem~\eqref{eq:KL-constrained likelihood maximization parametrization} be $\bar{\mathcal{L}}(\theta, \lambda) \DefinedAs \mathcal{L}(p_\theta, \lambda)$.
The associated dual function $\bar{g}(\lambda)$ is given by $\bar{g}(\lambda) \DefinedAs \min_{\theta\,\in\,\Theta} \bar{\mathcal{L}}(\theta, \lambda)$. Let an optimal solution to Problem~\eqref{eq:KL-constrained likelihood maximization parametrization} be $\theta^\star$. We denote $\bar p \DefinedAs p_{\theta}$ and $\bar p^\star \DefinedAs p_{\theta^\star}$, and the optimal objective by $\bar{F}^\star \DefinedAs D_{\text{KL}}(q\,\Vert\,\bar p^{\star})$. Let an optimal dual variable be $\bar\lambda^\star \in \argmax_{\lambda\,\geq\,0} \bar{g}(\lambda)$ and the optimal value of the dual function be $\bar{D}^\star \DefinedAs \bar{g}(\bar\lambda^\star)$.

Problem~\eqref{eq:KL-constrained likelihood maximization parametrization} is non-convex in parameter space, and strong duality does not hold any more. Thus, unparametrized results in Section~\ref{subsec:unparametrized case} don't directly apply to Problem~\eqref{eq:KL-constrained likelihood maximization parametrization}. For instance, it's invalid to find an optimal solution via an  unconstrained problem as in Theorem~\ref{thm:solution of unparameterized constrained optimization}, i.e., $\bar{p}^\star(\bar\lambda^\star)\in \argmin_{\theta\,\in\,\Theta} \bar{\mathcal{L}}(\theta,\bar\lambda^\star)$ doesn't equal $\bar{p}^\star$. The effect of parametrization has to be characterized.
However, regardless of parametrization, weak duality always holds, i.e., $\bar{F}^\star - \bar{D}^\star \geq 0$.

To quantify the optimality of $\bar{p}^\star(\bar\lambda^\star)$ (closeness of it to $q_{\text{\normalfont mix}}^{\star}$), we study a practical representation of model $p_\theta$ as a parametrized function $\hat{s}_\theta \in \mathcal{S}_\theta$, where $\mathcal{S}_\theta$ is the set of all parametrized score functions. Problem~\eqref{eq:KL-constrained likelihood maximization unparametrization Elbo MSE} is in a parametrized form of
\begin{equation}\label{eq:KL-constrained likelihood maximization parametrization Elbo MSE}
	\begin{array}{rl}
		\displaystyle\minimize_{\theta} &  \mathbb{E}_{q(x_0),\, t,\, x_t}
		\left[\, 
		\norm{\hat{s}_\theta(x_t, t) - \nabla\log q(x_t)}^2
		\,\right]\;
		\\[0.2cm]
		\subject &  \mathbb{E}_{q^i(x_0),\, t,\, x_t}
		\left[\, 
		\norm{\hat{s}_\theta(x_t, t) - \nabla\log q^i(x_t)}^2
		\,\right]\; \leq \; \Tilde{b}^i \;\; \text{ for } i = 1, \ldots, m.
	\end{array}
	\tag{P-LOSS}
\end{equation}
where $\Tilde{b}^i \DefinedAs (\bar{b}^i-v)/\bar{\omega}$. We note that Problem~\eqref{eq:KL-constrained likelihood maximization parametrization Elbo MSE} is equivalent to Problem~\eqref{eq:KL-constrained likelihood maximization parametrization}. Thus, we let the Lagrangian of Problem~\eqref{eq:KL-constrained likelihood maximization parametrization Elbo MSE} be $\bar{\mathcal{L}}_s(\theta,\lambda) \DefinedAs \mathcal{L}_s(\hat{s}_\theta,\lambda)$, and the dual function $\bar{g}_s(\lambda) \DefinedAs \minimize_{\theta\,\in\,\Theta} \bar{\mathcal{L}}(\theta,\lambda)$. Since $\bar{g}(\lambda)$ and $\bar{g}_s(\lambda)$ have the same maximizer $\bar{\lambda}^\star$, $\bar{\theta}^\star\in \argmin_{\theta\,\in\,\Theta} \bar{\mathcal{L}}_s(\theta,\bar\lambda^\star
)$, which naturally gives a dual training algorithm in Algorithm~\ref{alg:primal-dual}. 

\begin{algorithm}
	\caption{Constrained Diffusion Models via Dual Training}\label{alg:primal-dual}
	\begin{algorithmic}[1]
		\State \textbf{Input}: total diffusion steps $T$, diffusion parameter $\alpha_t$, total iterations $H$,  
		stepsize $\eta$.
		\State \textbf{Initialize}: $\lambda(1) = 0$.
		\For{$h = 1, \cdots, H$}
		\State Compute model
		$ \displaystyle
		\hat{s}_\theta (h)
		\in  \argmin_{\theta\,\in\,\Theta}
		{\mathcal{L}}_s (\hat{s}_\theta,\lambda(h)).
		$
		\State Update the dual variable
		\[\,
		\lambda_i (h + 1) 
		\;\, = \;\,
		\left[ \lambda_i(h) 
		\, + \,
		\eta \left( \mathbb{E}_{x_0 \,\sim\, q^i, t, x_t}
		\left[\, 
		\norm{\hat{s}_\theta(h)(x_t, t) - \nabla\log q^i(x_t)}^2
		\,\right]  
		\,-\,
		\Tilde{b}^i \right) \right]_+ \text{ for all } i.
		\,\]
		\EndFor
	\end{algorithmic}
\end{algorithm}

Denote $\bar{s}^\star \DefinedAs \hat{s}_{\bar{\theta}^
	\star}$. Thus, $\bar{s}^\star$-induced diffusion model is given by $\bar{p}^\star(\bar{\lambda}^\star)$. Algorithm~\ref{alg:primal-dual} works as a dual ascent method with two natural steps: (i) find a diffusion model with fixed dual variable $\lambda(h)$; and (ii) update the dual variable using the (sub)gradient of the Lagrangian ${\mathcal{L}}_s(\hat{s}_\theta(h),\lambda)$. It is known that Algorithm~\ref{alg:primal-dual} converges to $\bar{\lambda}^\star$ since the dual function $\bar{g}_s(\lambda)$ is concave. However, such convergence in the dual domain doesn't provide optimality guarantee on the primal solution $\bar{p}^\star(\bar{\lambda}^\star)$ due to the non-convexity in parameter space. We next exploit the optimization properties of unparametrized diffusion models in Section~\ref{subsec:unparametrized case} to characterize the optimality of the dual training algorithm. 

\subsection{Optimality analysis of dual training algorithm}\label{subsec:optimality analysis}

We analyze the optimality of $\bar{p}^\star(\bar{\lambda}^\star)$ as measured by its distance to $q_{\text{\normalfont mix}}^{\star}$, i.e., $\text{\normalfont TV}(q_{\text{\normalfont mix}}^{\star},\; \bar p^\star(\bar\lambda^\star))$, where we denote the total variation distance between two probability distributions $p$ and $q$ by $\text{TV}(q, p) \DefinedAs \frac{1}{2}\int |p(x) - q(x)| dx$. We first exploit the convergence analysis of diffusion models, and then characterize the additional error induced by parametrization.

Let us begin with the difference between $\bar{p}^\star(\bar{\lambda}^\star)$ and $q_{\text{mix}}^{(\bar\lambda^\star)}$ at $\bar{\lambda}^\star$. Denote a partial minimizer of the Lagrangian by $\bar p^\star(\lambda) \in \argmin_{\theta\,\in\,\Theta} \bar{\mathcal{L}}_s(\theta, \lambda)$ for $\lambda \geq 0$. Noting that $\minimize_{\theta\,\in\,\Theta} \bar{\mathcal{L}}_s(\theta,\lambda)$ is an unconstrained diffusion problem, we are ready to quantify the difference between $\bar p^\star(\lambda)$ and $q_{\text{mix}}^{(\lambda)}$ for any $\lambda\geq0$ using the convergence theory of diffusion models. To do so, we assume the boundedness of samples from a mixed data distribution $q_{\text{mix}}^{(\lambda)}$ for $\lambda\geq 0$, and a small score estimation error.  

\begin{assumption}[Boundedness of data]\label{ass:bounded samples}
	The data samples generated from $q_{\text{\normalfont mix}}^{(\lambda)}$ are bounded, i.e., $\mathbb{P}\left( \Vert x_0 \Vert \leq T^{c}  \,\vert\, x_0 \sim q_{\text{\normalfont mix}}^{(\lambda)}\right) = 1$ for any $\lambda\geq 0$ and some large constant $c>0$.
\end{assumption}
\begin{assumption}[Boundedness of score estimation error]\label{ass: small approx error for score}
	The score estimator $\hat{s}_\theta(x_t, t)$  estimates the data samples from $q_{\text{\normalfont mix}}^{(\lambda)}$ with bounded score matching error $\varepsilon_{\text{\normalfont score}}$, 
	\[
	\mathbb{E}_{ q_{\text{\normalfont mix}}^{(\lambda)}(x_0),\, t,\, x_t}
	\left[ \,
	\norm{\hat{s}_\theta(x_t, t) - \nabla \log q(x_t)}^2
	\,\right]
	\; \leq \; \varepsilon_{\text{\normalfont score}}^2 
	\]
	for any $\lambda \geq 0$, where $\mathbb{E}_{q_{\text{\normalfont mix}}^{(\lambda)}(x_0),\, t,\, x_t}$ is an expectation over the mixed data distribution $q_{\text{\normalfont mix}}^{(\lambda)}(x_0)$, a uniform distribution over $t$ from $2$ to $T$, and a forward process $q(x_t\,\vert\,x_0)$ given the data sample $x_0$.     
\end{assumption}

Since data samples are bounded, Assumption~\ref{ass:bounded samples} is mild in practice.  Assumption~\ref{ass: small approx error for score} is the typical score matching error that is near zero if the function class $\mathcal{S}_\theta$ is sufficiently rich.

With Assumptions~\ref{ass:bounded samples} and~\ref{ass: small approx error for score}, below we bound the TV distance between $q_{\text{mix}}^{(\lambda)}$ and $\bar{p}^\star(\lambda)$ using the convergence theory of diffusion models from~\cite{li2023towards}; see Appendix~\ref{app: bounded TV distance} for proof. 

\begin{lemma}[Convergence of diffusion model]\label{lem: bounded TV distance}
	Let Assumptions~\ref{ass:bounded samples} and~\ref{ass: small approx error for score} hold. 
	Then, the TV distance from $\bar p^\star(\lambda)$ to $q_{\text{\normalfont mix}}^{(\lambda)}$ is bounded by 
	\begin{equation}\label{eq: bounded TV distance main}
		\text{\normalfont TV}\left(q_{\text{\normalfont mix} }^{(\lambda)},\; \bar p^\star(\lambda) \right) 
		\; \leq \; \sqrt{\frac{1}{2} D_{\text{\normalfont KL}}\left(q_{\text{\normalfont mix}}^{(\lambda)} \,\Vert\, \bar p^\star(\lambda)\right)} 
		\; \lesssim \;
		\frac{d^2\, \log^3T}{\sqrt{T}} \,+\, \sqrt{d}\, \left( \log^2T \right) \varepsilon_{\text{\normalfont score}}.
	\end{equation}
\end{lemma}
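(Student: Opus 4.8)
The plan is to split the bound in \eqref{eq: bounded TV distance main} into two inequalities and handle them separately. The first inequality, $\text{TV}(q_{\text{mix}}^{(\lambda)}, \bar p^\star(\lambda)) \le \sqrt{\tfrac12 D_{\text{KL}}(q_{\text{mix}}^{(\lambda)}\,\Vert\,\bar p^\star(\lambda))}$, is just Pinsker's inequality applied to the joint distributions over $x_{0:T}$ together with the data-processing fact that marginalizing onto the $x_0$-coordinate only decreases TV (and decreases KL), so the joint-distribution Pinsker bound dominates the marginal TV distance. I would state this in one line.

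The substance is the second inequality. First I would observe that $\bar p^\star(\lambda) \in \argmin_{\theta\in\Theta}\bar{\mathcal L}_s(\theta,\lambda)$ is, by the discussion preceding the lemma, exactly the output of an \emph{unconstrained} denoising score-matching problem whose target data distribution is the mixture $q_{\text{mix}}^{(\lambda)}$: indeed $\bar{\mathcal L}_s(\theta,\lambda)$ is (up to the constant $\lambda^\top\tilde b$) a nonnegative combination, with weights $1$ and $\lambda^i$, of the per-distribution score-matching losses against $q$ and the $q^i$'s, and by the mixture identity $\nabla\log q_{\text{mix}}^{(\lambda)}$ is the corresponding mixture of the individual scores in the sense that the weighted sum of the quadratic losses equals (up to constants independent of $\hat s$) the single score-matching loss against $q_{\text{mix}}^{(\lambda)}$, scaled by $1+\lambda^\top\mathbf 1$. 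Hence minimizing $\bar{\mathcal L}_s(\cdot,\lambda)$ over $\theta$ is the parametrized score-matching problem for the mixture data distribution, and its minimizer induces the backward process $\bar p^\star(\lambda)$. Then Assumption~\ref{ass:bounded samples} supplies the sub-polynomial support bound $\|x_0\|\le T^c$ required by the convergence theory, and Assumption~\ref{ass: small approx error for score} supplies the averaged score-matching error $\varepsilon_{\text{score}}$ — both stated uniformly over $\lambda\ge0$, which is what lets the final bound be $\lambda$-free. Invoking the diffusion convergence theorem of \cite{li2023towards} (with the specific $\alpha_t$ schedule and variances $\sigma_p^2(t)=1/\alpha_t-1$ fixed in Section~\ref{subsec:forward backward}, which is exactly the regime of that reference) gives $\text{TV}(q_{\text{mix}}^{(\lambda)},\bar p^\star(\lambda)) \lesssim d^2\log^3 T/\sqrt T + \sqrt d(\log^2 T)\varepsilon_{\text{score}}$. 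Chaining with Pinsker and noting $\text{TV}\le$ itself trivially (so the same bound controls $\sqrt{\tfrac12 D_{\text{KL}}}$ via the reverse direction is not needed — we only need TV below the RHS, and the middle expression is sandwiched) yields \eqref{eq: bounded TV distance main}.

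There are a few bookkeeping points I would be careful about. The convergence theorem is typically stated for a fixed data distribution; here I need it to hold simultaneously for the whole family $\{q_{\text{mix}}^{(\lambda)}\}_{\lambda\ge0}$, which is why Assumptions~\ref{ass:bounded samples} and~\ref{ass: small approx error for score} are phrased with "for any $\lambda\ge0$'' — I would emphasize that the constants $c$ and $\varepsilon_{\text{score}}$ are uniform, so the resulting bound's hidden constants do not depend on $\lambda$. I would also reconcile the score-matching error in Assumption~\ref{ass: small approx error for score} (written with target $\nabla\log q(x_t)$) with what the mixture problem actually needs (target $\nabla\log q_{\text{mix}}^{(\lambda)}(x_t)$); this is a notational artifact, and I would read the assumption as bounding the score-matching error against the relevant mixture score, consistently with the surrounding text. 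Finally, since \cite{li2023towards} states the initialization at $p(x_T)=\mathcal N(0,I)$, I would note that $\bar\alpha_T\approx0$ makes $q_{\text{mix}}^{(\lambda)}(x_T)\simeq\mathcal N(0,I)$ for every $\lambda$, so the initialization mismatch contributes only to the already-present $d^2\log^3T/\sqrt T$ term.

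The main obstacle is the first step — making rigorous that $\argmin_\theta\bar{\mathcal L}_s(\theta,\lambda)$ really is the score-matching solution for the mixture $q_{\text{mix}}^{(\lambda)}$, i.e.\ carefully pushing the algebraic identity "weighted sum of quadratic score losses $=$ (constant)$\cdot$(quadratic loss against the mixture score) $+$ ($\hat s$-independent terms)'' through the ELBO-to-loss reduction of Section~\ref{subsec:elbo} (including the constant shift $v$ and weight $\bar\omega$), so that the convergence theory applies verbatim with $q_{\text{mix}}^{(\lambda)}$ as the data distribution. Once that reduction is in place, the rest is a direct citation of \cite{li2023towards} plus Pinsker.
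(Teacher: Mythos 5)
Your proposal matches the paper's proof in essence and structure: identify $\bar p^\star(\lambda) \in \argmin_\theta \bar{\mathcal L}_s(\theta,\lambda)$ as the solution of an \emph{unconstrained} denoising score-matching problem whose data distribution is $q_{\text{mix}}^{(\lambda)}$, verify the hypotheses of the convergence theory of \cite{li2023towards} using Assumptions~\ref{ass:bounded samples} and~\ref{ass: small approx error for score}, invoke that theorem, and use Pinsker for the left inequality. The paper establishes the first step by reinterpreting the Lagrangian as a weighted ELBO, normalizing it to $-E(p_\theta; q_{\text{mix}}^{(\lambda)})$, and then passing to the score-matching form; you instead argue directly at the level of the quadratic losses, using the identity that a $(1,\lambda_1,\dots,\lambda_m)$-weighted sum of per-distribution score-matching objectives equals $(1+\lambda^\top\mathbf 1)$ times the score-matching objective against $q_{\text{mix}}^{(\lambda)}$ up to $\hat s$-independent constants (expand the squares: the quadratic term mixes trivially, and the cross terms $\int\langle\hat s,\nabla q^i\rangle$ sum to $(1+\lambda^\top\mathbf 1)\int\langle\hat s,\nabla q_{\text{mix}}^{(\lambda)}\rangle$). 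These two routes are algebraically the same content; neither is materially more general than the other, though the direct loss computation you sketch is arguably more self-contained since it avoids tracing the ELBO constants $v,\bar\omega$. Your observation that the target in Assumption~\ref{ass: small approx error for score} is written $\nabla\log q(x_t)$ but must be read as the forward marginal of $q_{\text{mix}}^{(\lambda)}$ is correct and worth making explicit — the paper treats it the same way, silently, in its one-line appeal to Assumption~1 of \cite{li2023towards}.
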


Lemma~\ref{lem: bounded TV distance} states that the TV distance between $q_{\text{mix}}^{(\lambda)}$ and $\bar{p}^\star(\lambda)$ decays to zero with a sublinear rate $O(\frac{1}{\sqrt{T}})$, up to a score matching error $O(\varepsilon_{\text{\normalfont score}})$. When the diffusion time $T$ is large, the TV distance between $q_{\text{mix}}^{(\lambda)}$ and $\bar{p}^\star(\lambda)$ is dominated by the score matching error. Substitution of $\lambda = \bar{\lambda}^\star$ into~\eqref{eq: bounded TV distance main} yields an upper bound on $ \text{\normalfont TV}(q_{\text{\normalfont mix}}^{(\bar\lambda^\star)},\; \bar p^\star(\bar\lambda^\star))$, which is the second term of the inequality
\begin{equation}\label{eq:TV distance triangle inequality}
	\begin{array}{rcl}
		\text{\normalfont TV}\left(q_{\text{\normalfont mix}}^{\star},\; \bar p^\star(\bar\lambda^\star)\right)
		& \leq & 
		\displaystyle
		\text{\normalfont TV}\left(q_{\text{\normalfont mix}}^{\star},\; q_{\text{\normalfont mix}}^{(\bar\lambda^\star)}\right)
		\, + \,
		\text{\normalfont TV}\left(q_{\text{\normalfont mix}}^{(\bar\lambda^\star)},\; \bar p^\star(\bar\lambda^\star)\right).
	\end{array}
\end{equation}
Next, we quantify the gap between $\bar{\lambda}^\star$ and $\lambda^\star$, which lets us bound the first term on the RHS of~\eqref{eq:TV distance triangle inequality} and complete the optimality analysis. To analyze the parametrized optimal dual variable $\bar{\lambda}^\star$, we introduce the richness of the parametrized class $\mathcal{S}_\theta$ and redundancy of constraints at $\hat{s}^\star$ below.

\begin{assumption}[Richness of parametrization]\label{ass:richness}
	For any function $\hat{s} \in \mathcal{S}$, there exists parameter $\theta \in \Theta$ such that $\Vert \hat{s}_\theta - \hat{s}\Vert_{L_2} \leq \nu$, where $\norm{\cdot}_{L_2}$ is with respect to the forward process.
\end{assumption}

\begin{assumption}[Redundancy of constraints]\label{ass:redundancy}
	There exists $\sigma >0$ such that
	\begin{equation}
		\inf_{\Vert\lambda\Vert \,=\, 1}\; \left\Vert\,
		\sum_{i \,=\, 1}^m 
		\lambda_i\, 
		\nabla_{\hat{s}} \, \mathbb{E}_{q^i(x_0),\, t,\, x_t}
		\left[\,
		\hat{s}^\star(x_t, t) - \nabla \log q(x_t)
		\,\right] 
		\,\right\Vert_{L_2}
		\;\geq \;
		\sigma
	\end{equation}
	where $\nabla_{\hat{s}}$ is the Fréchet derivative over the function $\hat{s}$ and $\hat{s}^\star$ is a solution to Problem~\eqref{eq:KL-constrained likelihood maximization unparametrization Elbo MSE}.
\end{assumption}

Assumption~\ref{ass:richness} is mild since the gap is small for expressive neural networks~\cite{mei2023deep,han2024neural}. Assumption~\ref{ass:redundancy} captures the linear independence of constraints, which is similarly used in optimization~\cite{bertsekas2016nonlinear}.

Due to Assumption~\ref{ass:bounded samples}, we set the function class $\mathcal{S}$ to be bounded $\norm{\,\hat{s}\,}_{L_2} \leq T^c \DefinedAs R$. Using Problem~\eqref{eq:KL-constrained likelihood maximization unparametrization Elbo MSE}, we prove that the unparametrized dual function $g_s(\lambda)$ is differentiable, and strongly-concave over $\mathcal{H}$ with parameter $\mu$, where $\mathcal{H}\DefinedAs \{\gamma \lambda^\star + (1-\gamma) \bar\lambda^\star, \gamma \in [0, 1] \}$ and $\mu \DefinedAs \left({\sigma}/\left({1+\max\left(\norm{\lambda^\star}_1, \norm{\bar\lambda^\star}_1\right)}\right)\right)^2$,which leads to Lemma~\ref{lem:dual un/parametrized gap}; see Appendix~\ref{app:dual un/parametrized gap} for their proofs.

\begin{lemma}\label{lem:dual un/parametrized gap}
	Let Assumptions~\ref{ass:richness} and~\ref{ass:redundancy} hold. Then, $\norm{\bar\lambda^\star - \lambda^\star}^2 \leq \frac{8}{\mu} R\left(1+\norm{\bar\lambda^\star}_1\right) \nu$.
\end{lemma}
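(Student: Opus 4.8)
The plan is to compare the unparametrized and parametrized dual functions $g_s$ and $\bar g_s$ of Problems~\eqref{eq:KL-constrained likelihood maximization unparametrization Elbo MSE} and~\eqref{eq:KL-constrained likelihood maximization parametrization Elbo MSE} at the two maximizers $\lambda^\star$ and $\bar\lambda^\star$, playing three facts against each other: (a) a one-sided pointwise comparison $\bar g_s(\lambda)\ge g_s(\lambda)$ for every $\lambda\ge 0$, which holds because $\mathcal{S}_\theta\subseteq\mathcal{S}$ and each dual value is an infimum of the \emph{same} Lagrangian over the two nested sets; (b) a matching pointwise upper bound $\bar g_s(\lambda)\le g_s(\lambda)+4R\,(1+\norm{\lambda}_1)\,\nu$, obtained from Assumption~\ref{ass:richness}; and (c) the $\mu$-strong concavity of $g_s$ over the segment $\calH=\{\gamma\lambda^\star+(1-\gamma)\bar\lambda^\star:\gamma\in[0,1]\}$, already established above using Assumption~\ref{ass:redundancy}, together with first-order optimality of $\lambda^\star$ for $g_s$ and of $\bar\lambda^\star$ for $\bar g_s$ over the cone $\{\lambda\ge 0\}$.

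For step (b), let $\hat s^\star(\lambda)\DefinedAs\argmin_{\hat s\in\mathcal S}\mathcal L_s(\hat s,\lambda)$, so $g_s(\lambda)=\mathcal L_s(\hat s^\star(\lambda),\lambda)$; by Assumption~\ref{ass:richness} choose $\theta$ with $\norm{\hat s_\theta-\hat s^\star(\lambda)}_{L_2}\le\nu$, whence $\bar g_s(\lambda)\le\bar{\mathcal L}_s(\theta,\lambda)=\mathcal L_s(\hat s_\theta,\lambda)$. It remains to bound $\mathcal L_s(\hat s_\theta,\lambda)-\mathcal L_s(\hat s^\star(\lambda),\lambda)$. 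The map $\hat s\mapsto\mathcal L_s(\hat s,\lambda)$ is a $(1+\norm{\lambda}_1)$-weighted sum of squared $L_2$-distances whose targets ($\nabla\log q$, $\nabla\log q^i$) and whose argument both have $L_2$ norm at most $R$, so on the ball $\{\norm{\hat s}_{L_2}\le R\}$ its Fréchet derivative has $L_2$ norm at most $2\cdot 2R\,(1+\norm{\lambda}_1)=4R(1+\norm{\lambda}_1)$; using convexity of $\mathcal L_s(\cdot,\lambda)$ and the gradient inequality at $\hat s_\theta$ gives $\mathcal L_s(\hat s_\theta,\lambda)-\mathcal L_s(\hat s^\star(\lambda),\lambda)\le 4R(1+\norm{\lambda}_1)\,\norm{\hat s_\theta-\hat s^\star(\lambda)}_{L_2}\le 4R(1+\norm{\lambda}_1)\nu$, which is (b).

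To conclude, instantiate at $\lambda=\bar\lambda^\star$. Strong concavity of $g_s$ on $\calH$ together with $\langle\nabla g_s(\lambda^\star),\bar\lambda^\star-\lambda^\star\rangle\le 0$ (optimality of $\lambda^\star$ over $\{\lambda\ge 0\}$, since $\bar\lambda^\star\ge 0$) gives $g_s(\lambda^\star)-g_s(\bar\lambda^\star)\ge\frac{\mu}{2}\norm{\bar\lambda^\star-\lambda^\star}^2$. On the other hand, applying (b) at $\bar\lambda^\star$, then $\bar g_s(\bar\lambda^\star)\ge\bar g_s(\lambda^\star)$ (optimality of $\bar\lambda^\star$ for $\bar g_s$), then (a) at $\lambda^\star$, we get $g_s(\lambda^\star)-g_s(\bar\lambda^\star)\le g_s(\lambda^\star)-\bar g_s(\bar\lambda^\star)+4R(1+\norm{\bar\lambda^\star}_1)\nu\le g_s(\lambda^\star)-\bar g_s(\lambda^\star)+4R(1+\norm{\bar\lambda^\star}_1)\nu\le 4R(1+\norm{\bar\lambda^\star}_1)\nu$. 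Chaining the two bounds yields $\frac{\mu}{2}\norm{\bar\lambda^\star-\lambda^\star}^2\le 4R(1+\norm{\bar\lambda^\star}_1)\nu$, i.e.\ $\norm{\bar\lambda^\star-\lambda^\star}^2\le\frac{8}{\mu}R(1+\norm{\bar\lambda^\star}_1)\nu$, as claimed.

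The main obstacle is step (b): making the ``Lipschitz-on-a-ball'' estimate rigorous needs care, because the $L_2$ norm in Assumption~\ref{ass:richness} is taken with respect to the forward process of $q$, whereas the constraint terms of $\mathcal L_s$ are expectations under the forward processes of the $q^i$. One must argue that the $\nu$-approximation remains controlled under those reference measures — which is where the convergent noise schedule (so that for $t\ge 2$ all latent marginals have full support and are mutually absolutely continuous), and if needed the disjoint-support structure of the data distributions, enter — and also check that $\hat s^\star(\lambda)$ lies in the interior of $\{\norm{\hat s}_{L_2}\le R\}$ so its first-order condition and the derivative bound apply (the $\nu$-ball around it is then essentially inside the radius-$R$ ball). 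The remaining pieces — the nesting argument for (a), and step (c) together with the final chaining — are routine once the strong concavity of $g_s$ on $\calH$ is granted, which the text has already reduced to Assumptions~\ref{ass:richness}–\ref{ass:redundancy}.
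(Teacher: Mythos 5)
Your proposal is correct and follows essentially the same route as the paper: strong concavity of $g_s$ on the segment $\calH$ plus first-order optimality of $\lambda^\star$ over $\{\lambda\ge 0\}$ (which the paper spells out as complementary slackness and feasibility, but is the same statement) gives the $\frac{\mu}{2}\norm{\bar\lambda^\star-\lambda^\star}^2$ lower bound, and the chain $\bar g_s(\bar\lambda^\star)\le g_s(\bar\lambda^\star)+4R(1+\norm{\bar\lambda^\star}_1)\nu$, $\bar g_s(\bar\lambda^\star)\ge \bar g_s(\lambda^\star)\ge g_s(\lambda^\star)$ gives the matching upper bound, exactly as in the paper's proof with its Lemma on the parametrization gap. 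Your side observation that the $L_2$ norm in Assumption~\ref{ass:richness} is nominally under $q$'s forward process while the constraint terms of $\mathcal L_s$ average under the $q^i$'s is a fair one — the paper's proof of the parametrization gap silently applies the $\nu$-approximation under each $q^i$ as well, so Assumption~\ref{ass:richness} should really be read as uniform over the relevant forward processes.
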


Since $\text{\normalfont TV}\left(q_{\text{\normalfont mix}}^{\star},\; q_{\text{\normalfont mix}}^{(\bar\lambda^\star)}\right)$ is bounded by $\norm{\bar{\lambda}^\star-\lambda^\star}_1$ (see Appendix~\ref{app:dual un/parametrized gap}), application of Lemma~\ref{lem: bounded TV distance} and Lemma~\ref{lem:dual un/parametrized gap} to~\eqref{eq:TV distance triangle inequality} leads to Theorem~\ref{thm: bounded TV distance ideal}; see Appendix~\ref{app: bounded TV distance ideal} for proof.

\begin{theorem}[Optimality of constrained diffusion model]\label{thm: bounded TV distance ideal}
	Let Assumptions~\ref{ass:feasibility_KL}--\ref{ass:redundancy} hold. Then, the total variation distance between $\bar p^\star(\bar\lambda^\star)$ and $q_{\text{\normalfont mix}}^{\star}$ is upper bounded by
	\[
	\text{\normalfont TV}\left(q_{\text{\normalfont mix}}^{\star},\; \bar p^\star(\bar\lambda^\star)\right)  
	\; \lesssim \;
	\frac{d^2 \,\log^3T}{\sqrt{T}} 
	\,+\,
	\sqrt{\frac{8}{\mu} \,m\,  R \left(1+\norm{\bar\lambda^\star}_1\right) \nu}
	\, + \,
	\sqrt{d}\, \left( \log^2T  \right)\, \varepsilon_{\text{\normalfont score}}.
	\]
\end{theorem}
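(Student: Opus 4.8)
The plan is to bound, separately, the two terms on the right-hand side of the triangle inequality~\eqref{eq:TV distance triangle inequality}, recalling that $q_{\text{\normalfont mix}}^{\star}=q_{\text{\normalfont mix}}^{(\lambda^\star)}$ by Corollary~\ref{cor:qmix is solution}. The term $\text{\normalfont TV}(q_{\text{\normalfont mix}}^{(\bar\lambda^\star)},\,\bar p^\star(\bar\lambda^\star))$ is a purely ``diffusion'' error: it measures how far the parametrized model produced by the unconstrained problem $\min_\theta \bar{\mathcal{L}}_s(\theta,\bar\lambda^\star)$ is from the data distribution $q_{\text{\normalfont mix}}^{(\bar\lambda^\star)}$ it is trying to fit. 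The term $\text{\normalfont TV}(q_{\text{\normalfont mix}}^{(\lambda^\star)},\,q_{\text{\normalfont mix}}^{(\bar\lambda^\star)})$ is a purely ``dual'' error: it measures how much the target mixture moves when the optimal multiplier moves from $\lambda^\star$ (unparametrized) to $\bar\lambda^\star$ (parametrized). Both are already controlled by earlier results, so the proof is essentially an assembly step.

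For the diffusion term, I would observe that $\bar p^\star(\lambda)\in\argmin_{\theta\in\Theta}\bar{\mathcal{L}}_s(\theta,\lambda)$ is exactly the solution of an unconstrained diffusion problem whose data distribution is $q_{\text{\normalfont mix}}^{(\lambda)}$, so Lemma~\ref{lem: bounded TV distance} applies verbatim; setting $\lambda=\bar\lambda^\star$ in~\eqref{eq: bounded TV distance main} gives $\text{\normalfont TV}(q_{\text{\normalfont mix}}^{(\bar\lambda^\star)},\,\bar p^\star(\bar\lambda^\star))\lesssim d^2\log^3T/\sqrt T+\sqrt d\,(\log^2T)\,\varepsilon_{\text{\normalfont score}}$, and here it matters that Assumptions~\ref{ass:bounded samples} and~\ref{ass: small approx error for score} are phrased uniformly over $\lambda\ge0$, so the hidden constant does not depend on $\bar\lambda^\star$. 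For the dual term, I would write the mixture as a convex combination $q_{\text{\normalfont mix}}^{(\lambda)}=\sum_{j=0}^m w_j(\lambda)\,q^{(j)}$ with $q^{(0)}\DefinedAs q$, $q^{(i)}\DefinedAs q^i$, $w_0(\lambda)=1/(1+\lambda^\top\mathbf 1)$ and $w_i(\lambda)=\lambda_i/(1+\lambda^\top\mathbf 1)$, and use the elementary inequality $\text{\normalfont TV}(\sum_j w_j q^{(j)},\sum_j w_j'q^{(j)})\le\tfrac12\|w-w'\|_1$ together with a Lipschitz bound $\|w(\lambda)-w(\bar\lambda)\|_1\lesssim\|\lambda-\bar\lambda\|_1$ on the nonnegative orthant. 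This turns the dual term into a bound on $\|\lambda^\star-\bar\lambda^\star\|_1\le\sqrt m\,\|\lambda^\star-\bar\lambda^\star\|_2$, and Lemma~\ref{lem:dual un/parametrized gap} then gives $\text{\normalfont TV}(q_{\text{\normalfont mix}}^{(\lambda^\star)},\,q_{\text{\normalfont mix}}^{(\bar\lambda^\star)})\lesssim\sqrt{(8/\mu)\,m\,R\,(1+\|\bar\lambda^\star\|_1)\,\nu}$. Substituting both bounds into~\eqref{eq:TV distance triangle inequality} and collecting constants yields the stated inequality.

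The main obstacle --- and really the only computation not already done in the cited lemmas --- is the Lipschitz estimate $\|w(\lambda)-w(\bar\lambda)\|_1\lesssim\|\lambda-\bar\lambda\|_1$ for the mixing-weight map. Because the $w_j$ are ratios, one has to expand
\[
w_i(\lambda)-w_i(\bar\lambda)=\frac{(1+\bar\lambda^\top\mathbf 1)(\lambda_i-\bar\lambda_i)+\bar\lambda_i\bigl(\bar\lambda^\top\mathbf 1-\lambda^\top\mathbf 1\bigr)}{(1+\lambda^\top\mathbf 1)(1+\bar\lambda^\top\mathbf 1)},
\]
sum over $i=1,\ldots,m$ (and handle $w_0$ the same way), and use $1+\lambda^\top\mathbf 1\ge1$ together with $\|\bar\lambda\|_1/(1+\|\bar\lambda\|_1)<1$ to extract an absolute constant; a direct count gives $\|w(\lambda)-w(\bar\lambda)\|_1\le3\|\lambda-\bar\lambda\|_1$. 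Everything else is the triangle inequality, the direct invocations of Lemmas~\ref{lem: bounded TV distance} and~\ref{lem:dual un/parametrized gap}, and the $\ell_1$--$\ell_2$ norm conversion that produces the $\sqrt m$ factor. One bookkeeping point: the theorem records the explicit constant $\sqrt{8/\mu}$ inside the second square root, so the Lipschitz constant of the mixture map in total variation should either be absorbed into $\lesssim$ or, with a suitable normalization, sharpened to $1$; in either case the asymptotic statement is unchanged.
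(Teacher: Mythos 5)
Your proof is correct and follows essentially the same route as the paper: the triangle inequality~\eqref{eq:TV distance triangle inequality}, Lemma~\ref{lem: bounded TV distance} at $\lambda=\bar\lambda^\star$ for the diffusion term, and Lemma~\ref{lem:dual un/parametrized gap} together with $\norm{\cdot}_1\le\sqrt m\,\norm{\cdot}_2$ for the dual term. The only variation is in how you control $\text{\normalfont TV}(q_{\text{\normalfont mix}}^{(\lambda^\star)},\,q_{\text{\normalfont mix}}^{(\bar\lambda^\star)})$: you re-derive this via a Lipschitz estimate on the mixing-weight map $w(\lambda)$, whereas the paper's Lemma~\ref{lem:TV distance} obtains the sharper constant $\text{\normalfont TV}\le\norm{\lambda-\bar\lambda^\star}_1$ by a direct manipulation of the mixture densities inside the TV integral; as you correctly observe, the factor of $3$ versus $1$ is immaterial under $\lesssim$.
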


Theorem~\ref{thm: bounded TV distance ideal} states that the TV distance between $\bar{p}^\star(\bar{\lambda^\star})$ and $q_{\text{mix}}^\star$ decays to zero with a sublinear rate $O(\frac{1}{\sqrt{T}})$, up to a parametrization gap $O(\sqrt{\nu})$ and a prediction error $O(\varepsilon_{\text{\normalfont score}})$. When the parametrization is rich enough, the parametrization gap $\nu$ and the prediction error $O(\varepsilon_{\text{\normalfont score}})$ are nearly zero. In this case, if the diffusion time $T$ is large, then  $\bar{p}^\star(\bar{\lambda^\star})$ is close to $q_{\text{mix}}^{\star}$ in TV distance, which recovers the ideal optimal constrained model in the unparametrized case in Section~\ref{subsec:unparametrized case}.

\subsection{Practical dual training algorithm}\label{subsec:primal-dual training}

Having established the optimality of our dual training method, we futher turn Algorithm~\ref{alg:primal-dual} into a practical algorithm. First, we relax the computation of a diffusion model $\hat{s}_{\theta} (h)$ in line~4 of Algorithm~\ref{alg:primal-dual} to be approximate: ${\mathcal{L}}_s(\hat{s}_{\theta} (h), \lambda(h))
\leq   
\displaystyle
\min_{\theta\,\in\,\Theta}
{\mathcal{L}}_s(\hat{s}_{\theta}, \lambda(h))
+
\varepsilon_{\text{approx}}^2$,
where $\varepsilon_{\text{approx}}^2$ is an approximation error of training a diffusion model given $\lambda(h)$. Second, we replace the gradient in line 5 of Algorithm~\ref{alg:primal-dual} by a stochastic gradient $\hat{\mathbb{E}}_{x_0 \,\sim\, q^i,\, t,\, x_t}
\big[\, 
\norm{\hat{s}_\theta(x_t, t) - \nabla \log q(x_t)}^2
\,\big]$, which enables Algorithm~\ref{alg:primal-dual} to be a stochastic algorithm, where  $\hat{\mathbb{E}}_{x_0 \,\sim\, q^i,\, t,\, x_t}$ is an unbiased estimate of $ 
\mathbb{E}_{x_0 \, \sim\, q^i,\, t,\, x_t}$. To analyze this approximate and stochastic variant of Algorithm~\ref{alg:primal-dual}, it is useful to introduce the maximum parametrized dual function in history up to step $h$ by $\bar{g}_{\text{best}}(h) \DefinedAs \max_{h'\,\leq\, h} \bar{g}_s(\lambda(h'))$, and an upper bound of the second-order moment of stochastic gradient  
$
S^2
\DefinedAs 
\sum_{i\,=\,1}^m \mathbb{E} \big[\,\big(\,
\hat{\mathbb{E}}_{x_0\, \sim\, q^i,\, t,\, x_t}
\big[\, 
\norm{\hat{s}_\theta(h)(x_t, t) - \nabla q(x_t)}^2
\,\big]  - \Tilde{b}^i
\,\big)^2\,\vert\, \lambda(h)
\,\big].
$  

Denote the dual variable that achieves $\bar{g}_{\text{best}} (h)$ by  $\lambda_{\text{\normalfont best}}$. To bound the TV distance between $\bar p^\star(\bar\lambda_{\text{\normalfont best}})$ and $q_{\text{\normalfont mix}}^{\star}$, we check the TV distance between $q_{\text{\normalfont mix}}^{(\bar\lambda_{\text{\normalfont best}})}$ and  $\bar p^\star(\bar\lambda_{\text{\normalfont best}})$ using Lemma~\ref{lem: bounded TV distance}. The rest is to analyze the convergence of $\bar\lambda_{\text{\normalfont best}}$ to $\lambda^\star$ via application of martingale convergence. We defer their proofs to Appendix~\ref{app:algorithm output vs unparametrized solution} and present the optimality of $\bar p^\star(\bar\lambda_{\text{\normalfont best}})$ in Theorem~\ref{thm:algorithm output vs unparametrized solution}.

\begin{theorem}[Optimality of approximate constrained diffusion model]\label{thm:algorithm output vs unparametrized solution}
	Let Assumptions~\ref{ass:feasibility_KL}--\ref{ass:redundancy} hold. Then, the total variation distance between $\bar p^\star(\bar\lambda_{\text{\normalfont best}})$ and $q_{\text{\normalfont mix}}^{\star}$ is upper bounded by
	\[
	\text{\normalfont TV}\left(q_{\text{\normalfont mix}}^{\star},\; \bar p^\star(\bar\lambda_{\text{\normalfont best}})\right)  
	\; \lesssim \;
	\frac{d^2 \,\log^3T}{\sqrt{T}} 
	\, + \, 
	\frac{8 R \left(1+\norm{\bar\lambda_{\text{\normalfont best}}}_1\right)}{\mu}\nu
	\, + \,
	\sqrt{d} \,\left(\log^2T\right) \varepsilon_{\text{\normalfont score}} 
	\, + \, 
	\frac{2}{\mu}\,\varepsilon_{\text{\normalfont approx}}^2
	\, + \,
	\frac{\eta \, S^2}{\mu}.
	\]
\end{theorem}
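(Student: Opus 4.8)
### Proof Proposal for Theorem~\ref{thm:algorithm output vs unparametrized solution}

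The plan is to follow the same decomposition strategy as in the proof of Theorem~\ref{thm: bounded TV distance ideal}, but now tracking the two additional sources of error introduced by the practical algorithm: the approximation error $\varepsilon_{\text{approx}}$ in the primal step (line~4) and the variance $S^2$ of the stochastic dual gradient (line~5). I would start from the triangle inequality analogous to~\eqref{eq:TV distance triangle inequality}, namely
\[
\text{\normalfont TV}\left(q_{\text{\normalfont mix}}^{\star},\; \bar p^\star(\bar\lambda_{\text{\normalfont best}})\right)
\;\leq\;
\text{\normalfont TV}\left(q_{\text{\normalfont mix}}^{\star},\; q_{\text{\normalfont mix}}^{(\bar\lambda_{\text{\normalfont best}})}\right)
\, + \,
\text{\normalfont TV}\left(q_{\text{\normalfont mix}}^{(\bar\lambda_{\text{\normalfont best}})},\; \bar p^\star(\bar\lambda_{\text{\normalfont best}})\right),
\]
where the second term is handled verbatim by Lemma~\ref{lem: bounded TV distance} evaluated at $\lambda = \bar\lambda_{\text{\normalfont best}}$, contributing the $\frac{d^2\log^3 T}{\sqrt T}$ and $\sqrt d (\log^2 T)\varepsilon_{\text{score}}$ terms. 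The first term, as noted just before Theorem~\ref{thm: bounded TV distance ideal}, is bounded by $\norm{\bar\lambda_{\text{\normalfont best}} - \lambda^\star}_1$ (the mixture map is Lipschitz in $\lambda$), so the crux is to control this distance in the presence of approximation and stochastic noise.

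The key step is a new convergence bound for the stochastic, approximate dual ascent. I would use strong concavity of the unparametrized dual function $g_s$ on the segment $\mathcal{H}$ with modulus $\mu$ (established in the lead-up to Lemma~\ref{lem:dual un/parametrized gap}), together with the standard potential-function argument for projected stochastic gradient ascent: writing $\Delta(h) = \norm{\lambda(h) - \lambda^\star}^2$, expand $\E[\Delta(h+1)\mid\lambda(h)]$, use non-expansiveness of the projection onto $\lambda\geq 0$, and bound the inner-product term via strong concavity. The two modifications relative to the exact case are: (i) the gradient used is that of the \emph{parametrized} Lagrangian evaluated at an $\varepsilon_{\text{approx}}^2$-suboptimal primal point rather than the exact partial minimizer, which introduces an additive bias controlled by $\varepsilon_{\text{approx}}^2$ and by the parametrization gap $\nu$ via Lemma~\ref{lem:dual un/parametrized gap}-type reasoning; and (ii) the gradient is stochastic with second moment bounded by $S^2$, which yields the usual $\eta S^2$ term. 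Telescoping and using $\bar g_{\text{best}}(H)$ to extract the best iterate (so that a Cesàro/minimum argument applies even though individual iterates need not converge), I would arrive at a bound of the form $\norm{\bar\lambda_{\text{\normalfont best}} - \lambda^\star}^2 \lesssim \frac{1}{\mu}\big(R(1+\norm{\bar\lambda_{\text{\normalfont best}}}_1)\nu + \varepsilon_{\text{approx}}^2 + \eta S^2\big)$, possibly with a vanishing $\frac{\Delta(1)}{\eta H}$ optimization term that I would absorb or assume driven to zero as $H\to\infty$. Relating the squared-norm bound on $\bar\lambda_{\text{\normalfont best}}$ back through Lemma~\ref{lem:dual un/parametrized gap} and the $\ell_1$-to-$\ell_2$ comparison (with the factor $m$) gives the first-term contribution $\frac{8R(1+\norm{\bar\lambda_{\text{\normalfont best}}}_1)}{\mu}\nu + \frac{2}{\mu}\varepsilon_{\text{approx}}^2 + \frac{\eta S^2}{\mu}$ as stated.

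The main obstacle I anticipate is the clean separation of the parametrization bias from the stochastic-approximation bias inside the dual ascent recursion. In the exact-unparametrized analysis, $\nabla g_s(\lambda(h))$ is exactly the constraint slack at the true minimizer; here the algorithm computes constraint slacks at an $\varepsilon_{\text{approx}}$-approximate \emph{parametrized} minimizer $\hat s_\theta(h)$, so I must argue that this is a bounded perturbation of $\nabla \bar g_s(\lambda(h))$, and then that $\nabla \bar g_s$ is itself close to $\nabla g_s$ uniformly on $\mathcal{H}$ — the latter essentially re-running the Danskin/envelope argument behind Lemma~\ref{lem:dual un/parametrized gap} but now with suboptimality $\varepsilon_{\text{approx}}^2$ added to the parametrization gap $\nu$. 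A secondary technical point is ensuring the iterates (and hence $\bar\lambda_{\text{\normalfont best}}$) stay within a compact set containing $\mathcal{H}$, so that strong concavity with the stated $\mu$ is legitimately applicable along the trajectory; this should follow from Assumption~\ref{ass:redundancy} bounding $\norm{\lambda^\star}$ and a standard argument that dual iterates of a feasible problem remain bounded, but it must be stated carefully. Everything else — the triangle inequality, the invocation of Lemma~\ref{lem: bounded TV distance}, and the Lipschitz bound on the mixture map — is routine bookkeeping.
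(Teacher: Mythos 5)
Your overall decomposition is correct and matches the paper's: triangle inequality, then \pref{lem: bounded TV distance} at $\lambda=\bar\lambda_{\text{best}}$ for one term and an $\ell_1$-to-$\ell_2$ comparison (\pref{lem:TV distance}) for the other. Where you diverge — and where the proposal has a soft spot — is the dual-iterate analysis. You propose to telescope $\E[\|\lambda(h+1)-\lambda^\star\|^2\mid\lambda(h)]$ directly against the \emph{unparametrized} optimum $\lambda^\star$, using concavity to control the cross term $\E[\hat f(h)]^\top(\lambda(h)-\lambda^\star)$. But $\E[\hat f(h)]$ is (up to $\varepsilon_{\text{approx}}^2$) a supergradient of the \emph{parametrized} dual $\bar g$, so concavity yields $\E[\hat f(h)]^\top(\lambda(h)-\lambda^\star)\le \bar g(\lambda(h))-\bar g(\lambda^\star)$, and since $\bar g(\lambda^\star)\le\bar D^\star$ this quantity has no useful sign without further work. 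You gesture at this with ``re-running the Danskin/envelope argument,'' but that is exactly where the accounting gets tangled, and it is the reason the paper does \emph{not} take this route.

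The paper instead runs the one-step recursion against the parametrized optimum $\bar\lambda^\star$ (\pref{lem:dual iterate gap parametrization}), where the concavity bound cleanly gives $\bar g(\lambda(h))-\bar D^\star$. It then decouples the two error sources in two further steps: a supermartingale convergence argument (\pref{lem:best dual iterate}) — not telescoping — showing $\lim_h \bar g_{\text{best}}(h)\ge \bar D^\star - (\tfrac{\eta S^2}{2}+\varepsilon_{\text{approx}}^2)$ almost surely, which is how the $\tfrac{\Delta(1)}{\eta H}$ transient you worried about is eliminated without a ``absorb as $H\to\infty$'' hand-wave; and then strong concavity of $g$ on the segment between $\bar\lambda_{\text{best}}$ and $\lambda^\star$ together with the parametrization gap \pref{lem:dual parametrization gap} (\pref{lem:dual un/parametrized gap best}) converts the dual-value suboptimality into $\|\bar\lambda_{\text{best}}-\lambda^\star\|^2\le \tfrac{2}{\mu}\big(\tfrac{\eta S^2}{2}+\varepsilon_{\text{approx}}^2+4R(1+\|\bar\lambda_{\text{best}}\|_1)\nu\big)$. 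Your final bound form coincidentally matches the theorem, but the route you sketch to get there — direct recursion against $\lambda^\star$ plus telescoping — would require nontrivial repairs (sign control of the cross term, disposal of the $1/H$ term) that the paper's two-step supermartingale structure is designed precisely to avoid. Your point about keeping iterates in a compact set so the modulus $\mu$ applies is fair, but the paper sidesteps it by invoking strong concavity only on the one-dimensional segment between $\bar\lambda_{\text{best}}$ and $\lambda^\star$, not along the whole trajectory.
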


Theorem~\ref{thm:algorithm output vs unparametrized solution} states that the TV distance between $\bar{p}^\star(\bar{\lambda}_{\text{best}})$ and $q_{\text{mix}}^\star$ decays to zero with a sublinear rate $O(\frac{1}{\sqrt{T}})$, up to a parametrization gap $O(\nu)$, a score matching error $O(\varepsilon_{\text{\normalfont score}})$, an approximation error $O(\varepsilon_{\text{\normalfont approx}})$, and stepsize $O(\eta)$. When the parametrization is rich enough, the parametrization gap $\nu$ and the score matching error $O(\varepsilon_{\text{\normalfont score}})$ are near zero. Thus, if the diffusion time $T$ is large, then the closeness of  $\bar{p}^\star(\bar{\lambda}_{\text{best}})$ to $q_{\text{mix}}^{\star}$ in TV distance is governed by the appproximation error and stepsize. 

\section{Computational experiments}\label{sec:experiments}

We demonstrate the effectiveness of constrained diffusion models trained by our dual training algorithm in two constrained settings in Section~\ref{subsec:fair image generation}; see Appendix~\ref{app:implementation} for experimental details.

\noindent\textbf{Fairness to underrepresented classes.} We train constrained diffusion models over three datasets: MNIST digits~\cite{yann2020the}, Celeb-A faces~\cite{liu2018large}, and Image-Net\footnote[1]{We use a subset of ten classes from Image-Net: `Tench Fish', `English Springer Dog', `Cassette Player', `Chain saw', `Church', `French Horn', `Garbage Truck', `Gas Pump', `Golf Ball', `Parachute.'}~\cite{ILSVRC15}. For MNIST and Image-Net, we create a dataset for the distribution $q$ in~\eqref{eq:KL-constrained likelihood maximization parametrization Elbo MSE} by taking a subset of the dataset with equal number of samples from each class. Then we make some classes under-represented by removing their samples. For each distribution $q^i$, we use samples from the associated underrepresented class. For Celeb-A, our approach is similar to MNIST except we don't remove any samples due to the existence of class imbalance in the dataset (58\% female vs 42\% male). For Image-Net, since the images are of high resolution, we employ the latent diffusion scheme~\cite{rombach2022highresolution} by imposing distribution constraints in latent space. Figures~\ref{fig:fairness_MNIST}--\ref{fig:fairness_Image-Net} show that our constrained model samples more often from the underrepresented classes (MNIST: 4, 5, 7; Celeb-A: male; Image-Net: `Cassette player', `French horn', and `Golf ball'), leading to a more uniform sampling over all classes. This reflects our theoretical insights on promoting fairness for minority classes (see Section~\ref{subsec:fair image generation}). Quantitatively, we observe \emph{fairly lower FID scores} when training over \emph{the same dataset but with constraints} (see Appendix~\ref{app:implementation} for further discussion on FID scores). Furthermore, our Image-Net experiment shows that our approach extends to the state-of-the-art diffusion models in latent space.

\begin{figure}[h!]
	\centering
	\begin{tabular}{rccc}
		{\rotatebox{90}{\;\;\;\;
				\;\;
				frequencies}}
		&
		
		\tabfigure{width=0.38\textwidth}{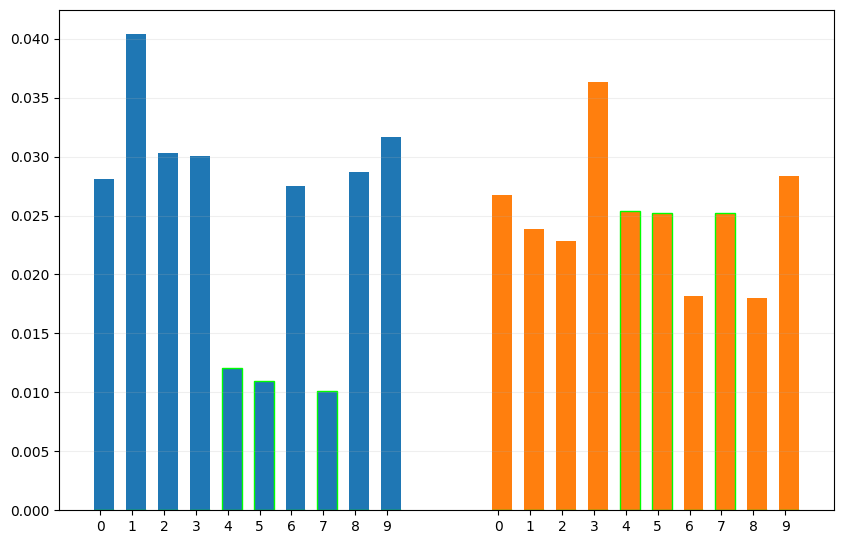}
		\;\;
		& \;\;
		\includegraphics[width=0.22\linewidth]{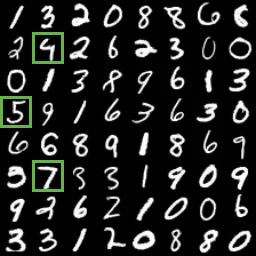}
		&
		\includegraphics[width=0.22\linewidth]{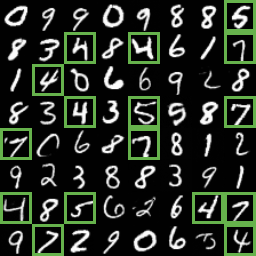}
		\\
		&
		classes
		& 
		& 
	\end{tabular}
	\caption{Generation performance comparison of constrained and unconstrained models that are trained on MNIST with three minorities: 4, 5, 7. (\,Left\,) Frequencies of ten digits that are generated by an unconstrained model (\crule[myblue]{0.5cm}{0.15cm}) and our constrained model (\crule[myorange]{0.5cm}{0.15cm}); (\,Middle\,) Generated digits from unconstrained model (\,FID 15.9\,); (\,Right\,) Generated digits from our constrained model (\,FID 13.4\,).}
	\label{fig:fairness_MNIST}
\end{figure}

\begin{figure}[h!]
	\centering
	\begin{tabular}{rccc}
		{\rotatebox{90}{\;\;\;\;
				\;\;
				frequencies}}
		&
		
		\tabfigure{width=0.38\textwidth}{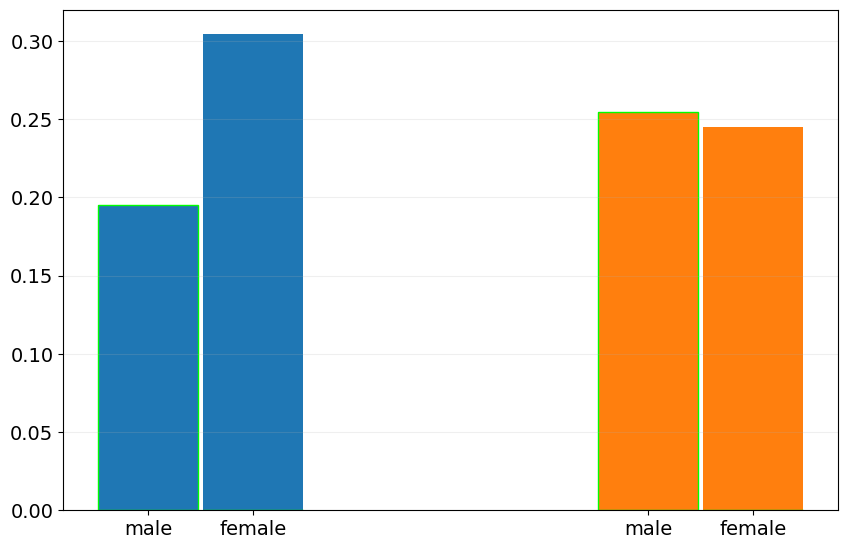}
		\;\;
		& \;\;
		\includegraphics[width=0.22\linewidth]{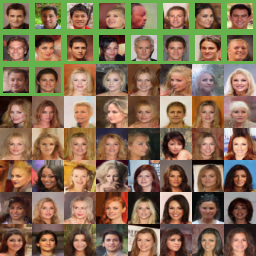}
		&
		\includegraphics[width=0.22\linewidth]{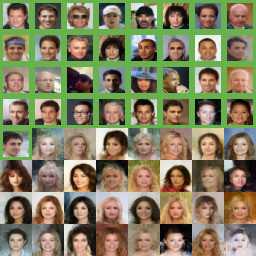}
		\\
		&
		classes
		& 
		
		& 
	\end{tabular}
	\caption{Generation performance comparison of constrained and unconstrained models that are trained on Celeb-A with male minority. (\,Left\,) Frequencies of two genders that are generated by an unconstrained model (\crule[myblue]{0.5cm}{0.22cm}) and our constrained model (\crule[myorange]{0.5cm}{0.22cm}); (\,Middle\,) Generated faces from unconstrained model (\,FID 19.6\,); (\,Right\,) Generated faces from our constrained model (\,FID 11.6\,).}
	\label{fig:fairness_Celeb}
\end{figure}

\begin{figure}[h!]
	\centering
	\begin{tabular}{rccc}
		{\rotatebox{90}{\;\;\;\;
				\;\;
				frequencies}}
		&
		
		\tabfigure{width=0.38\textwidth}{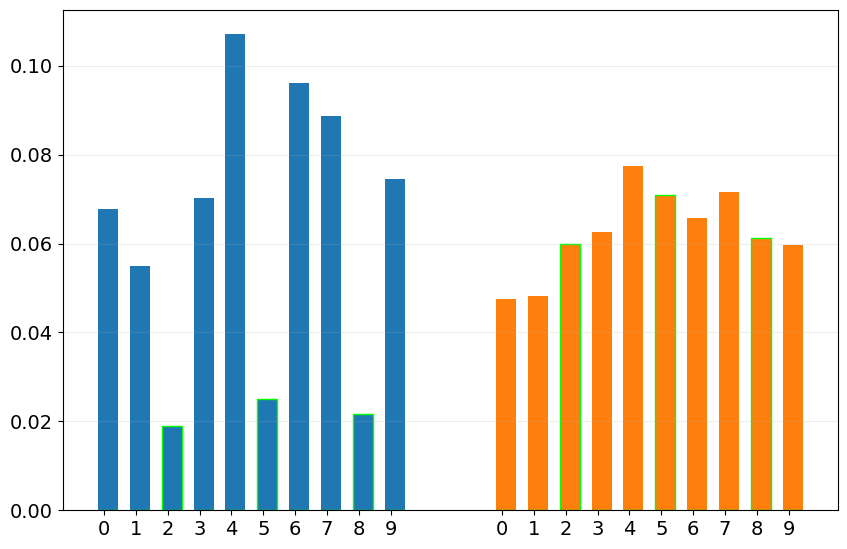}
		\;\;
		& \;\;
		\includegraphics[width=0.22\linewidth]{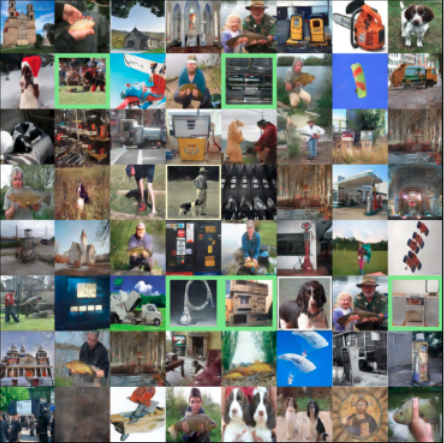}
		&
		\includegraphics[width=0.22\linewidth]{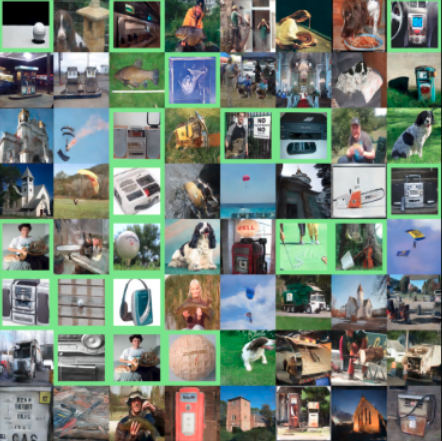}
		\\
		&
		classes
		& 
		
		& 
	\end{tabular}
	\caption{Generation performance comparison of constrained and unconstrained models that are trained on Image-Net with minority classes: `Cassette player' (2), `French horn' (5), and `Golf ball' (8). (\,Left\,) Frequencies of ten classes that are generated by an unconstrained model (\crule[myblue]{0.5cm}{0.15cm}) and our constrained model (\crule[myorange]{0.5cm}{0.15cm}); (\,Middle\,) Generated images from unconstrained model (\,FID 36.0\,); (Right) Generated images from our constrained model (\,FID 27.3\,).}
	\label{fig:fairness_Image-Net}
\end{figure}

\begin{figure}[h!]
	\centering
	\begin{tabular}{rccc}
		{\rotatebox{90}{\;\;\;\;
				\;\;
				frequencies}}
		&
		
		\tabfigure{width=0.38\textwidth}{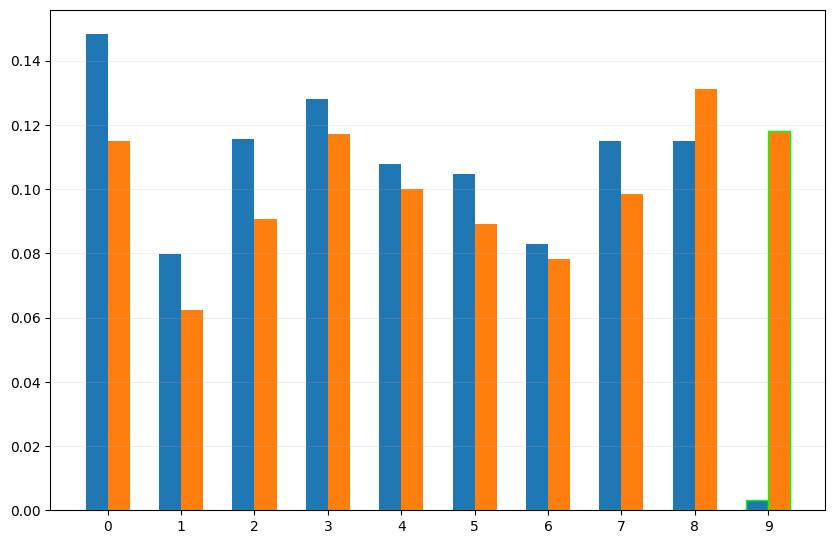}

		\;\; & \;\;
		\includegraphics[width=0.22\linewidth]{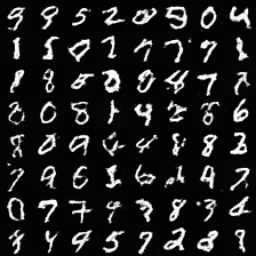}
		&
		\includegraphics[width=0.22\linewidth]{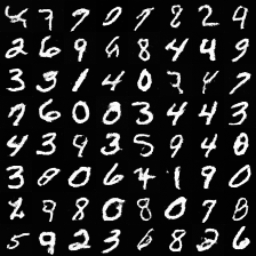}
		\\
		&
		classes
		& 
		
		& 
	\end{tabular}
	\caption{Fine-tuning performance comparison of constrained and unconstrained models that are trained on MNIST. (\,Left\,) Frequencies of ten digits that are generated by a pre-trained model without digit 9 (\crule[myblue]{0.5cm}{0.15cm}) and our fine-tuned constrained model (\crule[myorange]{0.5cm}{0.15cm}); (\,Middle\,) Generated digits from unconstrained model (\,FID 45.9\,); (\,Right\,) Generated digits from our constrained model (\,FID 25.2\,).}
	\label{fig:fine-tuning}
\end{figure}

\noindent\textbf{Adapting pretrained model to new data.} Given a pretrained diffusion model over some original dataset $\mathcal{D}_{\text{pretrain}}$, we fine-tune the pretrained model for generating data that resemble $\mathcal{D}_{\text{new}}$. To cast this problem into~\eqref{eq:KL-constrained likelihood maximization parametrization}, we let the data distribution be $\mathcal{D}_{\text{new}}$, i.e.,  $q(x_{0:T}) = q_{\text{new}}(x_{0:T})$ and the constrained distribution be the pre-trained model, i.e., $q^i(x_{0:T}) = p_{\theta_{\text{pre}}}(x_{0:T})$. In our experiments, we pretrain a diffusion model on a subset of MNIST digits excluding a class of digits (MNIST: 9), and fine-tune this model using samples of the excluded digit. Figure~\ref{fig:fine-tuning} shows that our constrained fine-tuned model samples from the new class  as well as all previous classes, whereas the model fine-tuned without the constraint quickly overfits to the new dataset (see Appendix~\ref{app:implementation} for details). 
Our constrained model generates much better high-quality samples than the unconstrained model.

\vspace{-1.1ex}
\section{Conclusion}\label{sec:conclusion}

We have presented a constrained optimization framework for training diffusion models under distribution constraints. We have developed a Lagrangian-based dual algorithm to train such constrained diffusion models. Our theoretical analysis shows that our constrained diffusion model generates new data from an optimal mixture data distribution that satisfies the constraints, and we have demonstrated the effectiveness of our distribution constraints in reducing bias across three widely-used datasets. 

This work directly stimulates several research directions: (i) extend our distribution constraints to other domain constraints, e.g., mirror
diffusion~\cite{liu2024mirror}; (ii) incorporate conditional generations, e.g., text-to-image generation~\cite{friedrich2023fair,shen2023finetuning}, into our constrained diffusion models; (iii) conduct experiments with text-to-image datasets to identify and address biases; (iv) improve the convergence theory using more advanced diffusion processes. 



\newpage
\section*{Acknowledgments}

We thank reviewers and program chairs for providing helpful comments.

\bibliography{main.bib}
\bibliographystyle{abbrv} %

\newpage
\appendix
\clearpage

~\\
\centerline{{\fontsize{14}{14}\selectfont \textbf{Supplementary Materials for }}}

\vspace{6pt}
\centerline{\fontsize{13.5}{13.5}\selectfont \textbf{
	``Constrained Diffusion Models via Dual Training''}}

\vspace{6pt}
\section{Details on ELBO}\label{app:ELBO}

Recall the evidence lower bound (ELBO),$$E(p; q) \; \DefinedAs\; \mathbb{E}_{q(x_0)} \mathbb{E}_{q(x_{1:T}\,\vert\,x_0)} \log \frac{p(x_{0:T})}{q(x_{1:T}\,\vert\,x_0)},$$ we can utilize conditionals to expand it into
\[
\begin{array}{rcl}
	E(p; q ) 
	& = & \displaystyle
	\underbrace{\mathbb{E}_{q(x_0)} \mathbb{E}_{q(x_1\,\vert\,x_0)} \left[\,\log p(x_0\,\vert\,x_1)\,\right]}_{\text{\normalsize reconstruction likelihood}}
	\, - \, 
	\underbrace{\mathbb{E}_{q(x_0)} \left[\, D_{\text{KL}}(q(x_T\,\vert\,x_0)\,\Vert\, p(x_T))\,\right]}_{\text{\normalsize final latent mismatch}}
	\\[0.4cm]
	&& \displaystyle - \,\underbrace{\sum_{t\,=\,2}^T \mathbb{E}_{q(x_0)} \mathbb{E}_{q(x_t\,\vert\,x_0)} \left[\, D_{\text{KL}}\left(q(x_{t-1}\,\vert\,x_t,x_0)\,\Vert\, p (x_{t-1}\,\vert\,x_t)\right)\,\right] }_{\text{\normalsize denoising matching term}}
\end{array}
\]
where the first term is the reconstruction likelihood of the original data given the first latent $x_1$, the second term is the mismatch between the final latent distribution and the Guassian prior, and the last summation measures the mismatch between the denoising transitions from forward/backward processes. With the variance schedule described in Section~\ref{subsec:forward backward}, it is known that the reconstruction likelihood and final latent mismatch  are negligible, and thus the approximation in~\eqref{eq:ELBO expansion} is almost exact, which is our focal setting of this paper.

We next focus on one summand of the denoising matching term,
\[
\mathbb{E}_{q(x_0)} \mathbb{E}_{q(x_t\,\vert\,x_0)} 
\left[\,
D_{\text{KL}}\left(q(x_{t-1}\,\vert\,x_t,x_0)\,\Vert\, p (x_{t-1}\,\vert\,x_t)\right)
\,\right].
\]
Application of the reparametrization trick leads to $x_t = \sqrt{\alpha_t} x_{t-1} + \sqrt{1-\alpha_t} \epsilon_{t-1}$, where $\epsilon_{t-1} \sim \mathcal{N}(0, I)$ is a white noise sample. Using Bayes rule, we can express $q(x_{t-1}\,\vert\,x_t, x_0)$ as a Guassian distribution 
\[
\mathcal{N}(x_{t-1}; \mu_q(x_t), \sigma_q(t)I)
\]
where $\mu_q(x_t) = \frac{1}{\sqrt{\alpha_t}}x_t+\frac{1-\alpha_t}{\sqrt{\alpha_t}}\nabla \log q(x_t)$ is the mean and $\sigma_q^2(t) = \frac{(1-\alpha_t)(1-\bar\alpha_{t-1})}{1-\bar\alpha_t}$ is the variance.

To stay close to the ground-truth backward conditional $q(x_{t-1}\,\vert\,x_t,x_0)$ as much as possible, we take $p(x_{t-1}\,\vert\,x_t)$ to be the same as $q(x_{t-1}\,\vert\,x_t,x_0)$ except replacing $\nabla\log p(x_t)$ by $\hat{s}(x_t,t)$ and $\sigma_q^2(t)$ by $\sigma_p^2(t)$,
\[
\mathcal{N}(x_{t-1}; \hat\mu(x_t), \sigma_p(t)I)
\]
where $\hat\mu(x_t) = \frac{1}{\sqrt{\alpha_t}}x_t+\frac{1-\alpha_t}{\sqrt{\alpha_t}}\hat{s}(x_t, t)$. Thus,
\[
\begin{array}{rcl}
	&& \!\!\!\!  \!\!\!\!  \!\!
	\displaystyle
	D_{\text{KL}}\left(q(x_{t-1}\,\vert\,x_t,x_0)\,\Vert\, p (x_{t-1}\,\vert\,x_t)\right) 
	\\[0.2cm]
	&  = & D_{\text{KL}}(\mathcal{N}(x_{t-1}; \mu_q(x_t), \sigma_q(t)I)\,\Vert\, \mathcal{N}(x_{t-1}; \hat\mu(x_t), \sigma_p(t)I))
	\\[0.2cm]
	& = & \displaystyle
	\frac{1}{2} 
	\left(
	d\log \frac{\sigma_p^2(t)}{\sigma_q^2(t)}
	-
	d
	+
	d \frac{\sigma_p^2(t)}{\sigma_q^2(t)}
	+ 
	\frac{1}{\sigma_q^2(t)}
	\norm{\mu_q(x_t,x_0) - \hat\mu(x_t,x_0)}^2
	\right)
	\\[0.2cm]
	& = & \displaystyle\underbrace{
		\frac{1}{2} 
		\left(
		d\log \frac{\sigma_p^2(t)}{\sigma_q^2(t)}
		-
		d
		+
		d \frac{\sigma_p^2(t)}{\sigma_q^2(t)}
		\right)
	}_{\text{\normalsize variance mismatch}}
	\, + \,
	\underbrace{\frac{1}{2\sigma_q^2(t)}
		\frac{ (1-\alpha_t)^2}{\alpha_t}
		\norm{\hat{s}(x_t,t) - \nabla\log q(x_t)}^2}_{\text{ \normalsize prediction loss}}
\end{array}
\]
where the second equality is due to the KL Divergence between two Gaussians. Since $\sigma_p^2(t)$ and $\sigma_q^2(t)$ are constants, the variance mismatch term is irrelevant to optimization. Denote 
$\omega_t 
\DefinedAs 
\frac{(1-\alpha_t)^2}{2\sigma_q^2(t)\alpha_t}$ and $\bar{\omega} \DefinedAs \sum_{t\,=\,2}^T \omega_t$. We can define a discrete distribution over the set $\{2, \ldots, T\}$ as $p_\omega(t) \DefinedAs \frac{\omega_t}{\bar\omega}$. Also denote $v \DefinedAs \sum_{t\,=\,2}^T \frac{1}{2} 
\left(
d\log \frac{\sigma_p^2(t)}{\sigma_q^2(t)}
-
d
+
d \frac{\sigma_p^2(t)}{\sigma_q^2(t)}
\right)$.
Hence, the ELBO maximization: 
$\maximize_{p} E(p; q)$, is equivalent to the quadratic loss minimization,
\[
\minimize_{\hat{s}} \;\; 
v 
\,+\, 
\bar{\omega}\,
\mathbb{E}_{x_0, \,t, \,x_t}
\left[\,  
\norm{\hat{s}(x_t, t) - \nabla\log q(x_t)}^2
\,\right]
\]
where $\mathbb{E}_{x_0,\, t,\, x_t}$ is an expectation over the data distribution $q(x_0)$, the discrete distribution $p_\omega(t)$  from $2$ to $T$, and a forward process $q(x_t\,\vert\,x_0)$ given the data sample $x_0$. Since shifting an objective function by a constant and scaling an objective function by a constant  don't change the solution of an optimization problem, we omit constants $v$ and $\bar{\omega}$ for brevity, and only emphasize them whenever it is necessary. Hence, the ELBO maximization equals the quadratic loss minimization, $$\minimize_{\hat{s}} \;\;   \mathbb{E}_{x_0,\, t,\, x_t}
\left[\,  
\norm{\hat{s}(x_t, t) - \nabla \log q(x_t)}^2
\,\right]
$$ up to some scaling and shifting constants,
where $\mathbb{E}_{x_0, t, x_t}$ is an expectation over the data distribution $q(x_0)$, the discrete distribution $p_\omega(t)$  from $2$ to $T$, and a forward process $q(x_t\,\vert\,x_0)$ given the data sample $x_0$.  
In practice, however, we have to parametrize the estimator $\hat{s}(x_t,t)$ as $\hat{s}_\theta(x_t,t)$ with parameter $\theta\in\Theta$, 
\[
\minimize_{\theta\,\in\,\Theta} \;\;  
\mathbb{E}_{x_0,\, t,\, x_t}
\left[ \,
\norm{\hat{s}_\theta(x_t, t) - \nabla \log q(x_t)}^2
\, \right] 
\]
which is our focal objective of generative modeling. A parametrized representation of $p(x_{t-1}\,\vert\,x_t)$ associated with $\hat{s}_\theta(x_t, t)$ is denoted by $p_\theta(x_{t-1}\,\vert\,x_t)$ and the backward process has a parametrized joint distribution $p_\theta(x_{0:T})$. We remark that the above prediction problem can be reformulated as data or noise predictions~\cite{luo2022understanding}, with our results directly transferrable to these formulations.

\section{Proofs}\label{app:proofs}

We provide proofs of all lemmas and theorems in the main paper.

\subsection{Proof of Lemma~\ref{lem:equivalence}}\label{app:equivalence}

\begin{proof}
	The ELBO maximization has the same optimal solution with the KL divergence minimization because of the equality~\eqref{eq:KL forward/backward}. This directly proves the second equivalence. Next, we relate these two problems to the likelihood maximization problem.
	
	We note that the KL divergence is non-negative and is zero if and only if two distributions are the same. Since $q(x_{0:T}) \in \mathcal{P}$ for large $T$, the solution of the ELBO maximization and KL divergence minimization is given by $p^\star = q$. For the KL divergence minimization, the optimal value is zero. For the optimal value of the ELBO maximization, from~\eqref{eq:KL forward/backward} it follows that:
	\begin{equation}
		E(p^\star; q) 
		\; = \;
		\mathbb{E}_{q(x_0)}[\, \log q(x_0) \,] - D_{\text{KL}}(q(x_{0:T})\,\Vert\,  p^\star(x_{0:T})) 
		\; = \;
		\mathbb{E}_{q(x_0)}[ \, \log q(x_0) \,].
	\end{equation}
	
	It is clear that the likelihood maximization problem $\maximize_p\mathbb{E}_{q(x_0)}[ \,\log p(x_0)\, ]$ is equivalent to $\minimize_p D_{\text{KL}}(q(x_0)\,\Vert\, p(x_0))$. Therefore, any distribution $p^\star (x_{0:T})$  whose marginal satisfies $p^\star (x_0) = q(x_0)$, will be a solution of the likelihood maximization problem. This includes the solution of the KL  divergence minimization and ELBO maximization which is $p^\star  =  q$. Therefore,
	\begin{equation}
		\maximize_{p \,\in\, \mathcal{P}}\; E(p; q)
		\;\Rightarrow\; 
		\maximize_{p \,\in\, \mathcal{P}}\;
		\mathbb{E}_{q(x_0)}[\, \log p(x_0) \,]
	\end{equation}
	which concludes the proof.
	
\end{proof}

\subsection{Proof of Lemma~\ref{lem:duality}}\label{app:duality}

\begin{proof}
	It is straightforward to check the zero duality gap in convex optimization; see e.g.,~\cite[Proposition 5.3.2]{bertsekas2016nonlinear}. Furthermore, for a convex optimization problem, an optimal dual variable $\lambda^\star$ that maximizes the dual function is a geometric multiplier. Hence, $(p^\star, \lambda^\star)$ is an optimal primal-dual pair of the convex optimization problem.
\end{proof}

\subsection{Proof of Theorem~\ref{thm:solution of unparameterized constrained optimization}}\label{app:solution of unparameterized constrained optimization}
\begin{proof}
	From the strong duality in Lemma~\ref{lem:duality}, it is known from~\cite[Proposition 5.1.4]{bertsekas2016nonlinear} that $\lambda^\star$ is also a geometric multiplier. Thus,  Problem~\eqref{eq:KL-constrained unparameterized optimization} reduces to an unconstrained problem,
	\begin{equation}\label{eq:unconstrained plug-in}
		\minimize_{p\,\in\,\mathcal{P}}\;\; \mathcal{L}(p,\lambda^\star)
	\end{equation}
	where the objective function results from plugging an optimal dual variable $\lambda^\star$ into  Lagrangian $\mathcal{L}(p,\lambda)$.    
	
	By the definition of Lagrangian, 
	\[
	\begin{array}{rcl}
		\mathcal{L}(p, \lambda) 
		& = & \displaystyle
		D_{\text{KL}}(q(x_{0:T})\,\Vert\,  p(x_{0:T})) + \sum_{i\,=\,1}^m \lambda_i  \left(D_{\text{KL}}\left(q^i(x_{0:T})\,\Vert\,  p(x_{0:T})\right) - b_i\right)
		\\[0.2cm]
		& = & \displaystyle
		-\, E(p; q) - \sum_{i\,=\,1}^m \lambda_i\, E(p; q^i)
		\\[0.2cm]
		&  & \displaystyle +\, 
		\mathbb{E}_{q(x_0)}\left[ \log q(x_0) \right] + \sum_{i\,=\,1}^m \lambda_i \left( \mathbb{E}_{q(x_0)}\left[ \log q^i(x_0) \right] -b_i \right)
	\end{array}
	\]
	By taking $\lambda = \lambda^\star$, Problem~\eqref{eq:unconstrained plug-in} is equivalent to
	\begin{equation}\label{eq:lagrangian elbo}
		\displaystyle
		\maximize_{p\, \in \, \mathcal{P}}\;\; E(p; q)  
		\, + \,  
		\sum_{i\,=\,1}^m \lambda_{i}^\star \, E(p; q^i).
	\end{equation}
	From the definition of ELBO, we know that
	\[
	E(p; q)+\sum_{i\,=\,1}^m \lambda_{i}^\star E(p; q^i)
	\; = \; 
	\left(\mathbb{E}_{q(x_0)}  + \sum_{i\,=\,1}^m \lambda_{i}^\star \mathbb{E}_{q^i(x_0)} \right)\mathbb{E}_{q(x_{1:T}\,\vert\,x_0)} \log \frac{p(x_{0:T})}{q(x_{1:T}\,\vert\,x_0)}
	\]
	where we use the fact that the forward processes have the same marginal distribution given any initial data samples. Normalization of initial data distributions leads to $q_{\text{mix}}^{(\lambda^\star)}$.  
	Thus, Problem~\eqref{eq:lagrangian elbo} is equivalent to
	\[
	\displaystyle
	\maximize_{p \,\in\, \mathcal{P}}\; E(p; q_{\text{\normalfont mix}}^{(\lambda^\star)})
	\]
	which, together with Lemma~\ref{lem:equivalence}, completes the proof.
\end{proof}

\subsection{Proof of Theorem~\ref{thm: characterize dual solution} and Feasibility Criterion}\label{app:more theory}
We start with the proof of Theorem~\ref{thm: characterize dual solution}.
\begin{proof}
	Similar to the proof of Theorem~\ref{thm:solution of unparameterized constrained optimization}, we begin with the Lagrangian of Problem~\eqref{eq:ELBO-constrained unparameterized optimization}, 
	\begin{eqnarray}
		\mathcal{L}(p, \lambda) 
		\nonumber
		& = & \displaystyle
		-\, E(p; q) \,-\, \sum_{i\,=\,1}^m \lambda_i \left(E(p; q^i) + \bar b_i\right)
		\\[0.2cm]
		\label{eq: lambda0 and b0} 
		& = &\displaystyle \lambda^T \mathbf{1} \left( -\sum_{i \,=\, 1}^m \frac{\lambda_i}{\lambda^T \mathbf{1}} E(p;q^i)\right) \,-\, \lambda^T \bar b
		\\[0.2cm]
		\nonumber
		& = &\displaystyle \lambda^T \mathbf{1} \left(-\sum_{i \,=\, 1}^m\frac{\lambda_i}{\lambda^T \mathbf{1}}\mathbb{E}_{q^i(x_0)} \mathbb{E}_{q(x_{1:T}\,\vert\,x_0)} \log \frac{p(x_{0:T})}{q(x_{1:T}\,\vert\,x_0)}\right) \,-\, \lambda^T \bar b
		\\[0.2cm]
		\nonumber
		& = &\displaystyle \lambda^T \mathbf{1} \left(-\mathbb{E}_{q^{(\lambda)}(x_0)} \mathbb{E}_{q(x_{1:T}\,\vert\,x_0)} \log \frac{p(x_{0:T})}{q(x_{1:T}\,\vert\,x_0)}\right) \,-\, \lambda^T \bar b
		\\[0.2cm]
		\nonumber
		& = &
		\displaystyle -\,(\lambda^T \mathbf{1})  E(p;q^{(\lambda)}) \,-\, \lambda^T \bar b
	\end{eqnarray}
	where from~\eqref{eq: lambda0 and b0} onwards we use notation: $\lambda_0 = 1$, $\bar b_0 = 0$, $\lambda = \left[\lambda_0, \ldots, \lambda_m \right]^T,\ \bar b = \left[\bar b_0, \ldots, \bar b_m \right]^T$, and use $q^0$ to represent $q$, which will be used in the rest of proof. To formulate the dual problem, we check the minimum of the Lagrangian,
	\begin{eqnarray}
		g(\lambda) & \DefinedAs & \displaystyle
		\nonumber\minimize_{p \,\in\, \mathcal{P}}
		\;
		\mathcal{L}(p, \lambda)
		\\
		\nonumber
		& = &
		\displaystyle \minimize_{p \,\in\, \mathcal{P}}\; -\,(\lambda^T \mathbf{1})  E(p;q^{(\lambda)}) - \lambda^T \bar b
		\\[0.2cm]
		\label{eq:dual function3}
		& = &\displaystyle  -\, \lambda^T \bar b 
		\,+\,
		(\lambda^T \mathbf{1}) \ \minimize_{p \,\in\, \mathcal{P}} \; -E(p;q^{(\lambda)})
	\end{eqnarray}
	where the only term that depends on $p$ is the ELBO. Recall that:
	\[
	D_{\text{KL}} (q(x_{0:T})\,\Vert\, p(x_{0:T})) 
	\; = \;
	-\, E(p; q)
	\, + \,
	\mathbb{E}_{q(x_0)}
	\left[\,
	\log q(x_0) 
	\,\right].
	\]
	Since the minimum value of the KL divergence is zero (attained when $p = q$), the minimum of $-E(p; q)$ is likewise attained when $p = q$. Thus, it is straightforward that the minimum is equal to the entropy of the distribution $q$, denoted by $h(q) \DefinedAs -\mathbb{E}_{q(x_0)}\left[\,\log q(x_0) \,\right]$. With this in mind, from~\eqref{eq:dual function3} we have
	\[
	g(\lambda) 
	\;=\;
	- \,\lambda^T \bar b 
	\,+\,
	(\lambda^T \mathbf{1}) \ h(q^{(\lambda)}).
	\]
	Thus, the dual problem reads
	\[
	\begin{array}{rl}
		\displaystyle\maximize_{\lambda\,\geq\,0} & g(\lambda) \;\DefinedAs\; - \,\lambda^T \bar b 
		\,+\,
		(\lambda^T \mathbf{1}) \ h(q^{(\lambda)}).
		
	\end{array}
	\]
	We first reformulate the entropy of the mixture distribution $q^{(\lambda)}$, 
	\begin{eqnarray}
		\displaystyle
		\nonumber
		h(q^{(\lambda)}) 
		& = &\displaystyle   -\mathbb{E}_{q^{(\lambda)}(x_0)} \left[ \log q^{(\lambda)}(x_0) \right]
		\\[0.2cm]
		\label{eq:disjoint assumption 1}\displaystyle
		& = & 
		\displaystyle  -\int \sum_{i \,=\, 0}^m \frac{\lambda_i}{\lambda^T \mathbf{1}} q^i (x_0) \log\left(\sum_{i \,=\, 1}^m \frac{\lambda_i}{\lambda^T \mathbf{1}} q^i (x_0)\right) dx_0
		\\[0.2cm]
		\label{eq:disjoint assumption 2}
		& = & \displaystyle -\int \sum_{i \,=\, 0}^m \frac{\lambda_i}{\lambda^T \mathbf{1}} q^i (x_0) \log\left(\frac{\lambda_i}{\lambda^T \mathbf{1}} q^i (x_0)\right) dx_0
		\\[0.2cm]
		\nonumber
		& = & 
		\displaystyle - \sum_{i \,=\, 0}^m \frac{\lambda_i}{\lambda^T \mathbf{1}} \underbrace{\int   q^i (x_0) \log\left(q^i (x_0)\right) dx_0}_{\DefinedAs\,-\,h_i} 
		\, - \,
		\sum_{i \,=\, 0}^m \frac{\lambda_i}{\lambda^T \mathbf{1}} \log\left(\frac{\lambda_i}{\lambda^T \mathbf{1}}\right)
		\\[0.2cm]
		\nonumber
		& = & 
		\displaystyle  \sum_{i \,=\, 0}^m \frac{\lambda_i}{\lambda^T \mathbf{1}} h_i \,-\, \sum_{i \,=\, 0}^m \frac{\lambda_i}{\lambda^T \mathbf{1}} \log\left(\frac{\lambda_i}{\lambda^T \mathbf{1}}\right)
		\\[0.2cm]
		\nonumber
		& = & 
		\displaystyle  \sum_{i \,=\, 0}^m \frac{\lambda_i}{\lambda^T \mathbf{1}} h_i 
		\,-\,
		\sum_{i \,=\, 0}^m \frac{\lambda_i}{\lambda^T \mathbf{1}} \log (\lambda_i) 
		\,+\,
		\log(\lambda^T \mathbf{1})
	\end{eqnarray} 
	where going from~\eqref{eq:disjoint assumption 1} to~\eqref{eq:disjoint assumption 2} we utilize the assumption that the distributions $\{q^i\}_{i\,=\,0}^m$ have disjoint supports; see Remark~\ref{rem: disjoint assumption} on when this is the case.
	
	Now, we can compute the gradient of the dual function over $\lambda_i$, $i=1,\ldots,m$,
	\[
	\begin{array}{rcl}
		\displaystyle\frac{\partial}{\partial \lambda_i} \left( - \lambda^T \bar b \,+\,
		(\lambda^T \mathbf{1}) \ h(q^{(\lambda)}) \right) 
		& = &
		\displaystyle \frac{\partial}{\partial \lambda_i} \left(  - \lambda^T \bar b + \sum_{j \,=\, 0}^m\lambda_j h_j - \sum_{j \,=\, 0}^m\lambda_j \log \lambda_j  + (\lambda^T \mathbf{1})\log(\lambda^T \mathbf{1})\right)\\[0.2cm]
		& = & \displaystyle h_i - \bar b_i - \log \left( \frac{\lambda_i}{\lambda^T \mathbf{1}}\right).
	\end{array}
	\]
	Setting the gradient be zeros allows us to find the optimal dual variables $\lambda^\star$,
	\[
	h_i - \bar b_i - \log \left( \frac{\lambda^\star_i}{(\lambda^\star)^T \mathbf{1}}\right) 
	\; =\; 0 \ \ \ \text{for } i = 1,\ldots, m.
	\]
	Hence,
	\begin{equation}\label{eq: optimal dual solution}
		\frac{\lambda^\star_i}{(\lambda^\star)^T \mathbf{1}} 
		\; = \; 
		{\rm e}^{h_i \,-\, \bar b_i} \ \ \ \text{for } i = 1,\ldots, m.
	\end{equation}
	We clarify that in~\eqref{eq: optimal dual solution}, $\lambda^\star = \left[\lambda^\star_0, \ldots, \lambda^\star_m \right]^T$ with its first element being $\lambda^\star_0 = 1$. Finally, if we return back to notation $\lambda^\star = \left[\lambda^\star_1, \ldots, \lambda^\star_m \right]^T$, then,
	\[
	\frac{\lambda^\star_i}{1+(\lambda^\star)^T \mathbf{1}} 
	\; = \; 
	{\rm e}^{h_i \,-\, \bar b_i} \ \ \ \text{for } i = 1,\ldots, m
	\]
	which completes the proof.
\end{proof}

\begin{remark}\label{rem: disjoint assumption}
	We remark on the assumption of the distributions $\{q^i\}_{i\,=\,0}^m$ having disjoint supports. In the setting of adapting model to new data in Section~\ref{subsec:fair image generation}, this is a reasonable assumption. Since we often finetune a pre-trained diffusion model on new data not seen in the original pre-training dataset, the new data distribution and the pre-training data distribution have mostly disjoint supports. In the minority class setting in Section~\ref{subsec:fair image generation}, the constrained distributions $\{q^i\}_{i\,=\,1}^m$ and the objective distribution $q^0$ usually are not disjoint. However, since the distributions $\{q^i\}_{i\,=\,1}^m$ often are often restrictions of $q^0$ to subsets of the support of $q^0$, i.e., the minority classes, the derivation of optimal dual variables is similar to what we have provided in this section, so we omit the repeated details. Extending these results to cases where the distributions are neither disjoint nor restrictions of the objective distributions, is challenging and has been left to future work.
\end{remark}

To prove a feasibility criterion, we first show that the dual function is finite in Lemma~\ref{lem:finite dual}.

\begin{lemma}[Boundedness of the optimal dual function]\label{lem:finite dual}
	Let the differential entropy $h_i$ of each distribution $q^i$ be finite. Then, 
	the optimal value of the dual function $D^\star \DefinedAs \max_{\lambda \,\geq\, 0} g(\lambda)$ is finite if and only if
	\[
	\sum_{i \,=\, 1}^m {\rm e}^{h_i \,-\, \bar b_i} 
	\; < \; 1.
	\]
\end{lemma}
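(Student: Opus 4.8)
The plan is to use the closed-form expression for the dual function $g(\lambda)$ derived in the proof of Theorem~\ref{thm: characterize dual solution}, together with the first-order optimality analysis carried out there, and show that the supremum over $\lambda \geq 0$ is attained at a finite point (hence finite) exactly when the stated summability condition holds. Recall that, writing $\lambda_0 = 1$, $\bar b_0 = 0$, the computation in Appendix~\ref{app:more theory} gives
\[
g(\lambda) \; = \; -\,\lambda^T \bar b \,+\, (\lambda^T \mathbf{1})\, h(q^{(\lambda)}) \; = \; \sum_{i\,=\,0}^m \lambda_i (h_i - \bar b_i) \,-\, \sum_{i\,=\,0}^m \lambda_i \log \lambda_i \,+\, (\lambda^T \mathbf{1})\log(\lambda^T \mathbf{1}),
\]
where all the $\lambda_i$ for $i \geq 1$ are the free variables and $\lambda^T\mathbf{1} = 1 + \sum_{i=1}^m \lambda_i$. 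So $D^\star = \max_{\lambda \geq 0} g(\lambda)$ with this explicit, smooth, concave function of $(\lambda_1,\dots,\lambda_m)$.

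First I would establish the ``if'' direction. Assuming $\sum_{i=1}^m {\rm e}^{h_i - \bar b_i} < 1$, I would exhibit the interior critical point: from the gradient computation $\partial g/\partial\lambda_i = h_i - \bar b_i - \log(\lambda_i/\lambda^T\mathbf{1})$, setting this to zero gives $\lambda_i^\star/(\lambda^\star)^T\mathbf{1} = {\rm e}^{h_i - \bar b_i}$. Summing over $i = 1,\dots,m$ yields $(\,(\lambda^\star)^T\mathbf{1} - 1\,)/(\lambda^\star)^T\mathbf{1} = \sum_{i=1}^m {\rm e}^{h_i - \bar b_i} < 1$, which can be solved for $(\lambda^\star)^T\mathbf{1} = 1/(1 - \sum_{i=1}^m {\rm e}^{h_i - \bar b_i}) \in (1,\infty)$, a finite positive number; then each $\lambda_i^\star = {\rm e}^{h_i - \bar b_i} (\lambda^\star)^T\mathbf{1}$ is finite and nonnegative. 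Since $g$ is concave (as a dual function, or directly because $-\sum \lambda_i \log\lambda_i + (\lambda^T\mathbf{1})\log(\lambda^T\mathbf{1})$ is concave — it is, up to sign, a sum of relative-entropy-type terms), this critical point is the global maximizer, and $D^\star = g(\lambda^\star)$ is finite.

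For the ``only if'' direction I would argue by contraposition: suppose $\sum_{i=1}^m {\rm e}^{h_i - \bar b_i} \geq 1$, and show $g$ is unbounded above on $\lambda \geq 0$. The clean way is a ray argument: pick the direction $v_i = {\rm e}^{h_i - \bar b_i}$ and consider $\lambda = s v$ for $s \to \infty$. Plugging into $g$, the dominant behavior is $\sum_{i=0}^m \lambda_i(h_i - \bar b_i) - \sum_{i=0}^m \lambda_i \log \lambda_i + (\lambda^T\mathbf{1})\log(\lambda^T\mathbf{1})$; a short asymptotic expansion shows the $s\log s$ terms combine with coefficient $(\lambda^T\mathbf{1})/s - \sum_i v_i = 1/s + \sum_i v_i - \sum_i v_i \to 0$ in the leading $s\log s$ order but, when $\sum_i v_i \geq 1$, the next-order $O(s)$ term has a nonnegative (strictly positive when $\sum_i v_i > 1$, and needing a slightly more careful logarithmic analysis when $\sum_i v_i = 1$) coefficient, so $g(sv)\to\infty$. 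Equivalently and perhaps more transparently: since $g$ is concave and the first-order condition $\partial g/\partial\lambda_i = 0$ has \emph{no} solution in the feasible region when $\sum_i {\rm e}^{h_i - \bar b_i} \geq 1$ (the sum-over-$i$ relation forces $(\lambda^T\mathbf{1} - 1)/\lambda^T\mathbf{1} = \sum_i v_i \geq 1$, impossible since the left side is $< 1$), a concave function with no stationary point on a set containing rays to infinity must be unbounded above on that set.

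The main obstacle I anticipate is the boundary case $\sum_{i=1}^m {\rm e}^{h_i - \bar b_i} = 1$: here the candidate maximizer is ``at infinity'' and the leading $s\log s$ terms cancel, so one must carry the asymptotics to the next order to confirm $g(sv) \to +\infty$ rather than converging to a finite limit — this is the one spot requiring an honest, if short, Taylor expansion of $(\lambda^T\mathbf{1})\log(\lambda^T\mathbf{1})$ against $-\sum_i\lambda_i\log\lambda_i$ rather than a one-line concavity appeal. A secondary (minor) point is justifying differentiability and concavity of $g$ on the open positive orthant and that no maximizer escapes to a boundary face $\lambda_i = 0$ — but on such a face the same first-order analysis applies to the reduced problem, and one checks the gradient in the $\lambda_i$ direction is $+\infty$ as $\lambda_i \downarrow 0$ (from the $-\log\lambda_i$ term), so interior critical points, when they exist, are the only candidates.
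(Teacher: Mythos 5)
Your $(\Leftarrow)$ direction is essentially identical to the paper's: you locate the stationary point $\lambda_i^\star = (\lambda^\star)^T\mathbf{1}\,{\rm e}^{h_i - \bar b_i}$ with $(\lambda^\star)^T\mathbf{1} = 1/(1 - \sum_i {\rm e}^{h_i - \bar b_i})$, and finiteness of $D^\star$ follows from concavity plus finiteness of the $h_i$.

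For the $(\Rightarrow)$ direction your primary argument is genuinely different from the paper's, and in my view cleaner. The paper shows that at every feasible $\lambda$ some coordinate of $\nabla g$ exceeds $\delta = \log\bigl(\sum_i {\rm e}^{h_i-\bar b_i}\bigr) \geq 0$, and then asserts this implies $g$ is unbounded above. You instead substitute the ray $\lambda = s v$ with $v_i = {\rm e}^{h_i - \bar b_i}$ directly into the explicit expression for $g$ and do the asymptotics; writing $V = \sum_i v_i$, the $s\log s$ terms cancel exactly and one is left with $g(sv) = h_0 + sV\log V + \log s + \log V + 1 + O(1/s)$, so $g(sv) \to \infty$ when $V > 1$ (from the $sV\log V$ term) and also when $V = 1$ (from the surviving $\log s$ term). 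You correctly anticipated that the boundary case $V = 1$ is the part requiring the second-order term, and your calculation handles it. The benefit of your route is that it is a direct, self-contained verification of unboundedness; the paper's gradient condition requires a further (unstated) argument to pass from ``no KKT point exists and some partial derivative is always positive'' to ``$g$ is unbounded above,'' which is not automatic for concave functions.

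One warning: your fallback appeal, ``a concave function with no stationary point on a set containing rays to infinity must be unbounded above on that set,'' is false as stated. The function $g(\lambda_1) = -{\rm e}^{-\lambda_1}$ on $[0,\infty)$ is smooth and concave, has $g'(\lambda_1) = {\rm e}^{-\lambda_1} > 0$ everywhere (so no stationary point, and the domain contains a ray to infinity), yet $g$ is bounded above by $0$. So do not lean on that principle; the ray computation is the argument that actually closes the gap. (Your ray computation also silently uses that $g$ restricted to the open positive orthant equals the closed-form entropy expression; this follows from the disjoint-supports reduction used for Theorem~\ref{thm: characterize dual solution}, so it is worth noting you are inheriting that hypothesis here.)
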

\begin{proof}
	($\Leftarrow$) From  $\sum_{i \,=\, 1}^m {\rm e}^{h_i - \bar b_i} < 1$, the otpimal dual variable $\lambda^\star$ given by~\eqref{eq: optimal dual solution} is finite. Thus, the optimal value of the dual function $g(\lambda^\star)$ becomes
	\[
	g(\lambda^\star) \;=\; 
	-\, (\lambda^\star)^T \bar b 
	\, + \,  
	\sum_{i \,=\, 0}^m\lambda^\star_i h_i 
	\, - \, 
	\sum_{i \,=\, 0}^m\lambda^\star_i \log \lambda^\star_i  \,+\, ((\lambda^\star)^T \mathbf{1})\log((\lambda^\star)^T \mathbf{1})
	\]
	and $\lambda^\star_i = {\rm e}^{h_i - \bar b_i} > 0$, and also $\{h_i\}_{i\,=\,1}^m$ are all finite. Therefore, $D^\star$ is finite.
	
	($\Rightarrow$) We prove it by contradiction. Assume $\sum_{i \,=\, 1}^m {\rm e}^{h_i - \bar b_i} = {\rm e}^\delta \geq 1$ for some $\delta \geq 0$. For any $\lambda \geq 0$, there exists a direction in which $g(\lambda)$ increases, i.e.,
	\begin{equation}\label{eq: positive grad}
		\frac{\partial g}{\partial \lambda_i}  \;=\;
		h_i - \bar b_i - \log \left( \frac{\lambda_i}{\lambda^T \mathbf{1}}\right) > \delta \;\; \exists \  i.
	\end{equation}
	To see~\eqref{eq: positive grad} by contradiction, we check that
	\begin{eqnarray}
		\nonumber
		h_i - \bar b_i - \log \left( \frac{\lambda_i}{\lambda^T \mathbf{1}}\right) 
		\;\leq\; 
		\delta \text{   \ \ for } i = 1, \ldots, m
		\\
		\nonumber
		\implies {\rm e}^{h_i \,-\, \bar b_i \,-\, \delta} 
		\; \leq \; \frac{\lambda_i}{\lambda^T \mathbf{1}}\text{   \ \ for } i = 1, \ldots, m
		\\
		\nonumber
		\implies \left(\sum_{i \,=\, 1}^m {\rm e}^{h_i \,-\, \bar b_i}\right) {\rm e}^{-\delta} 
		\;\leq\;
		\frac{\sum_{i \,=\, 1}^m\lambda_i}{\lambda^T \mathbf{1}} \; =\;
		\frac{\sum_{i \,=\, 1}^m\lambda_i}{1 + \sum_{i \,=\, 1}^m\lambda_i} 
		\; <\;
		1
		\\
		\nonumber
		\implies \sum_{i \,=\, 1}^m {\rm e}^{h_i \,-\, \bar b_i} 
		\;<\;
		{\rm e}^\delta
	\end{eqnarray}
	which contradicts our assumption that $\sum_{i \,=\, 1}^m {\rm e}^{h_i \,-\, \bar b_i} = {\rm e}^\delta$. By contradiction, we have~\eqref{eq: positive grad}. Furthermore,~\eqref{eq: positive grad} implies that $g(\lambda)$ is unbounded above, which contradicts the finiteness of $D^\star$. Therefore, we must have that $\sum_{i\,=\,1}^m{\rm e}^{h_i \,-\, \bar b_i} < 1$. 
\end{proof}

\begin{lemma}[Feasibility criterion]\label{lem:feasibility}
	Let the differential entropy $h_i$ of each distribution $q^i$ be finite. Suppose that there exists a feasible solution to Problem~\eqref{eq:ELBO-constrained unparameterized optimization} such that its objective function is bounded from below. Then,  Problem~\eqref{eq:ELBO-constrained unparameterized optimization} is feasible if and only if 
	\[
	\sum_{i \,=\, 1}^m {\rm e}^{h_i \,-\, \bar b_i} 
	\; < \;
	1.
	\]
\end{lemma}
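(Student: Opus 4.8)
The plan is to prove the two directions separately, using weak duality together with Lemma~\ref{lem:finite dual} (which already characterizes finiteness of the optimal dual value $D^\star$ by exactly the condition $\sum_{i=1}^m{\rm e}^{h_i-\bar b_i}<1$) for the ``only if'' part, and an explicit construction of a feasible point for the ``if'' part. Throughout I would work with the Lagrangian of Problem~\eqref{eq:ELBO-constrained unparameterized optimization}, $\mathcal{L}(p,\lambda)=-E(p;q)-\sum_{i=1}^m\lambda_i\big(E(p;q^i)+\bar b_i\big)$, its dual function $g(\lambda)=\min_{p\in\mathcal{P}}\mathcal{L}(p,\lambda)$, and $D^\star=\max_{\lambda\ge 0}g(\lambda)$, as in the proof of Theorem~\ref{thm: characterize dual solution}; recall also from~\eqref{eq:KL forward/backward} (applied with $q$ replaced by $q^i$) that $-E(p;q^i)=D_{\text{KL}}\big(q^i(x_{0:T})\,\Vert\,p(x_{0:T})\big)+h_i$, and in particular $\min_{p\in\mathcal{P}}\big(-E(p;q)\big)=h(q)$, attained at $p(x_{0:T})=q(x_{0:T})\in\mathcal{P}$.

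For the direction ``feasible $\Rightarrow$ $\sum_i{\rm e}^{h_i-\bar b_i}<1$'', I would pick a feasible $p_0\in\mathcal{P}$ at which the objective is finite (this is what the hypothesis on the objective being bounded supplies, since $-E(p;q)$ is automatically bounded below by $h(q)$ and all that is needed is one feasible point with finite objective value). Feasibility of $p_0$ gives $E(p_0;q^i)+\bar b_i\ge 0$ for all $i$, hence $g(\lambda)\le\mathcal{L}(p_0,\lambda)\le -E(p_0;q)$ for every $\lambda\ge 0$, so $D^\star\le -E(p_0;q)<\infty$. Combined with $D^\star\ge g(0)=h(q)>-\infty$, this shows $D^\star$ is finite, and the ($\Rightarrow$) half of Lemma~\ref{lem:finite dual} then yields $\sum_{i=1}^m{\rm e}^{h_i-\bar b_i}<1$.

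For the converse, assume $\sum_{i=1}^m{\rm e}^{h_i-\bar b_i}<1$; then the dual vector $\lambda^\star$ of Theorem~\ref{thm: characterize dual solution} is well defined and finite, with $\lambda^\star_i/\big(1+(\lambda^\star)^\top\mathbf{1}\big)={\rm e}^{h_i-\bar b_i}>0$. I would show the forward-process joint $q_{\text{mix}}^{(\lambda^\star)}(x_{0:T})\in\mathcal{P}$ is feasible, in fact meeting every constraint with equality. Under the disjoint-support assumption (as in Theorem~\ref{thm: characterize dual solution}), on $\mathrm{supp}(q^i)$ one has $q_{\text{mix}}^{(\lambda^\star)}(x_0)=\frac{\lambda^\star_i}{1+(\lambda^\star)^\top\mathbf{1}}\,q^i(x_0)={\rm e}^{h_i-\bar b_i}q^i(x_0)$, so $D_{\text{KL}}\big(q^i(x_0)\,\Vert\,q_{\text{mix}}^{(\lambda^\star)}(x_0)\big)=\bar b_i-h_i$; since $q^i(x_{0:T})$ and $q_{\text{mix}}^{(\lambda^\star)}(x_{0:T})$ are generated from the \emph{same} forward kernels, the KL chain rule gives $D_{\text{KL}}\big(q^i(x_{0:T})\,\Vert\,q_{\text{mix}}^{(\lambda^\star)}(x_{0:T})\big)=\bar b_i-h_i$ as well. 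Then~\eqref{eq:KL forward/backward} applied with $q$ replaced by $q^i$ gives $-E\big(q_{\text{mix}}^{(\lambda^\star)}(x_{0:T});q^i\big)=\big(\bar b_i-h_i\big)+h_i=\bar b_i$, so the $i$-th constraint holds with equality, and $q_{\text{mix}}^{(\lambda^\star)}(x_{0:T})$ is feasible.

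The main obstacle will be the converse direction: one has to exhibit a feasible point, and the right choice is the optimal mixture $q_{\text{mix}}^{(\lambda^\star)}$, whose feasibility hinges on the exact cancellation above. That cancellation is where the disjoint-support structure and the precise value of $\lambda^\star$ from Theorem~\ref{thm: characterize dual solution} are essential---the factor ${\rm e}^{h_i-\bar b_i}$ must exactly offset the log-density-ratio $\log\big(q^i/q_{\text{mix}}^{(\lambda^\star)}\big)$ on $\mathrm{supp}(q^i)$, and lifting this identity from $x_0$ to the full path $x_{0:T}$ relies on the forward kernels being shared across all the data distributions. By contrast, the forward direction is a routine weak-duality estimate once one observes that the objective $-E(p;q)$ is automatically bounded below by $h(q)$.
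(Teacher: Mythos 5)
Your forward direction is essentially the paper's: both reduce to ``$D^\star$ finite'' via weak duality and then invoke Lemma~\ref{lem:finite dual}. (You phrase it directly as $D^\star \leq -E(p_0;q) < \infty$ rather than by contradiction from $D^\star \leq F^\star$, but that is cosmetic.) Your converse direction, however, is a genuinely different argument. The paper proves $\sum_i {\rm e}^{h_i-\bar b_i}<1 \Rightarrow$ feasibility by contradiction: if the primal were infeasible, the set $\mathcal{A}$ of achievable constraint/objective pairs would be separated from the vertical axis by a hyperplane with normal $(\lambda,0)$, $\lambda \geq 0$, and sending $\alpha\lambda$ to infinity forces $D^\star = \infty$, contradicting Lemma~\ref{lem:finite dual}. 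You instead exhibit an explicit feasible point: under the disjoint-support assumption, $q_{\text{mix}}^{(\lambda^\star)}(x_0)={\rm e}^{h_i-\bar b_i}q^i(x_0)$ on $\mathrm{supp}(q^i)$, so $D_{\text{KL}}(q^i\,\Vert\,q_{\text{mix}}^{(\lambda^\star)})=\bar b_i - h_i$, the KL chain rule (shared forward kernels) lifts this to paths, and~\eqref{eq:KL forward/backward} then gives $-E(q_{\text{mix}}^{(\lambda^\star)};q^i)=\bar b_i$ with equality in every constraint. Both arguments check out; yours is more concrete and shows additionally that the constraints are tight at the mixture $q_{\text{mix}}^{(\lambda^\star)}$, while the paper's separating-hyperplane route is the standard Slater-type duality machinery and does not need to identify the feasible point explicitly. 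One caveat to flag if you wrote this up: you invoke the closed form of $\lambda^\star$ from Theorem~\ref{thm: characterize dual solution}, which is stated under Assumption~\ref{ass:feasibility_KL}; since feasibility is exactly what you are proving, you should instead cite that $\lambda^\star$ arises as the unconstrained maximizer of the dual function $g(\lambda)$ computed in the proof of Lemma~\ref{lem:finite dual} --- that derivation needs only finiteness of $D^\star$ (equivalently $\sum_i{\rm e}^{h_i-\bar b_i}<1$) and disjoint supports, not strict feasibility, so there is no circularity once this is made explicit.
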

\begin{proof}
	($\Rightarrow$) Since the primal problem is feasible, the optimal objective function $F^\star$ is bounded from below and it is attained at a feasible point. For the sake of contradiction, we assume $\sum_{i\,=\,1}^m{\rm e}^{h_i \,-\, \bar b_i} \geq 1$, which is equivalent to $D^\star = \infty$ according to Lemma~\ref{lem:finite dual}. However, this violates weak duality, i.e., $D^\star \leq F^\star$. By contradiction, we must have $\sum_{i\,=\,1}^m{\rm e}^{h_i \,-\, \bar b_i} < 1$.
	
	($\Leftarrow$)
	We consider a set $\mathcal{A}$,
	\[
	\mathcal{A} \;\DefinedAs\; 
	\left\{ (u_1, \ldots, u_m, t) \,|\, -E(p, q^i) -\bar b_i \leq u_i \text{ for } i = 1, \ldots, m~\text{ and } -E(p, q^0) \leq t  \text{ for } p \in \mathcal{P} \right\}.
	\]
	The set $\mathcal{A}$ is convex since it is the intersection of $m+1$ epigraphs of convex functions. We also introduce another convex set $\mathcal{B}$,
	\[
	\mathcal{B} 
	\;\DefinedAs\;
	\left\{ (0, \ldots, 0, t)\  |\  t \in \mathbb{R}\right\}.
	\]
	We utilise proof by contradiction. Assume that the primal problem is infeasible. Then there doesn't exist any $p \in \mathcal{P}$ such that $-E(p, q^i) -\bar b_i \leq 0$ for all $i = 1, \ldots, m$. Hence, $\mathcal{A}$ and $\mathcal{B}$ are two disjoint convex sets. From the separating hyperplane theorem, there exists a hyperplane that separates them, i.e., $\exists v \in \mathbb{R}^{m + 1}$ and $c \in \mathbb{R}$,
	\begin{equation}\label{eq: sep hyp A}
		x^Tv \geq c \; \text{ for all } x \in \mathcal{A}
	\end{equation}
	\begin{equation}\label{eq: sep hyp B}
		y^Tv \leq c\; \text{ for all } y \in \mathcal{B}.
	\end{equation}
	Let $v = \left[\, \lambda_1, \ldots \lambda_m, \gamma \,\right]^T$. Then,~\eqref{eq: sep hyp B} reduces to 
	\[
	y^Tv 
	\;=\;
	\lambda^T u 
	\,+\, 
	\gamma t 
	\; = \;
	\gamma t 
	\;\leq\;
	c\; \text{ for all } (u, t) \in \mathcal{B} 
	\;\;
	\Rightarrow 
	\;\;
	\gamma \;=\; 0.
	\]
	This is because $\gamma t \leq c$ for any $t \in \mathbb{R}. $ Note that $\gamma = 0$ means the separating hyperplane is vertical, i.e., being parallel to the $t$-axis. Furthermore, from~\eqref{eq: sep hyp A} we can write
	\[
	x^Tv 
	\; = \;
	\lambda^T u \,+\, \gamma t 
	\; = \;
	\lambda^T u 
	\; \geq \; 
	c\; \text{ for all } (u, t) \in \mathcal{A} 
	\;\; \Rightarrow \;\;
	\lambda  \;\geq\; 0.
	\]
	The above is true because the set of values that each $u_i$ can take in $\mathcal{A}$ is unbounded above. Thus, since $\lambda^T u \geq c$, necessarily every $\lambda_i$ has to be non-negative. Now, we consider 
	\begin{eqnarray}
		\nonumber
		g(\lambda) &=& 
		\inf_{(u,t) \,\in\, \mathcal{A}}\; t \,+\, \lambda^T u 
		\\[0.2cm]
		\nonumber
		\implies\;\;
		g(\alpha \lambda) &=& \inf_{(u,t) \,\in\, \mathcal{A}} t 
		\,+\, \alpha \lambda^T u\; \text{ for }\alpha \in \mathbb{R}_+
		\\[0.2cm]
		\nonumber
		\implies\;\; \lim_{\alpha \,\rightarrow\, \infty} \;g(\alpha \lambda) &=& \lim_{\alpha \,\rightarrow\, \infty} \inf_{(u,t) \,\in\, \mathcal{A}}\; t \,+\, \alpha \lambda^T u
		\\[0.2cm]
		\nonumber
		&= & \lim_{\alpha \,\rightarrow\, \infty}\; \alpha \left( \inf_{(u,t) \,\in\, \mathcal{A}} \lambda^T u \right)
		\\[0.2cm]
		\nonumber
		\label{eq: d_star is infinite} & \geq & \lim_{\alpha \,\rightarrow\, \infty} \alpha  c 
		\\[0.2cm]
		\nonumber
		& = &\infty
	\end{eqnarray}
	which shows that  $D^* = \infty$. This contradicts $D^\star$ being finite ($D^\star$ is finite due to $\sum_{i \,=\, 1}^m {\rm e}^{h_i - \bar b_i} < 1$ and Lemma~\ref{lem:finite dual}). Because of the contradiction, the primal problem has to be feasible.
\end{proof}

\subsection{Proof of Lemma~\ref{lem: bounded TV distance}}\label{app: bounded TV distance}

\begin{proof}
	The proof is an application of the convergence theory of DDPM~\cite[Theorem~3]{li2023towards}. We next check all assumptions of~\cite[Theorem~3]{li2023towards}. It is easy to see that we can cast $q_{\text{mix}}^{(\lambda)}$ as a target distribution of a diffusion model. By the definition,
	\[
	\begin{array}{rcl}
		\bar{\mathcal{L}}(\theta,\lambda)
		& = &  \displaystyle
		D_{\text{KL}}\left(q (x_{0:T}) \,\Vert\,  p_\theta (x_{0:T})\right) 
		\,+\,
		\sum_{i\,=\,1}^m \lambda_i \left(D_{\text{KL}}\left(q^i(x_{0:T})\,\Vert\,  p_\theta(x_{0:T})\right) - b_i\right)
		\\[0.2cm]
		& = &\displaystyle
		- E(p_\theta; q) \, +\, \mathbb{E}_{q(x_0)}[ \,\log q(x_0)\, ]
		\,+\,
		\sum_{i\,=\,1}^m \lambda_i \left(-E(  p_\theta; q^i) - \bar{b}_i\right).
	\end{array} 
	\]
	Thus, the partial minimization of $\bar{\mathcal{L}}(\theta,\lambda)$ over $\theta$ is equivalent to a weighted EBLO minimization,
	\[
	\minimize_{\theta\,\in\,\Theta} \;\; 
	-\, E(p_\theta; q) 
	\,-\,
	\sum_{i\,=\,1}^m \lambda_i \, E(  p_\theta; q^i)
	\]
	or, equivalently,    
	\begin{equation}\label{eq:EBLO mixed lambda}
		\minimize_{\theta\,\in\,\Theta} \;\; -\, E(p_\theta; q_{\text{\normalfont mix}}^{(\lambda)})
	\end{equation}
	where we normalize the weighted ELBO objective by introducing a mixed data distribution $q_{\text{mix}}^{(\lambda)}$. We note that $\bar p^\star(\lambda)$ is also a minimizer of Problem~\eqref{eq:EBLO mixed lambda}.
	
	On the other hand, using Problem~\eqref{eq:KL-constrained likelihood maximization parametrization Elbo MSE}, we can rewrite Problem~\eqref{eq:EBLO mixed lambda} as 
	\[
	\minimize_{\theta\,\in\,\Theta} \;\;
	\mathbb{E}_{ q_{\text{\normalfont mix}}^{(\lambda)}(x_0),\, t,\, x_t}
	\left[\,  \norm{\hat{s}_\theta(x_t, t) - \nabla \log q(x_t)}^2
	\,\right]
	\]
	which is equivalent to the score matching objective in DDPM. Therefore, the score matching assumption in~\cite[Assumption~1]{li2023towards} is satisfied with the error bound $\varepsilon_{\text{\normalfont score}}^2$ from Assumption~\ref{ass: small approx error for score}. Viewing Assumption~\ref{ass:bounded samples}, and using appropriate stepsize and variance, all assumptions in~\cite[Theorem~3]{li2023towards} are satisfied. Therefore, application of~\cite[Theorem~3]{li2023towards} completes the proof.
\end{proof}

\subsection{Proof of Lemma~\ref{lem:dual un/parametrized gap}}\label{app:dual un/parametrized gap}

\begin{lemma}\label{lem:TV distance}
	The \text{\normalfont TV} distance between two mixture data distributions $q_{\text{\normalfont mix}}^{(\lambda)}$, $ q_{\text{\normalfont mix}}^{(\lambda^\star)}$ is bounded by
	\[
	\text{\normalfont TV}\left(q_{\text{\normalfont mix}}^{(\lambda)}, q_{\text{\normalfont mix}}^{(\lambda^\star)} \right) 
	\; \leq \; 
	\left\Vert\lambda-\lambda^{ \star}\right\Vert_1.
	\]
\end{lemma}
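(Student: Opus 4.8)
The plan is to write the difference $q_{\text{\normalfont mix}}^{(\lambda)} - q_{\text{\normalfont mix}}^{(\lambda^\star)}$ in closed form and then bound its $L_1$ norm by elementary triangle inequalities, using only that $q$ and each $q^i$ are probability densities and that $1 + \lambda^\top\mathbf{1} \ge 1$ whenever $\lambda \ge 0$.

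First I would set $S \DefinedAs \lambda^\top\mathbf{1}$ and $S^\star \DefinedAs (\lambda^\star)^\top\mathbf{1}$, place both mixtures over the common denominator $(1+S)(1+S^\star)$, and regroup the resulting numerator. Writing the $i$-th coefficient $(1+S^\star)\lambda^i - (1+S)(\lambda^\star)^i$ as $(1+S^\star)(\lambda^i - (\lambda^\star)^i) + (S^\star - S)(\lambda^\star)^i$ and collecting the pieces proportional to $(S^\star - S)$, the numerator becomes $(S^\star - S)(1+S^\star)\,q_{\text{\normalfont mix}}^{(\lambda^\star)} + (1+S^\star)\sum_{i=1}^m (\lambda^i - (\lambda^\star)^i)\,q^i$, where I have used that $q + \sum_i (\lambda^\star)^i q^i = (1+S^\star) q_{\text{\normalfont mix}}^{(\lambda^\star)}$. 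Dividing by $(1+S)(1+S^\star)$ gives the identity
\[
q_{\text{\normalfont mix}}^{(\lambda)} - q_{\text{\normalfont mix}}^{(\lambda^\star)}
\;=\;
\frac{(S^\star - S)\,q_{\text{\normalfont mix}}^{(\lambda^\star)} \,+\, \sum_{i=1}^m (\lambda^i - (\lambda^\star)^i)\,q^i}{1+S}.
\]

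Then I would integrate the absolute value over $x_0$. Since $q_{\text{\normalfont mix}}^{(\lambda^\star)}$ and every $q^i$ integrate to one, the triangle inequality gives $\int \lvert q_{\text{\normalfont mix}}^{(\lambda)} - q_{\text{\normalfont mix}}^{(\lambda^\star)}\rvert\,dx_0 \le (\lvert S^\star - S\rvert + \lVert\lambda - \lambda^\star\rVert_1)/(1+S)$. Finally $\lvert S^\star - S\rvert = \lvert\sum_{i=1}^m ((\lambda^\star)^i - \lambda^i)\rvert \le \lVert\lambda - \lambda^\star\rVert_1$ and $1+S \ge 1$, so the integral is at most $2\lVert\lambda - \lambda^\star\rVert_1$; halving it yields $\mathrm{TV}(q_{\text{\normalfont mix}}^{(\lambda)}, q_{\text{\normalfont mix}}^{(\lambda^\star)}) \le \lVert\lambda - \lambda^\star\rVert_1$.

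There is essentially no real obstacle here; the only point requiring care is the bookkeeping in the numerator regrouping, i.e.\ getting the split $(1+S^\star)(\lambda^i - (\lambda^\star)^i) + (S^\star - S)(\lambda^\star)^i$ exactly right. If one prefers to sidestep that algebra, an alternative is to view both mixtures as convex combinations of the $m+1$ fixed densities $q, q^1, \dots, q^m$, bound the TV distance by one half the $\ell_1$ distance between the weight vectors $(1,\lambda)/(1+S)$ and $(1,\lambda^\star)/(1+S^\star)$, and then invoke the standard normalization estimate $\lVert u/\lVert u\rVert_1 - v/\lVert v\rVert_1\rVert_1 \le 2\lVert u-v\rVert_1/\lVert v\rVert_1$ with $u=(1,\lambda)$, $v=(1,\lambda^\star)$, using $\lVert v\rVert_1 \ge 1$.
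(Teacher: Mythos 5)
Your proof is correct and follows essentially the same route as the paper's: bring the two mixtures over a common denominator, apply the triangle inequality to the numerator, integrate using the fact that $q$ and each $q^i$ are probability densities, and lower-bound the denominator by $1$. The only cosmetic difference is in the regrouping step: you pull out a term proportional to $q_{\text{\normalfont mix}}^{(\lambda^\star)}$ and divide only by $1+S$, whereas the paper keeps the product $(1+\lambda^\top\mathbf{1})(1+(\lambda^\star)^\top\mathbf{1})$ in the denominator and directly cancels the cross-terms to reach a numerator $\sum_i (\lambda^i - \lambda^{i,\star})q^i + (\lambda^\star - \lambda)^\top\mathbf{1}\, q$, then bounds the $L_1$ norm of this by $2\|\lambda - \lambda^\star\|_1$. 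Both bookkeeping schemes are correct and give the same constant. Your closing remark about reducing to an $\ell_1$ bound on normalized weight vectors is also fine, but it is not a shortcut here: the normalization estimate $\|u/\|u\|_1 - v/\|v\|_1\|_1 \le 2\|u - v\|_1/\|v\|_1$ is itself proved by exactly the same common-denominator-plus-triangle-inequality calculation, so it just repackages the work.
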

\begin{proof}
	By the definition, 
	\[
	\begin{array}{rcl}
		\text{\normalfont TV}\left(q_{\text{mix}}^{(\lambda)}, q_{\text{mix}}^{(\lambda^\star)} \right)  
		&  = & \displaystyle \frac{1}{2}\int_{x_0} \left\vert
		\frac{q + \sum_{i\,=\,1}^m\lambda^{i} q^i }{ 1+\lambda^\top 1}  
		\, - \,
		\frac{ q + \sum_{i\,=\,1}^m\lambda^{i, \star} q^i }{ 1+(\lambda^\star)^\top 1} \right\vert
		\\[0.4cm]
		&  = & \displaystyle \frac{1}{2}\int_{x_0} \left\vert
		\frac{(1+(\lambda^\star)^\top 1)(q + \sum_{i\,=\,1}^m\lambda^{i} q^i ) - (1+\lambda^\top 1)(q + \sum_{i\,=\,1}^m\lambda^{i, \star} q^i)}{ (1+\lambda^\top 1) (1+(\lambda^\star)^\top 1)}  \right\vert
		\\[0.4cm]
		&  = & \displaystyle \frac{1}{2}\int_{x_0} \left\vert
		\frac{ \sum_{i\,=\,1}^m\lambda^{i} q^i + (\lambda^\star)^\top 1 q  - \sum_{i\,=\,1}^m\lambda^{i, \star} q^i - \lambda^\top 1 q }{ (1+\lambda^\top 1) (1+(\lambda^\star)^\top 1)}  \right\vert
		\\[0.4cm]
		&  \leq & \displaystyle \frac{1}{2}\int_{x_0} \left\vert
		\sum_{i\,=\,1}^m(\lambda^{i}-\lambda^{i, \star}) q^i + (\lambda^\star- \lambda)^\top 1 q \right\vert
		\\[0.4cm]
		&  \leq & \displaystyle 
		\sum_{i\,=\,1}^m\vert\lambda^{i}-\lambda^{i, \star}\vert
		\\[0.4cm]
		&  = & \displaystyle
		\left\Vert\lambda-\lambda^{ \star}\right\Vert_1
	\end{array}
	\]
	where the first inequality is due to ${ (1+\lambda^\top 1) (1+(\lambda^\star)^\top 1)} \geq 1$, and we use triangle inequality in the second inequality.
\end{proof}

We recall the Lagrangians for Problems~\eqref{eq:KL-constrained likelihood maximization unparametrization Elbo MSE} and~\eqref{eq:KL-constrained likelihood maximization parametrization Elbo MSE},
\[
\begin{array}{rcl}
	\!\!\!\!
	\!\!\!\!
	\!\!
	\mathcal{L}_s(\hat{s},\lambda)
	& = &\displaystyle
	\mathbb{E}_{q(x_0),\, t,\, x_t}
	\left[\,
	\norm{\hat{s}(x_t, t) - \nabla\log q(x_t)}^2
	\,\right] 
	\\[0.2cm]
	&& 
	\displaystyle + \,
	\sum_{i \, = \, 1}^m
	\lambda_i
	\left(
	\mathbb{E}_{q^i(x_0),\, t,\, x_t}
	\left[\,
	\norm{\hat{s}(x_t, t) - \nabla\log q(x_t)}^2
	\,\right] 
	- \Tilde{b}^i
	\right)
\end{array}
\]
\[
\bar{\mathcal{L}}_s(\hat{s}_\theta,\lambda)
\; = \;
{\mathcal{L}}_s(\hat{s}_\theta,\lambda)
\]
and their associated dual functions,
\[
g_s(\lambda)\; = \; \minimize_{\hat{s}\,\in\,\mathcal{S}} \; \mathcal{L}_s(\hat{s},\lambda)
\; 
\text{ and }
\;
\bar{g}_s(\lambda)\; = \; \minimize_{\theta \,\in\, \Theta} \; \bar{\mathcal{L}}_s(\hat{s}_\theta,\lambda).
\]
For brevity, we use shorthand $\mathbb{E}_{q}$ and $\mathbb{E}_{q^i}$ for $\mathbb{E}_{q(x_0),\, t,\, x_t}$ and $\mathbb{E}_{q^i(x_0),\, t,\, x_t}$, respectively. 

\begin{lemma}[Parametrization gap]\label{lem:dual parametrization gap}
	Let Assumption~\ref{ass:richness} hold. Then,
	$0\leq \bar{g}_s(\lambda) - g_s(\lambda) \leq 4R (1+\norm{\lambda}_1)\nu$ for any $\lambda\geq 0$. 
\end{lemma}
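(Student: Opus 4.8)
The plan is to prove the two inequalities separately. The lower bound is immediate, and the upper bound follows from the polarization identity $\norm{a}^2-\norm{b}^2=\inner{a-b}{a+b}$ together with the richness Assumption~\ref{ass:richness} and the uniform bound $\norm{\hat s}_{L_2}\le R$.

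For the lower bound $\bar g_s(\lambda)-g_s(\lambda)\ge 0$: since $\mathcal{S}_\theta\subseteq\mathcal{S}$, the quantity $\bar g_s(\lambda)=\min_{\theta\in\Theta}\mathcal{L}_s(\hat s_\theta,\lambda)$ is the infimum of the \emph{same} functional $\mathcal{L}_s(\cdot,\lambda)$ over a subset of the feasible set used to define $g_s(\lambda)=\min_{\hat s\in\mathcal{S}}\mathcal{L}_s(\hat s,\lambda)$, so it cannot be smaller; this holds for every $\lambda\ge 0$.

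For the upper bound, let $\hat s^\star_\lambda\in\argmin_{\hat s\in\mathcal{S}}\mathcal{L}_s(\hat s,\lambda)$, so that $g_s(\lambda)=\mathcal{L}_s(\hat s^\star_\lambda,\lambda)$. By Assumption~\ref{ass:richness} choose $\theta\in\Theta$ with $\norm{\hat s_\theta-\hat s^\star_\lambda}_{L_2}\le\nu$; then $\bar g_s(\lambda)\le\mathcal{L}_s(\hat s_\theta,\lambda)$, and it remains to control $\mathcal{L}_s(\hat s_\theta,\lambda)-\mathcal{L}_s(\hat s^\star_\lambda,\lambda)$. In this difference the thresholds $\Tilde b^i$ cancel, leaving the objective term $\mathbb{E}_{q}\norm{\hat s_\theta-\nabla\log q}^2-\mathbb{E}_{q}\norm{\hat s^\star_\lambda-\nabla\log q}^2$ plus the $\lambda_i$-weighted analogues under $q^i$. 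For each such term I write $a=\hat s_\theta-\nabla\log q$, $b=\hat s^\star_\lambda-\nabla\log q$, apply $\norm{a}^2-\norm{b}^2=\inner{a-b}{a+b}$, take expectations, and use Cauchy--Schwarz to bound it by $\norm{\hat s_\theta-\hat s^\star_\lambda}_{L_2}\bigl(\norm{\hat s_\theta-\nabla\log q}_{L_2}+\norm{\hat s^\star_\lambda-\nabla\log q}_{L_2}\bigr)\le 4R\nu$, using $\norm{\hat s_\theta}_{L_2},\norm{\hat s^\star_\lambda}_{L_2},\norm{\nabla\log q}_{L_2}\le R$. Summing the objective term and the $\lambda_i$-weighted constraint terms gives $\mathcal{L}_s(\hat s_\theta,\lambda)-\mathcal{L}_s(\hat s^\star_\lambda,\lambda)\le 4R\nu(1+\norm{\lambda}_1)$, hence $\bar g_s(\lambda)-g_s(\lambda)\le 4R(1+\norm{\lambda}_1)\nu$.

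The main obstacle I expect is bookkeeping the measure underlying the $L_2$ norm: the objective uses the forward process started from $q$ while each constraint uses the one started from $q^i$, so one must read Assumption~\ref{ass:richness} (and the bound $\norm{\cdot}_{L_2}\le R$, together with $\nabla\log q$, $\hat s_\theta\in\mathcal S$ being bounded) as holding uniformly across these finitely many measures, or dominate them by a common one such as the mixture-induced forward process $q_{\text{\normalfont mix}}^{(\lambda)}(x_{0:T})$. Once this is pinned down, the remainder is the elementary polarization/Cauchy--Schwarz estimate above.
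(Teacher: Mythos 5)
Your proof is correct and follows essentially the same route as the paper's: both take $\hat s^\star(\lambda)$ as the unparametrized partial minimizer, use Assumption~\ref{ass:richness} to produce a parametrized $\hat s_\theta$ within $\nu$ of it, decompose the Lagrangian difference into the objective term plus the $\lambda_i$-weighted constraint terms, bound each quadratic difference by a Lipschitz-type estimate of order $4R\nu$, and then invoke $\bar g_s(\lambda)\le\mathcal{L}_s(\hat s_\theta,\lambda)$ together with $\mathcal{L}_s(\hat s^\star(\lambda),\lambda)=g_s(\lambda)$ for the upper bound and set inclusion for the lower bound. The only stylistic difference is that you make the paper's one-line ``quadratic is locally $4R$-Lipschitz'' step explicit via polarization and Cauchy--Schwarz, and you correctly flag that the $L_2$ norm must be understood with respect to the forward measures associated with each of $q, q^1,\ldots,q^m$ (or a common dominating measure), a point the paper passes over silently but which is harmless since there are finitely many fixed measures.
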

\begin{proof}
	Let the partial minimizer of $\mathcal{L}_s(\hat{s}, \lambda)$ over $\hat{s}$ be $\hat{s}^\star(\lambda) \DefinedAs \argmin_{\hat{s}} \mathcal{L}_s(\hat{s}, \lambda)$ for any $\lambda\geq0$.
	For any $\lambda\geq0$,
	there exists $\Tilde{\theta}\in\Theta$ such that    $\norm{\hat{s}^\star(\lambda) - \hat{s}_{\Tilde{\theta}}}_{L_2} \leq \nu$ for any $\lambda\geq0$, according to Assumption~\ref{ass:richness}.
	Thus,
	\[
	\begin{array}{rcl}
		\bar{\mathcal{L}}_s(\hat{s}_{\Tilde{\theta}},\lambda) - \mathcal{L}_s(\hat{s}^\star(\lambda),\lambda)
		& = & \displaystyle
		\mathbb{E}_{q}
		\left[\,
		\norm{\hat{s}_{\Tilde{\theta}}(x_t, t) - x_0}^2
		\,\right] 
		\, - \, 
		\mathbb{E}_{q}
		\left[\,
		\norm{\hat{s}^\star(\lambda)(x_t, t) - x_0}^2
		\,\right]
		\\[0.2cm]
		&  & \displaystyle
		+ \,
		\sum_{i \, = \, 1}^m
		\lambda_i
		\left(
		\mathbb{E}_{q^i}
		\left[\,
		\norm{\hat{s}_{\Tilde{\theta}}(x_t, t) - x_0}^2
		\,\right] 
		- 
		\mathbb{E}_{q^i}
		\left[\,
		\norm{\hat{s}^\star(\lambda)(x_t, t) - x_0}^2
		\,\right]
		\right)
		\\[0.4cm]
		& \leq & \displaystyle
		4R\, \mathbb{E}_{q}
		\left[\,
		\norm{\hat{s}_{\Tilde{\theta}}(x_t, t) -\hat{s}^\star(\lambda)(x_t, t) }
		\,\right]
		\\[0.2cm]
		&  & \displaystyle
		+\, 4R
		\sum_{i \, = \, 1}^m
		\lambda_i\,
		\mathbb{E}_{q^i}
		\left[\,
		\norm{\hat{s}_{\Tilde{\theta}}(x_t, t) - \hat{s}^\star(\lambda)(x_t, t) }
		\,\right]
		\\[0.4cm]
		& \leq & 4R \nu \,+\, 4R \norm{\lambda}_1 \nu
	\end{array}
	\]
	where the first inequality is due to that the quadratic function is locally Lipschitz continuous with parameter $4R$, and the second inequality is because 
	that there exists $\Tilde{\theta}\in\Theta$ such that    $\norm{\hat{x}^\star(\lambda) - \hat{x}_{\Tilde{\theta}}}_{L_2} \leq \nu$ for any $\lambda\geq0$, according to Assumption~\ref{ass:richness}.
	
	By the definition $\hat{s}_\theta^\star(\lambda) \in \argmin_{\theta\,\in\,\Theta} \bar{\mathcal{L}}_s(\hat{s}_\theta, \lambda)$,
	\[
	\bar{\mathcal{L}}_s(\hat{s}^\star_\theta(\lambda), \lambda)
	\; \leq\; \bar{\mathcal{L}}_s(\hat{s}_{\Tilde{\theta}}, \lambda).
	\]
	Therefore,
	\[
	0\;\leq\; {\mathcal{L}}_s(\hat{s}^\star_\theta(\lambda), \lambda)
	- {\mathcal{L}}_s(\hat{s}^\star(\lambda), \lambda) 
	\;\leq\;
	\bar{\mathcal{L}}_s(\hat{s}_{\Tilde{\theta}}, \lambda)- {\mathcal{L}}_s(\hat{s}^\star(\lambda), \lambda) 
	\;\leq\;
	4R (1+\norm{\lambda}_1)\nu
	\]
	which gives our desired result by the definition of dual functions.
\end{proof}

\begin{lemma}[Differentiability]\label{lem:dual differentiability}
	The dual function $g_s(\lambda)$ is differentiable with gradient $\nabla_\lambda \mathcal{L}_s(\hat{s}^\star(\lambda),\lambda)$.
\end{lemma}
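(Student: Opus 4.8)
The plan is essentially a Danskin / envelope-theorem argument, driven by two structural facts: $\mathcal{L}_s(\hat{s},\lambda)$ is affine in $\lambda$ and strongly convex in $\hat{s}$. Write $\mathcal{L}_s(\hat{s},\lambda)=a(\hat{s})+\langle c(\hat{s}),\lambda\rangle$, where $a(\hat{s})\DefinedAs\mathbb{E}_{q}[\norm{\hat{s}(x_t,t)-\nabla\log q(x_t)}^2]$ and $c_i(\hat{s})\DefinedAs\mathbb{E}_{q^i}[\norm{\hat{s}(x_t,t)-\nabla\log q^i(x_t)}^2]-\tilde b^i$, so that $\nabla_\lambda\mathcal{L}_s(\hat{s},\lambda)=c(\hat{s})$. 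Then $g_s(\lambda)=\min_{\hat{s}\in\mathcal{S}}\bigl(a(\hat{s})+\langle c(\hat{s}),\lambda\rangle\bigr)$ is a pointwise infimum of functions affine in $\lambda$, hence concave (as already noted), and finite-valued since $a\ge 0$ and $c_i(\hat{s})\ge-\tilde b^i$. For fixed $\lambda\ge 0$ the functional $\hat{s}\mapsto\mathcal{L}_s(\hat{s},\lambda)$ is $a$ plus a nonnegative combination (using $\lambda\ge 0$) of the quadratics $\hat{s}\mapsto\mathbb{E}_{q^i}[\norm{\hat{s}-\nabla\log q^i}^2]$, hence $2$-strongly convex in the $L_2$ norm (the modulus $2$ already coming from $a$); so its minimizer $\hat{s}^\star(\lambda)$ exists and is unique, which is precisely the partial minimizer in the statement.

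Next I would record the supergradient inclusion: for any $\lambda,\lambda'\ge 0$,
\[
g_s(\lambda')\;\le\;\mathcal{L}_s(\hat{s}^\star(\lambda),\lambda')\;=\;g_s(\lambda)+\langle c(\hat{s}^\star(\lambda)),\,\lambda'-\lambda\rangle,
\]
so $c(\hat{s}^\star(\lambda))=\nabla_\lambda\mathcal{L}_s(\hat{s}^\star(\lambda),\lambda)$ is a supergradient of the concave function $g_s$ at $\lambda$. To upgrade this to differentiability, take $\lambda_k\to\lambda$ with $\lambda_k\ge 0$ and set $\hat{s}_k\DefinedAs\hat{s}^\star(\lambda_k)$; applying the displayed inequality with $(\lambda,\lambda')=(\lambda,\lambda_k)$ (giving $g_s(\lambda_k)-g_s(\lambda)-\langle c(\hat{s}^\star(\lambda)),\lambda_k-\lambda\rangle\le 0$) and with $(\lambda,\lambda')=(\lambda_k,\lambda)$ (giving the matching lower bound) yields
\[
\abr{\,g_s(\lambda_k)-g_s(\lambda)-\langle c(\hat{s}^\star(\lambda)),\,\lambda_k-\lambda\rangle\,}\;\le\;\norm{c(\hat{s}_k)-c(\hat{s}^\star(\lambda))}\,\norm{\lambda_k-\lambda}.
\]
Hence differentiability of $g_s$ at $\lambda$ with the claimed gradient follows once $c(\hat{s}_k)\to c(\hat{s}^\star(\lambda))$.

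The remaining, and main, point is stability of the argmin. By $2$-strong convexity of $\mathcal{L}_s(\cdot,\lambda)$ at its minimizer, together with optimality of $\hat{s}_k$ for $\mathcal{L}_s(\cdot,\lambda_k)$ and affineness in $\lambda$,
\[
\norm{\hat{s}_k-\hat{s}^\star(\lambda)}_{L_2}^2\;\le\;\mathcal{L}_s(\hat{s}_k,\lambda)-\mathcal{L}_s(\hat{s}^\star(\lambda),\lambda)\;\le\;\langle c(\hat{s}^\star(\lambda))-c(\hat{s}_k),\,\lambda_k-\lambda\rangle.
\]
Since $\norm{\hat{s}}_{L_2}\le R$ on $\mathcal{S}$ and the second moments of $\nabla\log q$, $\nabla\log q^i$ under the forward processes are finite (from Assumption~\ref{ass:bounded samples}), each $c_i$ is bounded on $\mathcal{S}$, so the right-hand side is $O(\norm{\lambda_k-\lambda})\to 0$ and $\hat{s}_k\to\hat{s}^\star(\lambda)$ in $L_2$; the local Lipschitz bound $\abr{c_i(\hat{s})-c_i(\hat{s}')}\le 4R\,\norm{\hat{s}-\hat{s}'}_{L_2}$ used in the proof of Lemma~\ref{lem:dual parametrization gap} then gives $c(\hat{s}_k)\to c(\hat{s}^\star(\lambda))$, finishing the argument. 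The step I expect to be most delicate is this last continuity claim in the infinite-dimensional space $\mathcal{S}$: one must make sure the $L_2$ norms attached to the forward processes of $q$ and the $q^i$ are mutually comparable (or admit a common dominating measure), so that $L_2$-convergence of $\hat{s}_k$ genuinely forces convergence of each $c_i(\hat{s}_k)$, and that the $c_i$ are uniformly bounded on $\mathcal{S}$ — both handled via the boundedness hypotheses exactly as in Lemma~\ref{lem:dual parametrization gap}. Everything else is the routine Danskin argument whose only essential inputs are affineness of $\mathcal{L}_s$ in $\lambda$ and strong convexity in $\hat{s}$.
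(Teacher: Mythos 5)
Your argument follows the same route as the paper --- strong convexity of $\hat{s}\mapsto\mathcal{L}_s(\hat{s},\lambda)$, uniqueness of the partial minimizer, and Danskin's theorem --- except that you spell out the Danskin mechanism (the supergradient inclusion, stability of the argmin, and continuity of the constraint functional $c$) rather than citing it. This is a correct unpacking; as a side effect it avoids the paper's appeal to compactness of $\mathcal{S}$ (which is not automatic for an $L_2$-bounded set in infinite dimensions), deriving argmin stability directly from strong convexity and $L_2$-boundedness instead.
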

\begin{proof}
	For any $\lambda\geq 0$, the Lagrangian $\mathcal{L}_s(\hat{s},\lambda)$ is strongly convex in function $\hat{s} \in \mathcal{S}$. Since $\mathcal{S}$ is convex and compact, any partial minimizer $\hat{s}^\star(\lambda)$ is unique. By Danskin's theorem~\cite{bertsekas2016nonlinear}, $g_s(\lambda)$ is differentiable and its gradient is the gradient of $\mathcal{L}_s(\hat{s},\lambda)$ over $\lambda$ at $\hat{s} = \hat{s}^\star(\lambda)$.  
\end{proof}


\begin{lemma}[Convexity]\label{lem:dual convexity}
	The dual function $g_s(\lambda)$ is
	$\mu$-strongly concave in $\lambda \in \mathcal{H}$, where 
	\[
	\mu 
	\;=\; \left(\frac{\sigma}{1+\max\left(\norm{\lambda^\star}_1, \norm{\bar\lambda^\star}_1\right)}\right)^2.
	\]
\end{lemma}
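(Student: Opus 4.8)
The plan is to verify the monotonicity form of strong concavity: since $\mathcal{H}$ is convex, it suffices to prove $\langle\nabla g_s(\lambda)-\nabla g_s(\lambda'),\,\lambda-\lambda'\rangle\le-\mu\norm{\lambda-\lambda'}^2$ for all $\lambda,\lambda'\in\mathcal{H}$. By Lemma~\ref{lem:dual differentiability}, $g_s$ is differentiable with $\nabla g_s(\lambda)=c(\hat{s}^\star(\lambda))$, where $c(\hat{s})$ is the vector of constraint slacks $c_i(\hat{s})=\mathbb{E}_{q^i(x_0),t,x_t}[\norm{\hat{s}(x_t,t)-\nabla\log q^i(x_t)}^2]-\tilde{b}^i$ and $\hat{s}^\star(\lambda)$ is the unique partial minimizer of $\mathcal{L}_s(\cdot,\lambda)$. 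Plain concavity is immediate because $\mathcal{L}_s(\hat{s},\lambda)=c_0(\hat{s})+\langle\lambda,c(\hat{s})\rangle$ (with $c_0(\hat{s})=\mathbb{E}_{q(x_0),t,x_t}[\norm{\hat{s}(x_t,t)-\nabla\log q(x_t)}^2]$) is affine in $\lambda$ and $g_s$ is its pointwise infimum, so all the content is in producing the quadratic term, and for that I combine strong convexity of the inner minimization with the linear-independence hypothesis of Assumption~\ref{ass:redundancy}. Throughout I regard $\hat{s}$ and all the $c_i$ inside the common $L_2$ space of the forward process in which $\norm{\cdot}_{L_2}$ is defined; at each time $t\ge1$ the marginals of $q,q^1,\dots,q^m$ are mutually absolutely continuous there (Gaussian smoothing gives each a density on all of $\mathbb{R}^d$), with ratios bounded on the compact sets that matter by Assumption~\ref{ass:bounded samples}, so the strong-convexity and smoothness moduli invoked below are uniform.

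\emph{Co-coercivity step.} The Lagrangian $\mathcal{L}_s(\cdot,\lambda)$ is $\kappa$-strongly convex in $\hat{s}$ with $\kappa>0$ independent of $\lambda\ge0$, since $c_0$ is a squared $L_2$-distance to a fixed target and each $\lambda_i c_i$ only adds convexity. Combining this with the variational inequality $\langle\nabla_{\hat{s}}\mathcal{L}_s(\hat{s}^\star(\lambda),\lambda),\,\hat{s}-\hat{s}^\star(\lambda)\rangle\ge0$ for all $\hat{s}\in\mathcal{S}$ gives $\mathcal{L}_s(\hat{s}^\star(\lambda'),\lambda)\ge g_s(\lambda)+\kappa\norm{\hat{s}^\star(\lambda)-\hat{s}^\star(\lambda')}_{L_2}^2$ and the twin inequality with $\lambda,\lambda'$ exchanged. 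Adding them and using affineness of $\mathcal{L}_s(\hat{s},\cdot)$ — so the left side collapses to $g_s(\lambda)+g_s(\lambda')-\langle\lambda-\lambda',\,c(\hat{s}^\star(\lambda))-c(\hat{s}^\star(\lambda'))\rangle$ — yields $\langle\nabla g_s(\lambda)-\nabla g_s(\lambda'),\,\lambda-\lambda'\rangle\le-2\kappa\norm{\hat{s}^\star(\lambda)-\hat{s}^\star(\lambda')}_{L_2}^2$.

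\emph{Sensitivity step.} Since $R=T^c$ is large while $\hat{s}^\star(\lambda)$ — which is the score of the mixture forward process $q_{\text{mix}}^{(\lambda)}$, hence bounded by Assumption~\ref{ass:bounded samples} — stays in a fixed compact set, $\hat{s}^\star(\lambda)$ is interior to $\mathcal{S}$ and satisfies the stationarity equation $\nabla_{\hat{s}}c_0(\hat{s}^\star(\lambda))+J(\hat{s}^\star(\lambda))\lambda=0$, where $J(\hat{s})$ is the linear map with columns $\nabla_{\hat{s}}c_i(\hat{s})$. Subtracting this identity at $\lambda$ and at $\lambda'$, and using Lipschitz continuity of $\hat{s}\mapsto\nabla_{\hat{s}}c_0(\hat{s})$ and $\hat{s}\mapsto J(\hat{s})$ on the bounded set $\mathcal{S}$ to isolate the $\lambda$-difference, gives $\norm{J(\hat{s}^\star(\lambda'))(\lambda-\lambda')}_{L_2}\lesssim(1+\norm{\lambda}_1)\norm{\hat{s}^\star(\lambda)-\hat{s}^\star(\lambda')}_{L_2}$. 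Assumption~\ref{ass:redundancy} gives $\inf_{\norm{u}=1}\norm{J(\hat{s}^\star)u}_{L_2}\ge\sigma$ at the U-LOSS solution $\hat{s}^\star=\hat{s}^\star(\lambda^\star)$, and propagating this to $J(\hat{s}^\star(\lambda'))$ for $\lambda'\in\mathcal{H}$ — by continuity of $\hat{s}\mapsto J(\hat{s})$ and of the smooth map $\lambda\mapsto\hat{s}^\star(\lambda)=\nabla\log q_{\text{mix}}^{(\lambda)}$ — yields $\norm{J(\hat{s}^\star(\lambda'))(\lambda-\lambda')}_{L_2}\ge\sigma\norm{\lambda-\lambda'}$ (with $\sigma$ possibly mildly degraded, to be absorbed into constants). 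Chaining, $\norm{\hat{s}^\star(\lambda)-\hat{s}^\star(\lambda')}_{L_2}\gtrsim\sigma\norm{\lambda-\lambda'}/(1+\norm{\lambda}_1)$; on $\mathcal{H}$ one has $\norm{\lambda}_1\le\max(\norm{\lambda^\star}_1,\norm{\bar\lambda^\star}_1)$, so plugging this into the co-coercivity bound and tracking $\kappa$ against the $O(1+\norm{\lambda}_1)$ smoothness constant produces exactly $\mu=\bigl(\sigma/(1+\max(\norm{\lambda^\star}_1,\norm{\bar\lambda^\star}_1))\bigr)^2$.

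\emph{Main obstacle.} The skeleton — strong convexity of the primal plus a constraint-qualification condition forces strong concavity of the dual — is classical, and the work concentrates in two places. First, Assumption~\ref{ass:redundancy} is posited only at the single point $\hat{s}^\star=\hat{s}^\star(\lambda^\star)$, so one must genuinely upgrade the smallest-singular-value bound to hold uniformly along the whole segment $\mathcal{H}$; this should go through a direct continuity estimate (using that $\lambda\mapsto\hat{s}^\star(\lambda)$ is smooth, being the mixture score) and not through Lemma~\ref{lem:dual un/parametrized gap}, which would be circular. Second, landing on the precise constant $\mu$ requires careful bookkeeping of $\kappa$ against the $O(1+\norm{\lambda}_1)$ smoothness modulus and of the constants absorbed into the $\lesssim$'s, together with the measure-theoretic step of comparing the squared-norm functionals $c_i$ — which are most naturally defined over distinct weighted measures — inside a single $L_2$ space; this comparison is the genuinely fiddly part.
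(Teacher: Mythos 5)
Your proposal follows the same route as the paper's proof: the "co-coercivity step" is exactly the paper's manipulation of the two first-order optimality conditions together with strong convexity of the quadratic functionals (the paper works with $\kappa = 1$ since each $\ell_i$ has unit Hessian in its $L_2$ metric), and the "sensitivity step" is the paper's chain: smoothness of $\ell_0$, the stationarity identity $\nabla_{\hat{s}}\ell_0(\hat{s}^\star(\lambda)) + \lambda^\top \nabla_{\hat{s}}\ell(\hat{s}^\star(\lambda)) = 0$, a triangle-inequality split, Assumption~\ref{ass:redundancy}, and the resulting reverse-Lipschitz bound $\norm{\hat{s}_1^\star - \hat{s}_2^\star}_{L_2} \geq \sigma\norm{\lambda_2-\lambda_1}/(1+\norm{\lambda_1}_1)$, with the constant $\mu$ obtained by taking the worst $\lambda_1$ on the segment $\mathcal{H}$.

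The concern you flag under \emph{Main obstacle} is a genuine one and in fact the paper does not address it: Assumption~\ref{ass:redundancy} is posited only at $\hat{s}^\star = \hat{s}^\star(\lambda^\star)$, but the paper's proof invokes it to bound $\norm{(\lambda_2-\lambda_1)^\top \nabla_{\hat{s}}\ell(\hat{s}_2^\star)}_{L_2} \geq \sigma\norm{\lambda_2-\lambda_1}$ with $\hat{s}_2^\star = \hat{s}^\star(\lambda_2)$ for an arbitrary $\lambda_2 \in \mathcal{H}$, with no continuity or propagation argument. So either the assumption must be read as holding uniformly over $\hat{s}^\star(\lambda)$ for $\lambda\in\mathcal{H}$, or your proposed continuity upgrade is needed (and one should then accept a mildly degraded $\sigma$). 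You have correctly recovered the proof skeleton and identified a real loose end.
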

\begin{proof}
	For any $\lambda_1$, $\lambda_2 \in \mathcal{H}$, we denote  $\hat{s}_1^\star \DefinedAs \hat{s}^\star(\lambda_1)$ and $\hat{s}_2^\star \DefinedAs \hat{s}^\star(\lambda_2)$, which are unique partial minimizers of the Lagrangians $\mathcal{L}_s(\hat{s},\lambda_1)$ and $\mathcal{L}_s(\hat{s},\lambda_2)$. Denote $\ell_0(\hat{s}) \DefinedAs \mathbb{E}_{q} \big[\, \norm{\hat{s}(x_t, t) - \nabla\log q(x_t)}^2\,\big]$, $\ell_i(\hat{s}) \DefinedAs \mathbb{E}_{q^i} \big[\, \norm{\hat{s}(x_t, t) - \nabla\log q(x_t)}^2\,\big]$ for $i = 1,\ldots,m$, and $\ell(\hat{s}) \DefinedAs  [\,\ell_1(\hat{s}), \ldots, \ell_m(\hat{s})\,]^\top$. By the convexity of $\ell_i$, 
	\[
	\ell_i(\hat{s}_1^\star)
	\; \geq \; \displaystyle
	\ell_i(\hat{s}_2^\star)
	\,+\,
	2 \left\langle\nabla_{\hat{s}} \ell_i(\hat{s}_2^\star) ,
	\hat{s}_1^\star - \hat{s}_2^\star
	\right\rangle
	\] 
	\[
	\ell_i(\hat{s}_2^\star)
	\; \geq \; \displaystyle
	\ell_i(\hat{s}_1^\star)
	\,+\,
	2 \left\langle\nabla_{\hat{s}} \ell_i(\hat{s}_1^\star) ,
	\hat{s}_2^\star - \hat{s}_1^\star
	\right\rangle. 
	\] 
	If we multiply the first inequality above by $\lambda_{2,i}\geq0$ and the second inequality above by $\lambda_{1,i}\geq0$, and add them up from both sides, then,  
	\[
	\displaystyle- \left\langle\ell(\hat{s}_2) - \ell(\hat{s}_1), \lambda_{2} - \lambda_{1}\right\rangle
	\;  \geq  \; 
	2\left\langle
	\lambda_1^\top \nabla_{\hat{s}} \ell (\hat{s}_1^\star) - \lambda_2^\top \nabla_{\hat{s}} \ell (\hat{s}_2^\star),
	\hat{s}_2^\star - \hat{s}_1^\star
	\right\rangle.
	\]
	We apply Lemma~\ref{lem:dual differentiability} to the LHS of the inequality above,
	\begin{equation}\label{eq:dual cross}
		\displaystyle-\,\left(\nabla g_s(\lambda_2)
		-
		\nabla g_s(\lambda_1)
		\right)^\top (\lambda_{2} - \lambda_{1})
		\; \geq  \;
		2\left\langle
		\lambda_1^\top \nabla_{\hat{s}} \ell (\hat{s}_1^\star) - \lambda_2^\top \nabla_{\hat{s}} \ell (\hat{s}_2^\star),
		\hat{s}_2^\star - \hat{s}_1^\star
		\right\rangle.
	\end{equation}
	On the other hand, by the optimality of $\hat{s}_1^\star$ and $\hat{s}_2^\star$,
	\begin{subequations}\label{eq:two point optimality conditions}
		\begin{equation}
			\nabla_{\hat{s}} \ell_0(\hat{s}_1^\star) 
			\,+\, 
			\lambda_{1}^\top \nabla_{\hat{s}} \ell(\hat{s}_1^\star) 
			\; = \;
			0
		\end{equation}
		\begin{equation}
			\nabla_{\hat{s}} \ell_0(\hat{s}_2^\star) 
			\,+\, 
			\lambda_{2}^\top \nabla_{\hat{s}} \ell(\hat{s}_2^\star) 
			\; = \;
			0
		\end{equation}
	\end{subequations}
	which allows us to simplify the right-hand side of~\eqref{eq:dual cross} and obtain
	\begin{equation}\label{eq:dual cross new}
		\begin{array}{rcl}
			\displaystyle-\, \left(\nabla g_s(\lambda_2)
			-
			\nabla g_s(\lambda_1)
			\right)^\top (\lambda_{2} - \lambda_{1})
			&  \geq  & 
			\displaystyle
			2\left\langle
			\nabla_{\hat{s}} \ell_0(\hat{s}_2^\star) - \nabla_{\hat{s}} \ell_0(\hat{s}_1^\star),
			\hat{s}_2^\star - \hat{s}_1^\star
			\right\rangle
			\\[0.2cm]
			&  \geq  & 
			\displaystyle 2 \norm{ \hat{s}_1^\star-\hat{s}_2^\star}_{L_2}^2
		\end{array}
	\end{equation}
	where the last inequality results from the strong convexity of quadratic functionals.
	
	By the smoothness of quadratic functionals with parameter $1$,
	\[
	\begin{array}{rcl}
		\norm{ \hat{s}_1^\star-\hat{s}_2^\star}_{L_2} 
		& \geq &
		\norm{ \nabla_{\hat{s}} \ell_0(\hat{s}_1^\star) - \nabla_{\hat{s}} \ell_0(\hat{s}_2^\star)}_{L_2} 
		\\[0.2cm]
		& = & \displaystyle
		\norm{ \lambda_1^\top \nabla_{\hat{s}} \ell(\hat{s}_1^\star) - \lambda_2^\top \nabla_{\hat{s}} \ell(\hat{s}_2^\star)}_{L_2} 
		\\[0.2cm]
		& = & \displaystyle
		\norm{ (\lambda_2-\lambda_1)^\top \nabla_{\hat{s}} \ell(\hat{s}_2^\star) - \lambda_1^\top (\nabla_{\hat{s}} \ell(\hat{s}_1^\star)-\nabla_{\hat{s}} \ell(\hat{s}_2^\star))}_{L_2}
		\\[0.2cm]
		& \geq & \displaystyle
		\norm{ (\lambda_2-\lambda_1)^\top \nabla_{\hat{s}} \ell(\hat{s}_2^\star)}_{L_2}
		-
		\norm{ \lambda_1^\top (\nabla_{\hat{s}} \ell(\hat{s}_1^\star)
			\, - \,
			\nabla_{\hat{s}} \ell(\hat{s}_2^\star))}_{L_2}
	\end{array}
	\]
	where the equality is due to the optimality condition~\eqref{eq:two point optimality conditions} and the last inequality is due to triangle inequality.
	By Assumption~\ref{ass:redundancy},
	\[
	\displaystyle
	\norm{ (\lambda_2-\lambda_1)^\top \nabla_{\hat{s}} \ell(\hat{s}_2^\star)}_{L_2}
	\;\geq\; 
	\sigma \norm{\lambda_2 - \lambda_1}.
	\]
	We also notice that 
	\[
	\begin{array}{rcl}
		\norm{ \lambda_1^\top (\nabla_{\hat{s}} \ell(\hat{s}_1^\star)-\nabla_{\hat{s}} \ell(\hat{s}_2^\star))}_{L_2} 
		&  \leq &  \displaystyle\sum_{i\,=\,1}^m \lambda_{1,i}\norm{  \nabla_{\hat{s}} \ell_i(\hat{s}_1^\star)-\nabla_{\hat{s}} \ell_i(\hat{s}_2^\star)}_{L_2}
		\\[0.2cm]
		&  \leq &  \displaystyle\sum_{i\,=\,1}^m \lambda_{1,i}\norm{  \hat{s}_1^\star-\hat{s}_2^\star}_{L_2}
	\end{array}
	\]
	where the first inequality is due to triangle inequality and the second inequality is due to the smoothness of quadratic functionals.
	Hence,
	\[
	\norm{ \hat{s}_1^\star-\hat{s}_2^\star}_{L_2}
	\; \geq\;
	\sigma\norm{\lambda_2-\lambda_1} 
	\,-\, 
	\norm{\lambda_1}_1 \norm{ \hat{s}_1^\star -\hat{s}_2^\star}_{L_2}
	\]
	or, equivalently,
	\[
	\norm{ \hat{s}_1^\star-\hat{s}_2^\star}_{L_2}
	\;\geq\;
	\frac{\sigma}{1+\norm{\lambda_1}_1}\norm{\lambda_2-\lambda_1}
	\]
	Therefore,~\eqref{eq:dual cross new} becomes
	\[
	\displaystyle-\left(\nabla g_s(\lambda_2)
	-
	\nabla g_s(\lambda_1)
	\right)^\top (\lambda_{2} - \lambda_{1})
	\;\geq\;
	\left(\frac{\sigma}{1+\norm{\lambda_1}_1}\right)^2
	\norm{\lambda_2-\lambda_1}^2
	\]
	which completes the proof by choosing the smallest modulus over $\lambda_1\in\mathcal{H}$.
\end{proof}

\begin{proof}
	By Lemmas~\ref{lem:dual differentiability} and~\ref{lem:dual convexity}, for any $\lambda\in\mathcal{H}$,
	\[
	g_s(\lambda) 
	\; \leq\;
	g_s(\lambda^\star)
	\,+\, 
	\nabla g_s(\lambda^\star)^\top (\lambda-\lambda^\star)
	\,-\,
	\frac{\mu}{2} \norm{\lambda- \lambda^\star}^2.
	\]
	Thus, if we choose $\lambda = \bar\lambda^\star$, then
	\[
	g_s(\bar\lambda^\star) 
	\; \leq\;
	g_s(\lambda^\star)
	\,+\, 
	\sum_{i \, = \, 1}^m
	(\bar\lambda_i^\star-\lambda_i^\star)
	\left(
	\mathbb{E}_{q^i}
	\left[\,
	\norm{\hat{s}^\star(\lambda^\star)(x_t, t) - \nabla \log q(x_t)}^2
	\,\right] 
	- \Tilde{b}^i
	\right)
	\,-\,
	\frac{\mu}{2} \norm{\bar\lambda^\star- \lambda^\star}^2.
	\]
	Optimality of $(\hat{s}^\star(\lambda^\star), \lambda^\star)$ leads to the complementary slackness,
	\[
	\sum_{i \, = \, 1}^m
	\lambda_i^\star
	\left(
	\mathbb{E}_{q^i}
	\left[\,
	\norm{\hat{s}^\star(\lambda^\star)(x_t, t) - \nabla \log q(x_t)}^2
	\,\right] 
	- \Tilde{b}^i
	\right) \; = \; 0
	\]
	and the feasibility,
	\[
	\mathbb{E}_{q^i}
	\left[\,
	\norm{\hat{s}^\star(\lambda^\star)(x_t, t) - \nabla \log q(x_t)}^2
	\,\right] 
	\; \leq \;
	\Tilde{b}^i.
	\]
	Therefore,
	\[
	g_s(\bar\lambda^\star) 
	\; \leq\;
	g_s(\lambda^\star)
	\, - \,
	\frac{\mu}{2} \norm{\bar\lambda^\star- \lambda^\star}^2.
	\]
	According to Lemma~\ref{lem:dual parametrization gap}, $\bar{g}_s(\bar\lambda^\star) - 4R \left(1+\norm{\bar\lambda^\star}_1\right) \nu \leq g_s(\bar\lambda^\star)$. Hence,
	\[
	\bar{g}_s(\bar\lambda^\star)
	\, - \, 
	4R\left(1+\norm{\bar\lambda^\star}_1\right) \nu 
	\; \leq \;
	g_s(\lambda^\star) 
	\, - \,
	\frac{\mu}{2} \norm{\bar{\lambda}^\star - \lambda^\star}^2.
	\]
	Thus,
	\[
	\begin{array}{rcl}
		\norm{\bar{\lambda}^\star - \lambda^\star}^2 
		& \leq & \displaystyle
		\frac{2}{\mu}\left(g_s(\lambda^\star) -
		\bar{g}_s(\bar\lambda^\star)\right)
		\, + \, 
		\frac{8}{\mu}R \left(1+\norm{\bar\lambda^\star}_1\right) \nu 
		\\[0.3cm]
		& \leq & \displaystyle
		\frac{8}{\mu}R \left(1+\norm{\bar\lambda^\star}_1\right) \nu 
	\end{array}
	\]
	where the last inequality is due to that $g_s(\lambda) \leq \bar{g}_s(\lambda)$ for any $\lambda\geq 0$, and the optimality of $\bar{\lambda}^\star$,
	\[
	g_s(\lambda^\star) 
	\;\leq\; 
	\bar{g}_s(\lambda^\star)
	\; \leq \;
	\bar{g}_s(\bar\lambda^\star).
	\]
\end{proof}

\subsection{Proof of Theorem~\ref{thm: bounded TV distance ideal}}\label{app: bounded TV distance ideal}

\begin{proof}
	By the triangle inequality for TV distance,
	\[
	\begin{array}{rcl}
		\text{\normalfont TV}\left(q_{\text{\normalfont mix}}^{\star},\; \bar p^\star(\bar\lambda^\star)\right)
		& \leq & \displaystyle
		\text{\normalfont TV}\left(q_{\text{\normalfont mix}}^{\star},\; q_{\text{\normalfont mix}}^{(\bar\lambda^\star)}\right)
		\,+\,
		\text{\normalfont TV}\left(q_{\text{\normalfont mix}}^{(\bar\lambda^\star)},\; \bar p^\star(\bar\lambda^\star)\right) 
		\\[0.2cm]
		& \leq & \displaystyle
		\norm{\lambda^\star - \bar\lambda^\star}_1
		\,+\,
		\frac{d^2 \log^3T}{\sqrt{T}} 
		\,+\, 
		\sqrt{d} \left(\log^2T\right) \varepsilon_{\text{\normalfont score}}
		\\[0.2cm]
		& \leq & \displaystyle
		\sqrt{\frac{8}{\mu}mR \left(1+\norm{\bar\lambda^\star}_1\right) \nu}
		\,+\,
		\frac{d^2 \log^3T}{\sqrt{T}} 
		\,+\, 
		\sqrt{d} \left(\log^2T\right) \varepsilon_{\text{\normalfont score}}
	\end{array}
	\]
	where the second inequality is due to Lemma~\ref{lem:TV distance} and Lemma~\ref{lem: bounded TV distance}, and the last inequality is due to $\norm{\lambda}_1 \leq \sqrt{m} \norm{\lambda}$ and Lemma~\ref{lem:dual un/parametrized gap}.
\end{proof}

\subsection{Proof of Theorem~\ref{thm:algorithm output vs unparametrized solution}}\label{app:algorithm output vs unparametrized solution}

\begin{lemma}\label{lem:dual iterate gap parametrization}
	For a stochastic variant of Algorithm~\ref{alg:primal-dual} in Section~\ref{subsec:primal-dual training}, we have 
	\[
	\mathbb{E} \left[\,
	\norm{\lambda(h+1) - \bar\lambda^\star}^2
	\,\big\vert\, \lambda(h)
	\,\right]
	\;\leq\;
	\norm{\lambda(h) - \bar\lambda^\star}^2
	\,+\,
	\eta^2 S^2
	\,-\,
	2\eta\left( \bar{D}^\star - \bar{g}(\lambda(h)) - \varepsilon_{\text{\normalfont approx}}^2\right).
	\]
\end{lemma}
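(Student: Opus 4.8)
The plan is to carry out the standard one-step contraction analysis for projected stochastic (sub)gradient ascent on the parametrized dual function, with extra care for the approximate primal solve. Write $\hat{s}_h \DefinedAs \hat{s}_\theta(h)$ for the approximate primal iterate and let $\hat{c}(h)\in\mathbb{R}^m$ be the stochastic gradient vector from line~5, whose $i$-th entry is $\hat{\mathbb{E}}_{x_0\,\sim\,q^i,\,t,\,x_t}[\,\norm{\hat{s}_h(x_t,t) - \nabla\log q^i(x_t)}^2\,] - \Tilde{b}^i$, so the update is exactly the Euclidean projection $\lambda(h+1) = [\,\lambda(h) + \eta\,\hat{c}(h)\,]_+$ onto $\mathbb{R}^m_{\geq 0}$. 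Conditioning on $\lambda(h)$ — under which $\hat{s}_h$ is a deterministic, $\lambda(h)$-measurable map and the only remaining randomness is the minibatch noise — unbiasedness of $\hat{\mathbb{E}}$ gives $\mathbb{E}[\,\hat{c}(h)\,\vert\,\lambda(h)\,] = c(h)$, where $c(h)$ is the exact constraint-slack vector of Problem~\eqref{eq:KL-constrained likelihood maximization parametrization Elbo MSE} evaluated at $\hat{s}_h$, and $\mathbb{E}[\,\norm{\hat{c}(h)}^2\,\vert\,\lambda(h)\,] = S^2$ by definition of $S^2$.

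First I would use nonexpansiveness of the projection onto $\mathbb{R}^m_{\geq 0}$, together with $\bar\lambda^\star \geq 0$, to drop the projection and expand the square:
\[
\norm{\lambda(h+1) - \bar\lambda^\star}^2
\;\leq\;
\norm{\lambda(h) + \eta\,\hat{c}(h) - \bar\lambda^\star}^2
\;=\;
\norm{\lambda(h) - \bar\lambda^\star}^2
+ 2\eta\,\inner{\hat{c}(h)}{\lambda(h) - \bar\lambda^\star}
+ \eta^2\norm{\hat{c}(h)}^2.
\]
Taking $\mathbb{E}[\,\cdot\,\vert\,\lambda(h)\,]$ replaces $\hat{c}(h)$ by $c(h)$ in the cross term and $\norm{\hat{c}(h)}^2$ by $S^2$, so it remains to establish the supergradient-type inequality $\inner{c(h)}{\lambda(h) - \bar\lambda^\star} \leq -\big(\bar{D}^\star - \bar{g}(\lambda(h)) - \varepsilon_{\text{\normalfont approx}}^2\big)$.

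For that last step, observe that $\mathcal{L}_s(\hat{s}_h,\cdot) = \bar{\mathcal{L}}_s(\theta(h),\cdot)$ is affine in $\lambda$ with gradient $c(h)$, hence for every $\lambda \geq 0$,
\[
\inner{c(h)}{\lambda - \lambda(h)}
\;=\;
\mathcal{L}_s(\hat{s}_h,\lambda) - \mathcal{L}_s(\hat{s}_h,\lambda(h))
\;\geq\;
\bar{g}_s(\lambda) - \bar{g}_s(\lambda(h)) - \varepsilon_{\text{\normalfont approx}}^2,
\]
where I use $\mathcal{L}_s(\hat{s}_h,\lambda) \geq \min_{\theta}\mathcal{L}_s(\hat{s}_\theta,\lambda) = \bar{g}_s(\lambda)$ and the relaxed-computation guarantee $\mathcal{L}_s(\hat{s}_h,\lambda(h)) \leq \bar{g}_s(\lambda(h)) + \varepsilon_{\text{\normalfont approx}}^2$ of Section~\ref{subsec:primal-dual training}. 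Evaluating at $\lambda = \bar\lambda^\star$ gives $\inner{c(h)}{\lambda(h) - \bar\lambda^\star} \leq -\big(\bar{g}_s(\bar\lambda^\star) - \bar{g}_s(\lambda(h)) - \varepsilon_{\text{\normalfont approx}}^2\big)$; since the objective and all constraints of Problems~\eqref{eq:KL-constrained likelihood maximization parametrization} and~\eqref{eq:KL-constrained likelihood maximization parametrization Elbo MSE} differ only by a common positive scaling and a shift — which leaves a constrained problem and its Lagrangian dual unchanged up to the same transformation — we may identify $\bar{g}_s$ with $\bar{g}$, so $\bar{g}_s(\bar\lambda^\star) = \bar{D}^\star$ and $\bar{g}_s(\lambda(h)) = \bar{g}(\lambda(h))$. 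Substituting the resulting bound back into the expanded recursion yields the claimed inequality.

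The steps are all routine; the only thing requiring real care is the bookkeeping around the approximate primal solve — ensuring the $\varepsilon_{\text{\normalfont approx}}^2$ slack enters the supergradient inequality with the correct sign — together with the (harmless) identification of $\bar{g}$ with $\bar{g}_s$. Indeed, one may equivalently state and prove the lemma directly in terms of $\bar{g}_s$ and $\bar{D}^\star_s \DefinedAs \bar{g}_s(\bar\lambda^\star)$, since those are precisely the quantities generated by the dual update.
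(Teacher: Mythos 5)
Your proof follows the same structure as the paper's — non-expansiveness of the projection onto $\mathbb{R}^m_{\geq 0}$, expansion of the square, conditional expectation, and an approximate supergradient inequality for the concave dual function — so it is essentially the same argument. The only difference is cosmetic and in your favor: the paper merely asserts the supergradient inequality ``$\mathbb{E}[\hat f(h)\mid\lambda(h)]$ is an approximate descent direction of $\bar g$,'' whereas you actually derive it from the affine dependence of $\mathcal{L}_s(\hat s_h,\cdot)$ on $\lambda$, the lower bound $\mathcal{L}_s(\hat s_h,\lambda)\geq\bar g_s(\lambda)$, and the $\varepsilon^2_{\text{approx}}$ slack of the relaxed primal solve. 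Both arguments tacitly identify $\bar g_s$ with $\bar g$ (suppressing the positive scaling $\bar\omega$ and shift $v$ relating the KL and quadratic-loss formulations), which is the convention the paper announces in Appendix~\ref{app:ELBO}.
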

\begin{proof}
	For brevity, we let the stochastic gradient be $\hat{f}(h) = [\hat{f}_1(h),\ldots, \hat{f}_m(h)]$ with
	\[
	\hat{f}_i(h)
	\; \DefinedAs\;
	\hat{\mathbb{E}}_{x_0 \,\sim\, q^i,\, t,\, x_t}
	\left[ \,
	\norm{\hat{s}_\theta(h)(x_t, t) - \nabla q(x_t)}^2
	\,\right] \,-\, \Tilde{b}^i.
	\]
	By the definition of $\lambda(h+1)$,
	\[
	\begin{array}{rcl}
		\norm{\lambda(h+1) - \bar{\lambda}^\star}^2 
		& = & \displaystyle
		\norm{ \left[\lambda(h) + \eta \hat{f}(h)\right]_+ - \bar{\lambda}^\star }^2
		\\[0.2cm]
		& \leq  & \displaystyle
		\norm{ \lambda(h)  - \bar{\lambda}^\star + \eta \hat{f}(h)}^2
		\\[0.2cm]
		& =  & \displaystyle
		\norm{ \lambda(h)  - \bar{\lambda}^\star }^2
		\,+\, 
		\eta^2 \norm{\hat{f}(h)}^2 \,+\,
		2\eta \hat{f}(h)^\top \left(\lambda(h)  
		-
		\bar{\lambda}^\star\right) 
	\end{array}
	\]
	where the inequality is due to the non-expansiveness of projection. Application of the conditional expectation over both sides of the inequality above yields,
	\[
	\begin{array}{rcl}
		\mathbb{E}\left[\,
		\norm{\lambda(h+1) - \bar{\lambda}^\star}^2 \,\vert\, \lambda(h)
		\,\right]
		& \leq  & \displaystyle
		\norm{ \lambda(h)  - \bar{\lambda}^\star }^2
		\,+\, 
		\eta^2\, \mathbb{E}\left[ \,\norm{\hat{f}(h)}^2 \,\vert\,\lambda(h)\,\right] 
		\\[0.2cm]
		&& \displaystyle+\,
		2\eta\, \mathbb{E}\left[\,\hat{f}(h)\,\vert\,\lambda(h)\,\right]^\top \left(\lambda(h)  - \bar{\lambda}^\star\right) 
	\end{array}
	\]
	which gives our desired result when we use the fact that $\mathbb{E}[\hat{f}(h)\,\vert\,\lambda(h)]$ is an approximate descent direction of the dual function $\bar{g}$,
	\[
	\mathbb{E}\left[\,\hat{f}(h)\,\vert\,\lambda(h)\,\right]^\top
	\left( \lambda(h) -\bar\lambda^\star\right) 
	\,-\, \varepsilon_{\text{\normalfont approx}}^2
	\;\leq\;
	\bar{g}(\lambda(h)) 
	\,-\, 
	\bar{g}(\bar\lambda^\star).
	\]
\end{proof} 

\begin{lemma}\label{lem:best dual iterate}
	In the stochastic variant of Algorithm~\ref{alg:primal-dual} in Section~\ref{subsec:primal-dual training}, the maximum prarametrized dual function in history up to step $h$ satisfies
	\[
	\lim_{h\,\to\,\infty} \;
	\bar{g}_{\text{\normalfont best}}(h) 
	\; \geq \;
	\bar{D}^\star \,-\, \left( \frac{\eta S^2}{2} + \varepsilon_{\text{\normalfont approx}}^2 \right).
	\]
\end{lemma}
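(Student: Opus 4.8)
The plan is to telescope the one-step drift inequality of Lemma~\ref{lem:dual iterate gap parametrization} --- which is exactly the recursion one gets for projected stochastic subgradient \emph{ascent} on the concave parametrized dual $\bar{g}_s$ --- and then convert the resulting bound on the time-averaged dual suboptimality into a bound on the running maximum $\bar{g}_{\text{\normalfont best}}$. Write $V_h \DefinedAs \norm{\lambda(h) - \bar\lambda^\star}^2$ and $\delta_h \DefinedAs \bar{D}^\star - \bar{g}_s(\lambda(h))$; since $\bar{D}^\star$ is the maximal value of $\bar{g}_s$ (attained at $\bar\lambda^\star$, using the equivalence of Problems~\eqref{eq:KL-constrained likelihood maximization parametrization} and~\eqref{eq:KL-constrained likelihood maximization parametrization Elbo MSE}), we have $\delta_h \geq 0$, and Lemma~\ref{lem:dual iterate gap parametrization} reads
\[
\mathbb{E}\big[\,V_{h+1} \,\big|\, \lambda(h)\,\big] \;\leq\; V_h \,+\, \eta^2 S^2 \,-\, 2\eta\big(\delta_h - \varepsilon_{\text{\normalfont approx}}^2\big).
\]
Taking total expectations, summing over $h = 1,\ldots,H$, and using $\mathbb{E}[V_{H+1}] \geq 0$ together with the initialization $\lambda(1) = 0$ (so $V_1 = \norm{\bar\lambda^\star}^2$), I obtain
\[
\frac{1}{H}\sum_{h\,=\,1}^{H}\mathbb{E}[\delta_h] \;\leq\; \frac{\norm{\bar\lambda^\star}^2}{2\eta H} \,+\, \frac{\eta S^2}{2} \,+\, \varepsilon_{\text{\normalfont approx}}^2 .
\]

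Next I would pass from the average to the maximum. By definition $\bar{g}_{\text{\normalfont best}}(H) = \max_{h\,\leq\,H}\bar{g}_s(\lambda(h)) \geq \frac{1}{H}\sum_{h=1}^{H}\bar{g}_s(\lambda(h))$, hence $\bar{D}^\star - \bar{g}_{\text{\normalfont best}}(H) \leq \frac{1}{H}\sum_{h=1}^{H}\delta_h$, and therefore
\[
\mathbb{E}\big[\,\bar{D}^\star - \bar{g}_{\text{\normalfont best}}(H)\,\big] \;\leq\; \frac{\norm{\bar\lambda^\star}^2}{2\eta H} \,+\, \frac{\eta S^2}{2} \,+\, \varepsilon_{\text{\normalfont approx}}^2 .
\]
Letting $H \to \infty$ annihilates the $\norm{\bar\lambda^\star}^2/(2\eta H)$ term. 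Since $\bar{g}_{\text{\normalfont best}}(h)$ is nondecreasing in $h$ and bounded above by $\bar{D}^\star$, the limit $\lim_{h\to\infty}\bar{g}_{\text{\normalfont best}}(h)$ exists; by monotone convergence the limit of the expectations equals the expectation of the limit, which gives $\lim_{h\to\infty}\bar{g}_{\text{\normalfont best}}(h) \geq \bar{D}^\star - \big(\tfrac{\eta S^2}{2} + \varepsilon_{\text{\normalfont approx}}^2\big)$, the claimed bound.

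The routine parts are the telescoping sum and the ``max dominates average'' step; the two points that need a little care are (i) checking that $\bar{g}_s$ and $\bar{g}$ indeed share the optimal value $\bar{D}^\star$ at $\bar\lambda^\star$, so that $\delta_h \geq 0$ and the telescoped estimate is internally consistent, and (ii) the exchange of limit and expectation, which is precisely where monotonicity of the running maximum is invoked. The real conceptual obstacle has already been absorbed inside Lemma~\ref{lem:dual iterate gap parametrization}: the constant stepsize $\eta$ and the inexact, stochastic primal update prevent $\bar{g}_{\text{\normalfont best}}(h)$ from reaching $\bar{D}^\star$, and the residual $\tfrac{\eta S^2}{2} + \varepsilon_{\text{\normalfont approx}}^2$ --- the usual constant-stepsize noise floor plus the primal-optimization bias --- is the irreducible gap that later propagates into Theorem~\ref{thm:algorithm output vs unparametrized solution} through Lemma~\ref{lem: bounded TV distance}.
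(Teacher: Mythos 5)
Your telescoping argument is a genuinely different route from the paper's. The paper works with two \emph{stopped} processes $\alpha(h)$ and $\beta(h)$ (both gated by the indicator $\mathbbm{1}\{\bar{D}^\star - \bar{g}_{\text{best}}(h) > \tfrac{\eta S^2}{2} + \varepsilon_{\text{approx}}^2\}$) and invokes the supermartingale convergence theorem~\cite[Theorem E7.4]{solo1994adaptive} to conclude that $\liminf_h \beta(h) = 0$ \emph{almost surely}, from which the claimed bound on $\lim_h \bar{g}_{\text{best}}(h)$ follows pathwise. Your route instead telescopes the one-step drift inequality of Lemma~\ref{lem:dual iterate gap parametrization} and passes from the Cesàro average to the running maximum. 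Both rely on exactly the same drift inequality, and your calculation
\[
\frac{1}{H}\sum_{h=1}^{H}\mathbb{E}[\delta_h]
\;\leq\;
\frac{\norm{\bar\lambda^\star}^2}{2\eta H} + \frac{\eta S^2}{2} + \varepsilon_{\text{approx}}^2
\]
is correct; your approach is cleaner and more transparent about where the $\tfrac{\eta S^2}{2}$ noise floor and the $\varepsilon_{\text{approx}}^2$ bias enter.

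However, there is a genuine gap at the very last step. After monotone convergence, what you have proved is
\[
\mathbb{E}\Big[\,\bar{D}^\star - \lim_{h\to\infty}\bar{g}_{\text{best}}(h)\,\Big]
\;\leq\;
\frac{\eta S^2}{2} + \varepsilon_{\text{approx}}^2,
\]
an \emph{expectation} bound on the nonnegative random variable $X \DefinedAs \bar{D}^\star - \lim_h\bar{g}_{\text{best}}(h)$. You then write ``which gives $\lim_{h\to\infty}\bar{g}_{\text{best}}(h) \geq \bar{D}^\star - (\tfrac{\eta S^2}{2} + \varepsilon_{\text{approx}}^2)$'' --- but $\mathbb{E}[X]\leq c$ does not imply $X\leq c$ almost surely, only $\mathbb{P}(X > kc)\leq 1/k$ by Markov's inequality. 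The lemma as stated (and as used in Lemma~\ref{lem:dual un/parametrized gap best}, where $\bar{g}(\bar\lambda^\star) - \bar{g}(\bar\lambda_{\text{best}})$ is bounded without expectations) is a pathwise statement about the trajectory, not an average. This is exactly why the paper cannot simply telescope: with a constant stepsize the drift inequality has a nonvanishing additive term $\eta^2 S^2 + 2\eta\varepsilon_{\text{approx}}^2$, so the Robbins--Siegmund theorem does not apply directly; the gating indicator in the paper's definitions of $\alpha(h)$ and $\beta(h)$ is precisely the device that kills that additive term once the noise floor is reached, making the resulting pair a valid supermartingale and delivering the almost-sure conclusion. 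To close your gap, you would need to reintroduce that stopping/gating idea (or some Borel--Cantelli surrogate); the Cesàro-to-max step alone cannot upgrade an expectation bound to an almost-sure one.
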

\begin{proof}
	The proof is based on the supermartingale convergence theorem~\cite[Theorem E7.4]{solo1994adaptive}. We introduce two processes,
	\[
	\alpha(h) 
	\; \DefinedAs\; 
	\norm{\lambda(h) - \bar\lambda^\star}^2 \mathbbm{1}\left(\bar{D}^\star - \bar{g}_{\text{best}}(h) > \frac{\eta S^2}{2} + \varepsilon_{\text{\normalfont approx}}^2\right)
	\]
	\[
	\beta(h) 
	\; \DefinedAs\; 
	\left( 
	2\eta\left(\bar{D}^\star - \bar{g}(\lambda(h))-\varepsilon_{\text{\normalfont approx}}^2\right)
	-\eta^2 S^2
	\right) \mathbbm{1}\left(\bar{D}^\star - \bar{g}_{\text{best}}(h) > \frac{\eta S^2}{2} + \varepsilon_{\text{\normalfont approx}}^2\right)
	\]
	where $\alpha(h)$ measures the gap between $\lambda(h)$ and $\bar{\lambda}^\star$ when the optimality gap $\bar{D}^\star - \bar{g}_{\text{best}}(h)$ is below the threshold, and $\beta(h)$ measures the gap bewteen $\bar{D}^\star$ and $\bar{g}(\lambda(h))$ (up to some optimization errors) when the optimality gap $\bar{D}^\star - \bar{g}_{\text{best}}(h)$ is below the threshold. Clearly, $\alpha(h)$ is non-negative. Also, $\beta(h)$ is non-negative due to
	\[
	\displaystyle
	\bar{D}^\star 
	\,-\, 
	\bar{g}_{\text{best}}(h) 
	\,-\, 
	\frac{\eta S^2}{2} 
	\,-\, \varepsilon_{\text{\normalfont approx}}^2
	\; \leq \;
	\bar{D}^\star 
	\,-\, 
	\bar{g}(\lambda(h)) 
	\,-\, 
	\frac{\eta S^2}{2} 
	\,-\, \varepsilon_{\text{\normalfont approx}}^2.
	\]
	
	Let $\mathcal{F}_h$ be the $\sigma$-algebra generated by sequences: $\alpha(h')$, $\beta(h')$, and $\lambda(h')$ for $h' \leq h$. Thus, $\{ \mathcal{F}_h \}_{h\,\geq\,1}$ is a natural filtration. We notice that $\alpha(h+1)$ and $\beta(h+1)$ are determined by $\lambda(h)$ in each step. Hence,
	\[
	\begin{array}{rcl}
		\mathbb{E} 
		\left[ \, 
		\alpha(h+1) 
		\,\vert\,
		\mathcal{F}_h
		\, \right]
		& = & \displaystyle
		\mathbb{E}
		\left[ \, 
		\alpha(h+1) \,\vert\, \lambda(h)
		\,\right] 
		\\[0.2cm]
		& = & \displaystyle
		\mathbb{E}
		\left[ \,
		\alpha(h+1) \,\vert\, \lambda(h), \alpha(h) = 0
		\,\right] \, \text{Pr}\left(\alpha(h) = 0\right)
		\\[0.2cm]
		&  & \displaystyle +\,
		\mathbb{E}
		\left[ \,
		\alpha(h+1) \,\vert\, \lambda(h), \alpha(h) > 0
		\,\right] \text{Pr}\left(\alpha(h) > 0\right).
	\end{array}
	\]
	We first show that
	\begin{equation}\label{eq:key objective of lemma 8}
		\mathbb{E}\left[\,
		\alpha(h+1) \,\vert\, \mathcal{F}_h
		\,\right]
		\; \leq \;
		\alpha(h) \,-\, \beta(h).
	\end{equation}
	A simple case is when $\alpha(h) = 0$,
	\[\,
	\mathbb{E} 
	\left[ 
	\alpha(h+1) 
	\,\vert\,
	\mathcal{F}_h
	\,\right]
	\; = \;
	\mathbb{E}
	\left[\, 
	\alpha(h+1) \,\vert\, \lambda(h), \alpha(h) = 0
	\,\right]. 
	\]
	There are two situations for $\alpha(h) = 0$. First, if $\bar{D}^\star - \bar{g}_{\text{best}}(h) \leq \frac{\eta S^2}{2} + \varepsilon_{\text{\normalfont approx}}^2$, then $\alpha(h) = \beta(h) = 0$. Due to $\bar{g}_{\text{best}}(h+1)\geq \bar{g}_{\text{best}}(h)$, we have $\beta(h) = 0$ and $\bar{D}^\star - \bar{g}_{\text{best}}(h+1) \leq \frac{\eta S^2}{2} + \varepsilon_{\text{\normalfont approx}}^2$. Thus, $\alpha(h+1) = 0$, and~\eqref{eq:key objective of lemma 8} holds. Second, if $\lambda(h) = \bar{\lambda}^\star$, but $\bar{D}^\star - \bar{g}_{\text{best}}(h) > \frac{\eta S^2}{2} + \varepsilon_{\text{\normalfont approx}}^2$, then $\bar{D}^\star = \bar{g}(\lambda(h))$. Hence, $\beta(h)<0$, which contradicts the non-negativeness of $\beta(h)$. Therefore, $\bar{D}^\star - \bar{g}_{\text{best}}(h) \leq \frac{\eta S^2}{2} + \varepsilon_{\text{\normalfont approx}}^2$ has to hold, which is the first situation.
	
	We next show~\eqref{eq:key objective of lemma 8} when $\alpha(h)>0$.
	\[
	\begin{array}{rcl}
		\mathbb{E} 
		\left[\, 
		\alpha(h+1) 
		\,\vert\,
		\mathcal{F}_h
		\,\right]
		& = & \displaystyle
		\mathbb{E}
		\left[\, 
		\alpha(h+1) \,\vert\, \lambda(h), \alpha(h) > 0
		\,\right] 
		\\[0.2cm]
		& = & \displaystyle
		\mathbb{E}
		\left[\, 
		\norm{\lambda(h) - \bar\lambda^\star}^2 \mathbbm{1}\left(\bar{D}^\star - \bar{g}_{\text{best}}(h) > \frac{\eta S^2}{2} + \varepsilon_{\text{\normalfont approx}}^2\right)
		\,\Bigg\vert\, \lambda(h), \alpha(h) > 0
		\,\right] 
		\\[0.4cm]
		& \leq & \displaystyle
		\mathbb{E}
		\left[\, 
		\norm{\lambda(h) - \bar\lambda^\star}^2
		\,\big\vert\, \lambda(h), \alpha(h) > 0
		\,\right]
		\\[0.4cm]
		& \leq & \displaystyle
		\norm{\lambda(h) - \bar\lambda^\star}^2
		\,+\,
		\eta^2 S^2
		\,-\,
		2\eta\left( \bar{D}^\star - \bar{g}(\lambda(h)) - \varepsilon_{\text{\normalfont approx}}^2\right)
		\\[0.4cm]
		& \leq & \displaystyle
		\alpha(h) \,-\, \beta(h)
	\end{array}
	\]
	where the last inequality is due to Lemma~\ref{lem:dual iterate gap parametrization}, and the last equality is due to $\bar{D}^\star - \bar{g}_{\text{best}}(h) > \frac{\eta S^2}{2} + \varepsilon_{\text{\normalfont approx}}^2$. Therefore,~\eqref{eq:key objective of lemma 8} holds.
	
	Finally, application of the supermartingale convergence theorem~\cite[Theorem E7.4]{solo1994adaptive} to the processes $\alpha(h)$ and $\beta(h)$ for $h\geq 1$ concludes that $\beta(t)$ is almost surely summable,
	\[
	\liminf_{h\,\to\,\infty} \; \beta(h) \; = \;0 \;  \text{ almost surely}.
	\]
	This means that either
	\[
	\liminf_{t\,\to\,\infty}\; 
	2\eta\left(\bar{D}^\star - \bar{g}(\lambda(h))-\varepsilon_{\text{\normalfont approx}}^2\right)
	\,-\,
	\eta^2 S^2
	\; = \;
	0
	\]
	or $\bar{D}^\star - \bar{g}_{\text{best}}(h) \leq \frac{\eta S^2}{2} + \varepsilon_{\text{\normalfont approx}}^2$, which concludes our desired result.
\end{proof}

Denote the step when $\bar{g}_{\text{best}}(h)$ achieves $\bar{D}^\star - \left( \frac{\eta S^2}{2} + \varepsilon_{\text{\normalfont approx}}^2 \right)$ by $h_{\text{best}}$, and the associated dual variable be $\bar\lambda_{\text{best}} = \lambda(h_{\text{best}})$.

\begin{lemma}\label{lem:dual un/parametrized gap best}
	For a stochastic variant of Algorithm~\ref{alg:primal-dual} in Section~\ref{subsec:primal-dual training}, we have
	\[
	\norm{\bar\lambda_{\text{\normalfont best}} - \lambda^\star}^2
	\; \leq \;
	\displaystyle
	\frac{2}{\mu}
	\left(
	\frac{\eta S^2}{2} + \varepsilon_{\text{\normalfont approx}}^2 
	+
	4R (1+\norm{\bar\lambda_{\text{\normalfont best}}}_1)\nu
	\right).
	\]
\end{lemma}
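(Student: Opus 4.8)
The plan is to reproduce, with $\bar\lambda^\star$ replaced by $\bar\lambda_{\text{best}}$, the chain of inequalities in the proof of Lemma~\ref{lem:dual un/parametrized gap}, now feeding in the near-optimality of $\bar g_{\text{best}}$ supplied by Lemma~\ref{lem:best dual iterate}. First I would invoke the strong concavity and differentiability of the unparametrized dual function $g_s$ (Lemmas~\ref{lem:dual convexity} and~\ref{lem:dual differentiability}), applied on the line segment joining $\lambda^\star$ and $\bar\lambda_{\text{best}}$ with modulus $\mu$; since the dual iterates of the stochastic variant of Algorithm~\ref{alg:primal-dual} stay bounded (Assumption~\ref{ass:redundancy}), $\bar\lambda_{\text{best}}$ has finite $\ell_1$-norm and $\mu>0$ on this segment. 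Combined with the first-order optimality of $\lambda^\star$, together with complementary slackness and primal feasibility at $(\hat{s}^\star(\lambda^\star),\lambda^\star)$ exactly as in the proof of Lemma~\ref{lem:dual un/parametrized gap}, this yields
\[
g_s(\bar\lambda_{\text{best}}) \;\le\; g_s(\lambda^\star) \,-\, \frac{\mu}{2}\,\norm{\bar\lambda_{\text{best}} - \lambda^\star}^2 .
\]

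Next I would bridge the unparametrized and parametrized dual functions. By Lemma~\ref{lem:dual parametrization gap}, $\bar g_s(\bar\lambda_{\text{best}}) - 4R\big(1+\norm{\bar\lambda_{\text{best}}}_1\big)\nu \le g_s(\bar\lambda_{\text{best}})$; and by the definition of $h_{\text{best}}$, $\bar\lambda_{\text{best}}$ together with Lemma~\ref{lem:best dual iterate}, $\bar g_s(\bar\lambda_{\text{best}}) = \bar g_{\text{best}}(h_{\text{best}}) \ge \bar D^\star - \big(\tfrac{\eta S^2}{2} + \varepsilon_{\text{\normalfont approx}}^2\big)$. Chaining these with the previous display gives
\[
\bar D^\star \,-\, \Big(\tfrac{\eta S^2}{2} + \varepsilon_{\text{\normalfont approx}}^2\Big) \,-\, 4R\big(1+\norm{\bar\lambda_{\text{best}}}_1\big)\nu \;\le\; g_s(\lambda^\star) \,-\, \frac{\mu}{2}\,\norm{\bar\lambda_{\text{best}} - \lambda^\star}^2 .
\]
Finally, using $g_s(\lambda^\star) \le \bar g_s(\lambda^\star) \le \bar g_s(\bar\lambda^\star) = \bar D^\star$ (parametrization only raises the dual function, and $\bar\lambda^\star$ maximizes $\bar g_s$), the $\bar D^\star$ terms cancel and rearrangement gives the claimed
\[
\norm{\bar\lambda_{\text{best}} - \lambda^\star}^2 \;\le\; \frac{2}{\mu}\left(\frac{\eta S^2}{2} + \varepsilon_{\text{\normalfont approx}}^2 + 4R\big(1+\norm{\bar\lambda_{\text{best}}}_1\big)\nu\right).
\]

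The substantive work — and the only place one has to be careful — is in the first step: one must re-run the argument of Lemma~\ref{lem:dual convexity} on the segment $[\lambda^\star,\bar\lambda_{\text{best}}]$ rather than on $\mathcal{H}=[\lambda^\star,\bar\lambda^\star]$, checking that Assumption~\ref{ass:redundancy} still delivers a uniform strong-concavity modulus there (this needs boundedness of $\bar\lambda_{\text{best}}$), and one must track the normalization relating $\bar g_s$, $\bar g$ and $\bar D^\star$ so that the constant $\bar D^\star$ genuinely cancels in the last step. Everything else is a verbatim transcription of the proof of Lemma~\ref{lem:dual un/parametrized gap} with $\bar\lambda^\star \mapsto \bar\lambda_{\text{best}}$, carrying the extra additive slack $\tfrac{\eta S^2}{2}+\varepsilon_{\text{\normalfont approx}}^2$ through the bound.
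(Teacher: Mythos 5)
Your proof is correct and follows essentially the same chain as the paper's: strong concavity of $g_s$ on the segment joining $\lambda^\star$ and $\bar\lambda_{\text{best}}$ together with first-order optimality of $\lambda^\star$, Lemma~\ref{lem:dual parametrization gap} to pass from $g_s$ to $\bar g_s$, Lemma~\ref{lem:best dual iterate} for the near-optimality of $\bar g_s(\bar\lambda_{\text{best}})$, and $\bar g_s(\lambda^\star)\le \bar D^\star$ to cancel the $\bar D^\star$ terms. Your caveat about re-deriving the strong-concavity modulus on $[\lambda^\star,\bar\lambda_{\text{best}}]$ rather than on $\mathcal{H}$ is well taken — the paper invokes the same $\mu$ on this new segment without comment, so your caution here is if anything more careful than the paper.
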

\begin{proof}
	We denote the segment between $\bar\lambda_{\text{best}}$ and $\lambda^\star$ by $\mathcal{B}$. By Lemma~\ref{lem:dual convexity}, the dual function $g(\lambda)$ is strongly concave on $\mathcal{B}$ with parameter $\mu$. Thus,
	\[
	\begin{array}{rcl}
		\norm{\bar\lambda_{\text{best}} - \lambda^\star}^2
		&  \leq  & \displaystyle
		\frac{2}{\mu}
		\left(
		g(\lambda^\star) - g(\bar\lambda_{\text{best}})
		\right)
		\\[0.2cm]
		& \leq &
		\displaystyle
		\frac{2}{\mu}
		\left(
		\bar g(\lambda^\star) - \bar{g}(\bar\lambda_{\text{best}})+4R (1+\norm{\bar\lambda_{\text{best}}}_1)\nu
		\right)
		\\[0.2cm]
		& \leq &
		\displaystyle
		\frac{2}{\mu}
		\left(
		\frac{\eta S^2}{2} + \varepsilon_{\text{\normalfont approx}}^2 
		+
		4R (1+\norm{\bar\lambda_{\text{best}}}_1)\nu
		\right)
	\end{array}
	\]
	where the second inequality is due to Lemma~\ref{lem:dual parametrization gap}. We note that $\bar{g}(\bar\lambda_{\text{best}}) = \bar{g}(\lambda(h_{\text{best}}))$ and $\bar{D}^\star \geq \bar{g}(\bar\lambda_{\text{best}})$, and the third inequality is due to  Lemma~\ref{lem:best dual iterate}. 
\end{proof}

\begin{proof}
	By the triangle inequality for TV distance,
	\[
	\begin{array}{rcl}
		\text{\normalfont TV}\left(q_{\text{\normalfont mix}}^{\star},\; \bar p^\star(\bar\lambda_{\text{\normalfont best}})\right)
		& \leq & \displaystyle
		\text{\normalfont TV}\left(q_{\text{\normalfont mix}}^{\star},\; q_{\text{\normalfont mix}}^{(\bar\lambda_{\text{\normalfont best}})}\right)
		\,+\,
		\text{\normalfont TV}\left(q_{\text{\normalfont mix}}^{(\bar\lambda_{\text{\normalfont best}})},\; \bar p^\star(\bar\lambda_{\text{\normalfont best}})\right) 
		\\[0.2cm]
		& \leq & \displaystyle
		\norm{\lambda^\star - \bar\lambda_{\text{\normalfont best}}}_1
		\,+\,
		\frac{d^2 \log^3T}{\sqrt{T}} 
		\,+\,
		\sqrt{d} \left(\log^2T\right) \varepsilon_{\text{\normalfont score}}
		\\[0.2cm]
		& \leq & \displaystyle
		\frac{2}{\mu}
		\left(
		\frac{\eta S^2}{2} + \varepsilon_{\text{\normalfont approx}}^2 
		+
		4R \left(1+\norm{\bar\lambda_{\text{\normalfont best}}}_1\right)\nu
		\right)
		\,+\,
		\frac{d^2 \log^3T}{\sqrt{T}} 
		\,+\, 
		\sqrt{d} \left(\log^2T\right) \varepsilon_{\text{\normalfont score}}
	\end{array}
	\]
	where the second inequality is due to Lemma~\ref{lem:TV distance} and Lemma~\ref{lem: bounded TV distance}, and the last inequality is due to $\norm{\lambda}_1 \leq \sqrt{m} \norm{\lambda}$ and Lemma~\ref{lem:dual un/parametrized gap best}.
\end{proof}

\section{Experimental details}\label{app:implementation}
	
We provide implementation details of our computational experiments in Section~\ref{sec:experiments}. The source code is available here.\footnote[2]{ \href{https://github.com/shervinkhal/Constrained_Diffusion_Dual_Training}{https://github.com/shervinkhal/Constrained\_Diffusion\_Dual\_Training}}

\noindent\textbf{Algorithm details: }We train our constrained diffusion models by replacing the exact primal minimization step in Algorithm~\ref{alg:primal-dual} with $N$ steps of gradient descent with the Lagrangian as a loss function. Without loss of generality, we take the noise prediction formulation of diffusion rather than the score-matching formulation used in our theory. Since these two formulations are equivalent, this has no bearing on our main results. Algorithm~\ref{alg:primal-dual-practical} depicts our practical implementation of Algorithm~\ref{alg:primal-dual} .

\begin{algorithm}
	\caption{Practical Implementation of Algorithm~\ref{alg:primal-dual}}\label{alg:primal-dual-practical}
	\begin{algorithmic}[1]
		\State \textbf{Input}: total diffusion steps $T$, diffusion parameter $\alpha_t$, total dual iterations $H$, number of primal descent steps per dual update $N$, dual step size $\eta_d$, primal step size $\eta_p$, initial model parameters $\theta(0)$.
		\State \textbf{Initialize}: $\lambda(1) = 0$.
		\For{$h = 1, \cdots, H$}
		\For{$n = 1, \cdots, N$}
		\State $\theta_1 \; = \; \theta{(h - 1)}.$
		
		\State $\displaystyle\theta_{n + 1} 
		\; = \; 
		\theta_n 
		\, - \,
		\eta_p \, \nabla_\theta \left( \hat{\mathbb{E}}_{x_0 \,\sim\, q, \,t, \,x_t}
		\left[ \,
		\norm{\hat{\epsilon}_{\theta_n}(x_t, t) - \epsilon_0}^2
		\,\right]  
		\, + \,
		\sum_{i \,=\, 1}^m \lambda_i \, \hat{\mathbb{E}}_{x_0 \,\sim\, q^i,\, t,\, x_t}
		\left[ \,
		\norm{\hat{\epsilon}_{\theta_n}(x_t, t) - \epsilon_0}^2
		\,\right] \right).$
		
		\State $\theta(h) \; = \; \theta_{N + 1}$.
		\EndFor
		\State Update the dual variable
		\[
		\lambda_i (h + 1) 
		\; = \;
		\left[\, 
		\lambda_i(h) 
		\, + \,
		\eta_d \left( \hat{\mathbb{E}}_{x_0 \,\sim\, q^i, \,t, \,x_t}
		\left[\, 
		\norm{\hat{\epsilon}_{\theta(h)}(x_t, t) - \epsilon_0}^2
		\,\right]  
		\,-\, 
		\Tilde{b}^i \right) 
		\,\right]_+ \text{ for all } i. 
		\]
		\EndFor
	\end{algorithmic}
\end{algorithm}

In Algorithm~\ref{alg:primal-dual-practical}, the unbiased estimate of the noise prediction loss is evaluated via
\[
\hat{\mathbb{E}}_{x_0 \,\sim\, q, \,t, \,x_t}
\left[ \, 
\norm{\hat{\epsilon}_{\theta}(x_t, t) - \epsilon_0}^2
\,\right] 
\; = \;
\sum_{i \;=\; 1}^B \norm{\hat{\epsilon}_{\theta}(x_{t^{(i)}}, t^{(i)}) - \epsilon_0^{(i)}}^2
\]
where $\{x_0^{(i)}\}_{i\,=\,1}^B$ is a randomly chosen batch of samples from $q$, $\{t^{(i)}\}_{i\,=\,1}^B$ are time steps randomly sampled from the interval $\left[2, T\right]$, and $x_{t^{(i)}}$ is the noisy version of $x_0^{(i)}$ at time step $t^{(i)}$. Then, the noise $\epsilon_0$ is sampled from the standard Gaussian, and $x_t$  is derived from $\displaystyle x_t = \sqrt{\Bar{\alpha}_t} x_t + \sqrt{1 - \Bar{\alpha}_t}\epsilon_0$.

We remark an important implementation detail in the fine-tuning experiment. In the fine-tuning constraints, we have to evaluate the KL divergence
\begin{equation}\label{eq:1}
	D_{\text{KL}}(p_{\theta_{\text{pre}}}(x_{0:T})\,\Vert\, p_\theta(x_{0:T}))\; \leq \; b_i    
\end{equation}
where $p_{\theta_{\text{pre}}}(x_{0:T})$ is the joint distribution of the samples and latents generated by the backward process of the pre-trained model. The KL divergence constraint in~\eqref{eq:1} further reduces to
\begin{equation}\label{eq:2}
	D_{\text{KL}}(p_{\theta_{\text{pre}}}(x_{0:T})\,\Vert\, p_\theta(x_{0:T})) 
	\;=\; 
	{\mathbb{E}}_{x_t \,\sim\, p_{\theta_{\text{pre}}},\, t}
	\left[ \,
	\norm{\hat{\epsilon}_{\theta_{\text{pre}}}(x_t, t) - \hat{\epsilon}_{\theta}(x_t, t)}^2
	\,\right] 
	\,+\, \text{constant}.
\end{equation}
To estimate the expectation in~\eqref{eq:2}, we need to sample latents $x_t$ from the backward distribution $p_{\theta_{\text{pre}}}(x_{0:T})$. In practice, this is computationally inefficient, since it requires running inference each time one wants to sample a latent. This is why we implement this with sampling $x_t$ as random Gaussian noise instead. This still ensures that the predictions of the new model $p_\theta$ don't differ too much from the pre-trained distribution $p_{\theta_{\text{pre}}}$ while making sampling batches much faster.

\noindent\textbf{Resilient constrained learning.}
The choice of the constraint thresholds $\{b_i\}_{i\,=\,1}^m$ has noticeable effect on the training of constrained diffusion models. To avoid an exhaustive hyperparameter tuning process, in the minority class experiment, we use the resilient constrained learning technique~\cite{hounie2024resilient} to adjust the thresholds $\{b_i\}_{i\,=\,1}^m$ during training. In essence, the resilient constrained learning adds a constraint relaxation cost to the loss and relaxes the thresholds by updating them through gradient descent each time we update the dual variable.  It can further be shown theoretically that an equivalent formulation of resilience is achieved by adding a quadratic regularizer of the dual variable into the loss objective and setting the constraint thresholds to be zero, i.e., $\Tilde{b}_i = 0$. This is the approach we used in our experiments since it has fewer hyperparameters. We note that the only difference between Algorithm~\ref{alg:primal-dual-practical} and Algorithm~\ref{alg:resilient} is the additional term in the dual variable update step (line 9 of Algorithm~\ref{alg:resilient}).

\begin{algorithm}
	\caption{Resilient Constrained Diffusion Models via Dual Training}\label{alg:resilient}
	\begin{algorithmic}[1]
		\State \textbf{Input}: total diffusion steps $T$, diffusion parameter $\alpha_t$, total dual iterations $H$, number of primal descent steps per dual update $N$, dual step size $\eta_d$, primal step size $\eta_p$, constraint step size $\eta_c$, initial model parameters $\theta(0)$, constraint relaxation cost $\gamma$.
		\State \textbf{Initialize}: $\lambda(1) = 0$.
		\For{$h = 1, \cdots, H$}
		\For{$n = 1, \cdots, N$}
		\State $\theta_1 \; = \; \theta{(h - 1)}$.
		
		\State $\displaystyle\theta_{n + 1} 
		\; = \; 
		\theta_n 
		\,-\,
		\eta_p \, \nabla_\theta \left( \hat{\mathbb{E}}_{x_0 \,\sim\, q,\, t,\, x_t}
		\left[\, 
		\norm{\hat{\epsilon}_{\theta_n}(x_t, t) - \epsilon_0}^2
		\,\right]  
		\,+\,
		\sum_{i = 1}^m \lambda_i \hat{\mathbb{E}}_{x_0 \,\sim\, q^i, \,t, \,x_t}
		\left[\,
		\norm{\hat{\epsilon}_{\theta_n}(x_t, t) - \epsilon_0}^2
		\,\right] \right)$.
		
		\State $\theta(h) 
		\;=\; 
		\theta_{N + 1}$.
		\EndFor
		\State Update the dual variable
		\[
		\lambda_i (h + 1) 
		\; = \;
		\left[\, \lambda_i(h) 
		\, + \,
		\eta_d\, \left( \hat{\mathbb{E}}_{x_0 \,\sim\, q^i, \,t, \,x_t}
		\left[ \, 
		\norm{\hat{\epsilon}_{\theta(h)}(x_t, t) - \epsilon_0}^2
		\,\right]  
		\,-\, 
		\Tilde{b}^i(h) 
		\,-\,
		2 \gamma \lambda_i(h) \right) \,\right]_+ \text{ for all } i.
		\]
		\EndFor
	\end{algorithmic}
\end{algorithm}

\noindent\textbf{Model architecture.} We use a time-conditioned U-net model as is common in image diffusion tasks for all three datasets. The time conditioning is done by adding a positional embedding of the time to the input image. The parameters of the model are summarized in Table~\ref{tab:modelparams}.
\renewcommand{\arraystretch}{1.5}
\begin{table}
	\centering
	\caption{Parameters of U-net Model used as noise predictor}
	\begin{tabular}{ c c c c }
		\hline\hline
		\# Res-Net layers per U-Net block & 2  \\ \hline
		\# Res-Net down/upsampling blocks & 6  \\ \hline
		\# Output channels for U-Net blocks & (128, 128, 256, 256, 512, 512)  \\ \hline\hline
	\end{tabular}
	\label{tab:modelparams}
\end{table}
The fifth downsampling block and the corresponding upsampling block are Res-Net blocks with spatial self-attention.

\noindent\textbf{Hyperparameters.} We summarize the important hyperparameters in our experiments in Table~\ref{tab:hyperparams}. In the unconstrained models that we train for comparison, we use the same hyperparameters as the constrained version, disregarding the parameters related to the dual and relaxation updates. For models trained on Image-Net, when training the constrained models, we initialized to the parameters of the unconstrained model to make training times shorter.

\noindent \textbf{Hyperparameter sensitivity.} We remark the  sensitivity of the dual training algorithm to the number of dual iterations, primal/dual batch sizes, and primal/dual learning rates.
\begin{itemize}
	\item \textbf{Number of dual iterations:} In our implementation this shows up as the number of primal GD steps per dual update, $N$. Experimentally, we have observed that as long as $N$ is greater than 1, the results are not sensitive to this value. Additionally, the dual updates add a negligible computational overhead. Hence, updating the dual nearly as many times as we update model parameters doesn't reduce training efficiency.
	
	\item \textbf{Primal/dual batch sizes:} We have included results of training a constrained model on an unbalanced subset of MNIST, using different primal/dual batch sizes (See Table~\ref{tab:batch}.) The results suggest that for the minority class experiments, when the ratio between Primal and Dual batch sizes is larger, the model performs better (lower FID and more evenly distributed samples). This is in line with the heuristic we used in the included experiments in the paper where we chose the batch sizes such that the ratio of primal to dual batch size is close to size ratio of entire dataset to constraint datasets (which are much smaller). Howver for the fine-tuning task, the batch sizes did not seem to affect the final result as much.
	
	\item \textbf{Primal/dual learning rate:} For the primal learning rate, we followed the best practice used to train standard diffusion models. For the dual learning rate $\eta$
	, we refer to Theorem 8 in the paper, showing a smaller error bound for smaller 
	$\eta$ while slowing convergence. In practice, as long as 
	$\eta \leq 1$, we observed that the model converges to similar results reliably.
\end{itemize}

\noindent \textbf{Efficiency of constrained diffusion:} We note that the complexity of sampling from our constrained diffusion model does not increase with the number of constraints, as our trained diffusion model functions like a standard diffusion model to generate samples. Importantly, we remark that training our constrained diffusion model has comparable efficiency to training standard diffusion models detailed next.

The additional computational cost of our dual-based training (Algorithm~\ref{alg:primal-dual}) arises from: (i) updating the dual variables; (ii) updating the diffusion model in the primal update.
\begin{itemize}
	\item \textbf{Cost of updating the dual variables: } We note that our dual-based training has the same number of dual variables as the number of constraints. Thus, the cost for the dual update is linear in the number of constraints. To update each dual variable, we can directly use the ELBO loss over the batches sampled from each constrained dataset (already computed for the Lagrangian). Therefore, the cost of updating dual variables is negligible.
	
	\item \textbf{Cost of updating the diffusion model in the primal update: } We note that the primal update trains a standard diffusion model based on the Lagrangian with updated dual variables. In our experiments, this primal training often requires as few as 2-3 updates per dual update. Thus, when training our constrained model, we can train for the same number of epochs as an unconstrained model but update the dual variables after every few primal steps. As a result, training our constrained diffusion model is almost as efficient as training standard unconstrained models.
\end{itemize}

The only concern we encountered regarding efficiency is that batches need to be sampled from every constrained dataset at each step to estimate the Lagrangian. This introduces a small GPU memory overhead that increases with additional constraints. However, this is somewhat mitigated by the fact that constrained datasets are often much smaller than the original dataset, allowing us to choose a smaller batch size for the constrained datasets without degrading performance (see discussion on batch sizes in hyperparameter sensitivity section).

\renewcommand{\arraystretch}{1.5}
\begin{table}
	\centering
	\caption{Hyperparameter values used in the main experiments. MC denotes Minority Class experiments and FT denotes Fine-Tuning experiments. - denotes that resilience was not used for the experiment.}
	\begin{tabular}{ c c c c c}
		\hline\hline
		& MNIST MC & Celeb-A MC & Image-Net MC & MNIST FT \\ \hline 
		\#training epochs ($N\times H$) & 250 & 1000 & 2000 &500 \\ \hline
		\# primal steps per dual step ($N$) & 2 & 2 &2& 2\\ \hline
		Primal batch size & 128 & 256 &128& 256 \\ \hline
		Dual batch size & 64 & 128 & 64&256 \\ \hline
		Primal learning rate & 0.0001 & 0.0001 &5e-5 &5e-6 \\ \hline
		Dual learning rate & 0.1 & 0.1 & 0.05&1e5 \\ \hline
		Resilience Relaxation cost & 0.09 & 0.005 &0.025& - \\ \hline
		main dataset size & 31000 & 12500 &2000& 200 \\ \hline
		constraint dataset(s) size & 5000 & 500 &64& - \\ \hline\hline
	\end{tabular}
	\label{tab:hyperparams}
\end{table}


\renewcommand{\arraystretch}{1.5}
\begin{table}
	\centering
	\caption{Constrained model trained on MNIST with different Primal/Dual Batch sizes}
	\begin{tabular}{ c c c c c }
		\\ \hline\hline 
		(Primal batch size, Dual batch size) & (64, 16) & (64, 64) & (128, 16)& (128, 64) \\ \hline 
		FID score & \textbf{16.6} & 20.6 & 16.7& 20.24\\ \hline\hline  \\
	\end{tabular}
	\label{tab:batch}
\end{table}

\noindent\textbf{FID scores.} As a quantitative means of evaluating our constrained diffusion models, we use the FID (Frechet Inception Distance) as a metric to gauge the quality of the samples generated by diffusion models. The FID score was first introduced in~\cite{heusel2017gans} in form of
\begin{equation}
	d_{\text{FID}}^2((m, C), (m_w, C_w)) 
	\; = \;
	\norm{m - m_w}_2^2 
	\, + \,
	\text{Tr}(C + C_w - 2(CC_w)^{1/2})
\end{equation}
where $m$ and $C$ represent mean and variance, respectively, of the distribution of the features of the data samples which have been extracted by an inception model~\cite{szegedy2015going}. Similarly, $m_w$ and $C_w$ represent the mean and variance of some reference distribution that we are computing the distance to. In our experiment, we compute the FID scores by generating $15000$ samples from the diffusion model we are evaluating and comparing them to a balanced version of the original dataset. In the experiment with MNIST, this is the actual dataset. In the experiments with Celeb-A and Image-Net, since there is an imbalance in the original dataset, we consider a balanced subset of each with an equal number of samples from each class as reference. We use the clean-FID library~\cite{parmar2022aliased} for standard computation of the FID scores.

We note that our FID scores are somewhat larger compared to typical baselines in the literature. This is expected as a consequence of our experimental setup. We train both the unconstrained and constrained models, on a biased subset of the dataset wherein some of the classes have significantly fewer samples than the rest. We then compute the FID scores for these models compared to the actual dataset itself which is unbiased (i.e., every class has the same number of samples). These FID scores approximate how close the learned distribution of the model trained on biased data, is to the underlying unbiased distribution.

This setup contrasts with existing results in the literature, where the FID is computed with respect to unbiased data, and the models are also trained on unbiased data. Therefore, it is expected that such models will achieve better FID scores than constrained or unconstrained models trained with biased data. Our purpose in reporting the FIDs was not to compare them to existing results (as such a comparison would be uninformative) but to demonstrate that, when trained on biased data, the constrained model achieves better FID scores than the unconstrained model.

\noindent\textbf{Compute resources.} We run all experiments on two NVIDIA RTX 3090-Ti GPUs in parallel.  The amount of GPU memory used was 16 Gigabytes per GPU. For experiments with MNIST and Celeb-A datasets, training each model took between 2-3 hours. This increased to 7-8 hours for latent diffusion models trained for the Image-Net experiments.

\noindent\textbf{Assets and libraries.} We use the PyTorch~\cite{paszke2019pytorch} and Diffusers~\cite{von-platen-etal-2022-diffusers} Python libraries for training our constrained diffusion models, and Adam with decoupled weight decay~\cite{loshchilov2019decoupled} as an optimizer. The accelerate library~\cite{accelerate} is used for the parallelization of the training processes across multiple GPUs. For classifiers used in evaluating the generated samples of the models, we use the following pretrained models accessible on the Huggingface model database: A Vision transformer-based classifier for MNIST digits with $\%99.5$ validation accuracy (\url{https://huggingface.co/farleyknight-org-username/vit-base-mnist}). A classifier for images of male/female faces with $\%98.6$ validation accuracy (\url{https://huggingface.co/cledoux42/GenderNew\_v002}). For classifying the image-net data, a zero-shot classifier based on a CLIP model (\url{https://huggingface.co/openai/clip-vit-base-patch32}) was used. The Autoencoder for the latent diffusion model was the stable diffusion VAE with KL regularization found on (\url{https://huggingface.co/stabilityai/sd-vae-ft-mse}).

\newpage
\section*{NeurIPS Paper Checklist}

\begin{enumerate}
	
	\item {\bf Claims}
	\item[] Question: Do the main claims made in the abstract and introduction accurately reflect the paper's contributions and scope?
	\item[] Answer: \answerYes{} %
	\item[] Justification: We summarize our constrained diffusion models in Section~\ref{sec:constrained diffusion models}, optimality guarantees of unparametrized constrained diffusion models in Section~\ref{subsec:unparametrized case}, optimality guarantees of parametrized constrained diffusion models and training algorithms in Section~\ref{sec:dual training algorithm}, and experimental results in Section~\ref{sec:experiments}.
	
	\item {\bf Limitations}
	\item[] Question: Does the paper discuss the limitations of the work performed by the authors?
	\item[] Answer: \answerYes{} 
	\item[] Justification: We mention two potential limitations of this work as several future directions in  Section~\ref{sec:conclusion}.

	\item {\bf Theory Assumptions and Proofs}
	\item[] Question: For each theoretical result, does the paper provide the full set of assumptions and a complete (and correct) proof?
	\item[] Answer: \answerYes{} 
	\item[] Justification: We provide all assumptions, lemmas, and theorems in Sections~\ref{sec:constrained diffusion models} and~\ref{sec:dual training algorithm}, and provide proof details in Appendix~\ref{app:proofs}. 
	
	\item {\bf Experimental Result Reproducibility}
	\item[] Question: Does the paper fully disclose all the information needed to reproduce the main experimental results of the paper to the extent that it affects the main claims and/or conclusions of the paper (regardless of whether the code and data are provided or not)?
	\item[] Answer: \answerYes{} 
	\item[] Justification: We summarize our experimental results in Section~\ref{sec:experiments}, and provide implementation details of experiments in Appendix~\ref{app:implementation}, together with additional experimental results.  

	\item {\bf Open access to data and code}
	\item[] Question: Does the paper provide open access to the data and code, with sufficient instructions to faithfully reproduce the main experimental results, as described in supplemental material?
	\item[] Answer: \answerYes{} 
	\item[] Justification: A link to the code for replicating our main experiments has been provided in Appendix~\ref{app:implementation}. The datasets are open source machine learning datasets that are accessible online.

	\item {\bf Experimental Setting/Details}
	\item[] Question: Does the paper specify all the training and test details (e.g., data splits, hyperparameters, how they were chosen, type of optimizer, etc.) necessary to understand the results?
	\item[] Answer: \answerYes{} 
	\item[] Justification: We provide experimental details in Appendix~\ref{app:implementation} including the hyperparameters used for each experiment. Our training details can be found in the code in supplemental material. 
	
	\item {\bf Experiment Statistical Significance}
	\item[] Question: Does the paper report error bars suitably and correctly defined or other appropriate information about the statistical significance of the experiments?
	\item[] Answer: \answerNA{} 
	\item[] Justification: While the histogram plots that showcase our main results do not include error bars, we do provide Frechet Distance metrics (that are a measure of statistical distance between sample distributions) which emphasize our results.
	
	\item {\bf Experiments Compute Resources}
	\item[] Question: For each experiment, does the paper provide sufficient information on the computer resources (type of compute workers, memory, time of execution) needed to reproduce the experiments?
	\item[] Answer: \answerYes{} 
	\item[] Justification: We include compute details in Appendix~\ref{app:implementation}.
	
	\item {\bf Code Of Ethics}
	\item[] Question: Does the research conducted in the paper conform, in every respect, with the NeurIPS Code of Ethics \url{https://neurips.cc/public/EthicsGuidelines}?
	\item[] Answer: \answerYes{} 
	\item[] Justification: We have reviewed the NeurIPS Code of Ethics and fully comply with it during the preparation of this paper. 

	\item {\bf Broader Impacts}
	\item[] Question: Does the paper discuss both potential positive societal impacts and negative societal impacts of the work performed?
	\item[] Answer: \answerYes{} 
	\item[] Justification: The potential impacts of the work are discussed in Section~\ref{sec: introduction}.
	
	\item {\bf Safeguards}
	\item[] Question: Does the paper describe safeguards that have been put in place for responsible release of data or models that have a high risk for misuse (e.g., pretrained language models, image generators, or scraped datasets)?
	\item[] Answer: \answerNA{} 
	\item[] Justification: We release no models and the supplemental code for training image generation model, trains the model on MNIST and Celeb-A datasets neither of which have potential for misuse.
	
	\item {\bf Licenses for existing assets}
	\item[] Question: Are the creators or original owners of assets (e.g., code, data, models), used in the paper, properly credited and are the license and terms of use explicitly mentioned and properly respected?
	\item[] Answer: \answerYes{}{} 
	\item[] Justification: The libraries and assets used have been noted and credited in Appendix~\ref{app:implementation}.
	
	\item {\bf New Assets}
	\item[] Question: Are new assets introduced in the paper well documented and is the documentation provided alongside the assets?
	\item[] Answer: \answerNA{} 
	\item[] Justification: \answerNA{}
	
	\item {\bf Crowdsourcing and Research with Human Subjects}
	\item[] Question: For crowdsourcing experiments and research with human subjects, does the paper include the full text of instructions given to participants and screenshots, if applicable, as well as details about compensation (if any)? 
	\item[] Answer: \answerNA{} 
	\item[] Justification: \answerNA{}
	
	\item {\bf Institutional Review Board (IRB) Approvals or Equivalent for Research with Human Subjects}
	\item[] Question: Does the paper describe potential risks incurred by study participants, whether such risks were disclosed to the subjects, and whether Institutional Review Board (IRB) approvals (or an equivalent approval/review based on the requirements of your country or institution) were obtained?
	\item[] Answer: \answerNA{} 
	\item[] Justification: \answerNA{}
	
\end{enumerate}


\end{document}